\RequirePackage{fix-cm}
\documentclass{article}

\usepackage[T1]{fontenc}
\usepackage{lmodern}
\usepackage{microtype}
\usepackage{stringenc}

\usepackage{setspace}

\usepackage{graphicx}
\usepackage{subcaption}
\usepackage{booktabs}
\usepackage{tcolorbox}

\usepackage{arxiv}

\usepackage[numbers,sort&compress]{natbib}
\setcitestyle{numbers,square,sort&compress}
\usepackage{etoolbox}

\makeatletter
\patchcmd{\NAT@test}{\else \NAT@nm}{\else \NAT@hyper@{\NAT@nm}}{}{}
\makeatother

\usepackage{tikz}
\usetikzlibrary{arrows.meta}

\usepackage{graphicx}
\usepackage{subcaption}

\usepackage{tikz-3dplot}
\usetikzlibrary{calc}

\tdplotsetmaincoords{60}{135}
\newcommand{\tikzscale}{1}

\usepackage{amsmath}
\usepackage{amssymb}
\usepackage{mathtools}
\usepackage{amsthm}

\definecolor{linkblue}{HTML}{0072bc}
\usepackage{hyperref}
\hypersetup{
    colorlinks=true,
    linkcolor=linkblue,
    citecolor=linkblue,
    urlcolor=linkblue,
    allcolors=linkblue,
    pdfencoding=auto,
    unicode=true
}

\usepackage[capitalize,noabbrev, nameinlink]{cleveref}
\usepackage[textsize=tiny]{todonotes}
\usepackage{xcolor}
\usepackage{enumitem}

\theoremstyle{plain}
\newtheorem{theorem}{Theorem}
\newtheorem{lemma}[theorem]{Lemma}

\theoremstyle{definition}
\newtheorem{definition}[theorem]{Definition}

\theoremstyle{remark}

\DeclareMathOperator*{\argmax}{arg\,max}

\DeclareMathOperator{\supp}{supp}

\title{Robust AI Evaluation through Maximal Lotteries}

\author{
Hadi Khalaf$^\ast$ \\ Harvard University \\ \And
Serena L. Wang$^\dagger$ \\ Harvard University \\ \And
Daniel Halpern$^\dagger$ \\ Harvard University \\ \AND
Itai Shapira$^\dagger$ \\ Harvard University \\ \And
Flavio du Pin Calmon$^\ddagger$ \\ Harvard University \\ \And
Ariel D. Procaccia$^\ddagger$ \\ Harvard University
}
\date{}

\begin{document}

\let\origcite\cite
\renewcommand{\cite}{\citep}
\maketitle

\begin{abstract}
The standard way to evaluate language models on subjective tasks is through pairwise comparisons: an annotator chooses the ``better" of two responses to a prompt. Leaderboards aggregate these comparisons into a single Bradley-Terry (BT) ranking, forcing heterogeneous preferences into a total order and violating basic social-choice desiderata. In contrast, social choice theory provides an alternative approach called \textit{maximal lotteries}, which aggregates pairwise preferences without imposing any assumptions on their structure. However, we show that maximal lotteries are highly sensitive to preference heterogeneity and can favor models that severely underperform on specific tasks or user subpopulations. We introduce robust lotteries that optimize worst-case performance under plausible shifts in the preference data. On large-scale preference datasets, robust lotteries provide more reliable win rate guarantees across the annotator distribution and recover a stable set of top-performing models. By moving from rankings to pluralistic sets of winners, robust lotteries offer a principled step toward an ecosystem of complementary AI systems that serve the full spectrum of human preferences.
\end{abstract}

{
  \renewcommand{\thefootnote}{\fnsymbol{footnote}}
  \footnotetext[1]{\: Correspondence may be sent to Hadi Khalaf \href{mailto:hadikhalaf@g.harvard.edu}{(hadikhalaf@g.harvard.edu)}}
  \footnotetext[2]{\: Listed in randomized order}
  \footnotetext[3]{\: Co-senior authors}
}
\setcounter{footnote}{0}
\def\thefootnote{\arabic{footnote}}

\section{Introduction} \label{sec:intro}

Evaluating large language models (LLMs) on open-ended, subjective tasks is intrinsically challenging. Whereas tasks such as mathematics and coding often have objective success criteria, a significant portion of LLM use cases, including seeking information, practical guidance, and writing~\cite{NBERw34255}, are difficult to assess because they often lack a single ground truth and reliable metrics~\cite{zheng2023judging, zhou2024webarenarealisticwebenvironment, liu2023agentbench}. In such settings, a common approach to evaluation is to rely on pairwise comparison data~\cite{stiennon2022learningsummarizehumanfeedback, chiang_chatbot_2024}, where a judge (either human or AI) selects the preferred of two model outputs for a given prompt. Aggregating these judgments across prompts and annotators yields a single global summary, often a score or ranking.

AI leaderboards such as LMArena~\cite{chiang_chatbot_2024} and MedArena~\cite{MedArena2024} construct rankings from pairwise comparisons by fitting a one-dimensional random utility model, most commonly the Bradley–Terry (BT) model~\cite{bradley_rank_1952}. Such models assign each system a single latent ``strength'' parameter, which is mathematically convenient but cannot capture the multi-faceted, prompt-dependent nature of LLM performance. Instead, they force diverse, context-dependent judgments into a single ordering -- despite significant evidence that preferences are often non-transitive in LLM settings~\cite{zhang2025diverging, swamy2024minimaximalistapproachreinforcementlearning, zhang2025beyond}. This misspecification can make BT yield  counterintuitive results: even when a model is preferred to every other model in pairwise comparisons, the fitted BT ranking need not place it first~\cite{lee2025pairwisecomparisonsstochastictransitivity}. The BT framework also violates clone invariance, where duplicating a model can change the rankings in arbitrary ways~\cite{tideman1987independence, lanctot2025evaluatingagentsusingsocial}.

These issues motivate using aggregation rules designed for general pairwise preference data, without assuming transitivity or a one-dimensional score. Social choice theory offers principled alternatives, including rules that output either a ranking or a set of winners~\cite{brandt2016handbook,ge_axioms_2024, conitzer_social_2024, dai_mapping_2024,halpern_pairwise_2025}.

We focus on \textit{maximal lotteries}~\cite{fishburn1984, brandl2016consistentprobabilisticsocialchoice}. Consider a majority margin matrix whose entries record the extent to which one model is preferred to another; see Definition~\ref{def:margin_matrix}. Maximal lotteries treat the empirical majority margins as a symmetric zero-sum game and output a maximin mixed strategy. The result is a distribution $p$ over models such that a model sampled from $p$ does not lose in expectation against any fixed model $j$. This guarantee remains meaningful under non-transitive preferences because it does not require a globally consistent ranking.

Maximal lotteries address many limitations of BT by making no assumptions about the underlying structure of preferences and by being invariant under the addition of clones. For this reason, \citet{lanctot2025evaluatingagentsusingsocial} identified maximal lotteries as a promising alternative for comparing generative models. Unlike traditional leaderboards, maximal lotteries do not output a ranking. Rather, they identify a competitive set of models, called the \textit{bipartisan set} ~\cite{laffond1993}. This set is a natural notion of a ``top tier'' under the observed preference data. The lottery formulation is also operationally flexible as we discuss in \cref{sec:computing_lotteries}. It allows incorporating practical constraints on the fly, such as cost budget or benchmark performance, to identify the set of most performant models.

\begin{figure*}[t]
    \centering
    \includegraphics[width=\textwidth]{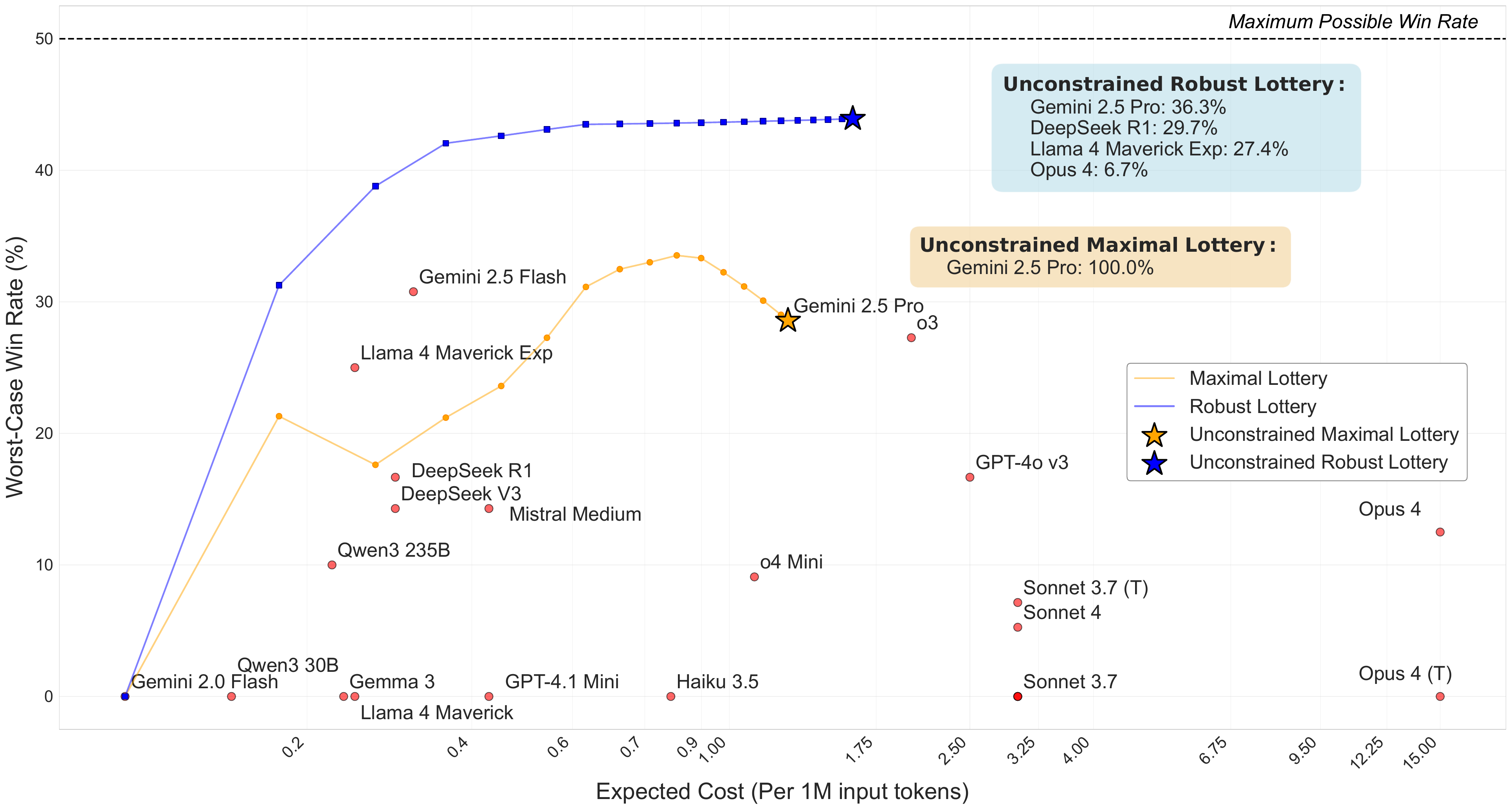} 
    \caption{\textbf{Robust lotteries improve worst-case win rate across user subpopulations.} We consider a fixed set of models (red points) and partition the LMArena votes according to the language of the prompt. We then evaluate any lottery (a probability distribution over models) by its \emph{worst-case performance} on LMArena ($y$-axis): the minimum win rate the lottery achieves across the four user subpopulations. Because the guarantee is $\min_q \Pr(p \succ q)$ in a symmetric zero-sum game, it is at most 50\%. The $x$-axis reports the lottery’s expected inference cost per 1M input tokens. The \textbf{orange} curve traces the cost--performance frontier obtained by solving for the \emph{maximal lottery} under an expected-cost budget. The \textbf{blue} curve traces the corresponding frontier for \emph{robust lotteries}, which instead optimize the worst-case guarantee using a robust linear program. Across budgets, robust lotteries achieve substantially higher worst-case performance than the standard maximal lottery.}
    \label{fig:robustfrontier}
\end{figure*}

Despite fixing key misspecification issues in BT,  maximal lotteries introduce a different failure mode: they can be brittle to shifts in the prompt mix or annotator population. As a result, maximal lotteries can yield bipartisan sets that are not \emph{pluralistic}: a set of models can seem optimal on preference data aggregated across an entire population, yet have large worst-case performance gaps when evaluated on preferences from specific subpopulations. 

Consider, for example, \cref{fig:robustfrontier}. To analyze how diverse preferences impact model evaluation, we partition data from LMArena based on the language of the prompts provided by annotators. The standard maximal lottery (orange) optimizes for the aggregate preference distribution and results in a bipartisan set with a single model, Gemini 2.5 Pro. While this model is the strongest \emph{on average}, it leaves a significant performance gap with respect to some languages. In contrast, the \textit{robust lottery} (blue) identifies a distribution across four distinct models -- Gemini 2.5 Pro, DeepSeek R1, Llama 4 Maverick, and Opus 4 -- that balance the inherent tradeoffs across subpopulations. By spreading probability across complementary models, robust lotteries raise the worst-group win rate guarantee across subpopulations.
\\~\\
\paragraph{Contributions.} Our main contributions include:
\begin{enumerate}
\item \noindent\textbf{Robust lotteries.} We introduce robust lotteries that maximize worst-case win probability over an ambiguity set of majority-margin matrices, without assuming transitivity or latent utilities.
\item \noindent\textbf{Axiomatic and structural properties.} We show that robust lotteries satisfy desirable social choice guarantees, including weak-clone invariance and a robust Condorcet-style consistency property.
\item \noindent\textbf{Tractable computation.} For uncertainty over subpopulation weights, we derive an efficient linear program and provide finite-sample regret bounds.
\item \noindent\textbf{Empirical robustness.} On three language model benchmarks, robust lotteries improve worst-case win rate guarantees and recover a set of frontier models.
\end{enumerate} 
\section{Related Work} \label{sec:relatedwork}
\paragraph{Aggregating pairwise preferences.} Pairwise preference modeling has a long history in psychology, statistics, and social choice, with Bradley-Terry and Thurstone models as canonical examples \cite{bradley_rank_1952, thurstone2017law, luce_individual_1959-1, plackett_analysis_1975}. The rise of LLMs has made pairwise preferences central to AI--human alignment \cite{stiennon2022learningsummarizehumanfeedback, christiano_deep_2023, bai_training_2022, ouyang_training_2022} and to large-scale evaluation \cite{chiang_chatbot_2024}, motivating methods that improve their efficiency and reliability~\cite{ameli2025a, verma2025active, liu2025elo, liu_aligning_2025,xu_investigating_2025, dubois_length-controlled_2025}. At the same time, user- and task-dependent preferences often violate a single total order, which calls for more general aggregation objectives \cite{swamy2024minimaximalistapproachreinforcementlearning, zhang2025beyond}. In particular, AI leaderboards collapse model evaluation into a score that implicitly fixes how users and tasks are weighted, so rank orderings can shift arbitrarily under reasonable changes to that weighting or to the evaluation protocol itself \cite{dehghani2021,boubdir2024elo, siska2024examining, zhang2024inherent}. We therefore propose an evaluation method that remains reliable under a set of plausible margin matrices, without making any assumptions on the collected preferences themselves. \vspace{-3mm}

\paragraph{Robust game theory and robust optimization.}
Our formulation of robust lotteries includes a minimax objective which is closely related to the \textit{robust-optimization equilibrium} introduced by~\citet{aghassi2006robust} in which both players maximize their payoff over an uncertainty set of payoff matrices. However, in our setup, only one ``player'' operates under uncertainty, making our exact formulation closer to a robust Stackelberg game \cite{kroer2018robust}. Similarly, \citet{jeyakumar2011robust} study the existence of minimax equilibria under norm-bounded payoff perturbations, whereas our uncertainty is structured through reweighting subpopulation-specific margin matrices. To our knowledge, such formulations have not previously been applied in the context of maximal lotteries. Moreover, the setup of our ambiguity set draws from the literature on distributionally robust optimization~\cite{duchi2021learning}, in which the goal is to train machine learning models that perform well under covariate shift. Thus, we combine a game theoretic foundation with ideas from distributionally robust optimization to build maximal lotteries that are robust to shifts in annotator and prompt distributions.  \vspace{-3mm}

\paragraph{Maximal lotteries.} Maximal lotteries were first proposed by \citet{kreweras1965aggregation} and then studied by \citet{fishburn1984}. Unlike any deterministic social choice rule, maximal lotteries can simultaneously satisfy demanding axioms, such as consistency across varying electorates and invariance to clones~\cite{brandl2016consistentprobabilisticsocialchoice}. As such, these methods have been repeatedly advocated as decision procedures over the decades~\cite{brandl2022analytical, zeckhauser1969, stone2011luck}. Beyond social choice, similar constructions have been rediscovered many times under different names, e.g., the ``game theory procedure” in voting~\cite{rivest2010optimal} and the von Neumann winner~\cite{dudik2015contextual, swamy2024minimaximalistapproachreinforcementlearning}.
\section{Background on Maximal Lotteries} \label{sec:ml}
In this section, we define maximal lotteries and state some of their key properties. We refer the reader to \citet{fishburn1984} and \citet{brandl2016consistentprobabilisticsocialchoice} for further details.

We encode pairwise comparison data as a majority-margin matrix and treat it as the payoff matrix of a symmetric zero-sum game. A \emph{lottery} over models is then a mixed strategy for this game, and a \emph{maximal lottery} is the strategy that maximizes its worst-case margin against any opponent mixture.

\noindent\textbf{Setup.}
Let $A = \{1, \dots, m\}$ be a set of $m$ alternatives (e.g., models). Let $\Delta(A)=\{p\in\mathbb{R}^m_{\ge 0} : \textstyle\sum_{i=1}^m p_i=1\}$
denote the probability simplex over $A$. For $p\in\Delta(A)$, define the support
$\supp(p)=\{i\in A : p_i>0\}$. Let
\[
\mathbb{M}_m=\{M\in[-1,1]^{m\times m} : M=-M^\top\}
\]
be the space of $m \times m$ skew-symmetric matrices with entries bounded in $[-1, 1]$. We observe pairwise comparisons. For each ordered pair $(i,j)$, let $w_{ij}$ be the number of comparisons in which model $i$ is preferred to model $j$. Let $n_{ij}=w_{ij}+w_{ji}$ denote the total number of comparisons between $i$ and $j$.

\begin{definition}[Majority Margin Matrix] \label{def:margin_matrix}
The majority margin matrix $M\in\mathbb{M}_m$ is defined entrywise by
\begin{equation*}
M_{ij}:=
\begin{cases}
\frac{w_{ij}-w_{ji}}{n_{ij}} & \text{if } n_{ij}>0,\\
0 & \text{if } n_{ij}=0.
\end{cases}
\end{equation*}
\end{definition}
The entry $M_{ij}$ is the empirical win margin of alternative $i$ against $j$.

\begin{definition}[Maximal Lotteries \cite{fishburn1984}] \label{def:ml}
A \emph{maximal lottery} is a distribution $p^\star \in \Delta(A)$ such that:
\begin{equation*}\label{eq:ml}
p^\star \in \argmax_{p \in \Delta(A)} \min_{q \in \Delta(A)} p^\top M q. 
\end{equation*}
Let $\mathrm{ML}(M)$ denote the set of maximal lotteries for $M$.  Because $M$ is skew-symmetric, the maximin value is $0$. In particular, any maximal lottery satisfies 
\begin{equation}\label{eq:ml-guarantee}
p^{\star\top} M q \ge 0 \quad \text{for all } q\in\Delta(A),  
\end{equation}
and there exists at least one best response $q^\star$ such that $p^{\star\top} M q^\star = 0$.
\end{definition}

\begin{definition}[Bipartisan Set \cite{laffond1993}] \label{def:bp}
The \emph{bipartisan set} is the set of alternatives that receive positive probability in at least one maximal lottery:
\begin{equation*}
    \mathrm{BP}(M) := \bigcup_{p \in \mathrm{ML}(M)} \supp(p).
\end{equation*} 
\end{definition}
Maximal lotteries admit a direct interpretation in terms of win rates. Define the induced pairwise win rate by
\begin{equation*}
    \Pr(i \succ j) := \tfrac{1}{2} + \tfrac{1}{2} M_{ij}.
\end{equation*}
For two lotteries $p,q\in\Delta(A)$, sample $I\sim p$ and $J\sim q$. Then
\begin{align*}
\Pr(p \succ q)
&:= \Pr(I \succ J)
= \sum_{i,j} p_i q_j \Pr(i \succ j)
= \tfrac{1}{2} + \tfrac{1}{2}\, p^\top M q.
\end{align*}
If $p^\star$ is a maximal lottery, then \cref{eq:ml-guarantee} implies $\Pr(p^\star \succ q)\ge \tfrac{1}{2}$ for all $q\in\Delta(A)$. Equivalently, no fixed model and no mixture of models can beat $p^\star$ in expectation.

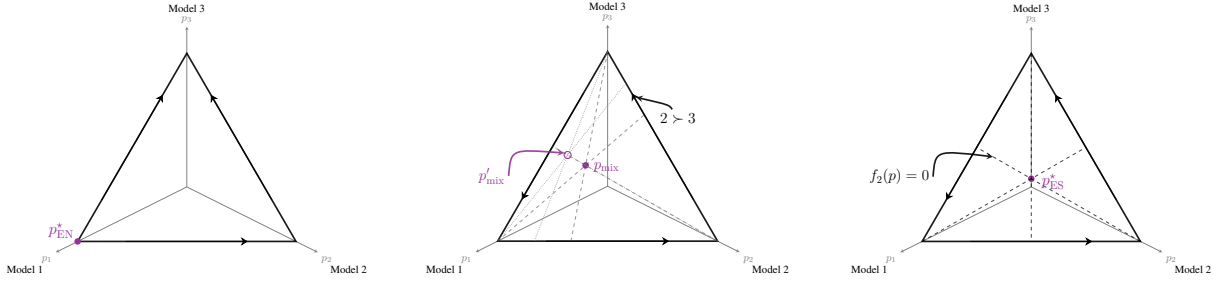
\begin{figure*}[t]
    \centering
    \begin{subfigure}[b]{0.32\textwidth}
        \resizebox{\linewidth}{!}{
\definecolor{my_teal}{HTML}{73B3AB}
\definecolor{my_blue}{RGB}{0,62,116}

\definecolor{my_orange}{HTML}{D4651A}

\def\lineThickness{1.4pt}
\tikzset{simplexEdge/.style={thick, opacity=0.9, line width=\lineThickness}}
\def\tikzscale{0.9}
\tdplotsetmaincoords{60}{135}

  \begin{tikzpicture}[>=stealth,tdplot_main_coords, scale=\tikzscale, transform shape]

  \definecolor{my_purple}{RGB}{150,50,150}

  \coordinate (O) at (0,0,0);
  \coordinate (A) at (0,0,5);  
  \coordinate (D) at (5,0,0);  
  \coordinate (E) at (0,5,0);  

  \draw[->, opacity=0.5] (O) -- (6,0,0) node[anchor=north east]{ $p_1$};
  \draw[->, opacity=0.5] (O) -- (0,6,0) node[anchor=north west]{ $p_2$};
  \draw[->, opacity=0.5] (O) -- (0,0,6) node[anchor=south]{ $p_3$};

  \draw[thick, opacity=0.9, line width=\lineThickness] (A)--(D)--(E)--cycle;

\node[above]      at ($(A)+(0,0,1.4)$) { Model 3};
\node[below left]  at ($(D)+(1.4,0,-0.2)$) { Model 1};
\node[below right] at ($(E)+(0,1.4,-0.2)$) { Model 2};


  \coordinate (pstar) at (barycentric cs:D=1,E=0,A=0);
\fill[my_purple] (pstar) circle (3pt)
  node[above left]{\fontsize{15}{5}\selectfont $p^\star_{\mathrm{EN}}$};

\usetikzlibrary{arrows.meta}

\draw[->, thick, >={Stealth[length=6pt,width=7pt]}] ($(D)!0.22!(E)$) -- ($(D)!0.78!(E)$); 
\draw[->, thick, >={Stealth[length=6pt,width=7pt]}] ($(D)!0.22!(A)$) -- ($(D)!0.78!(A)$); 
\draw[->, thick, >={Stealth[length=6pt,width=7pt]}]($(E)!0.22!(A)$) -- ($(E)!0.78!(A)$); 




\end{tikzpicture}}
      \label{fig:ml-simplex-en}
    \end{subfigure}
    \hfill
    \begin{subfigure}[b]{0.32\textwidth}
        \resizebox{\linewidth}{!}{

\def\lineThickness{1.4pt}
\tikzset{simplexEdge/.style={thick, opacity=0.9, line width=\lineThickness}}
\def\tikzscale{0.9}
\tdplotsetmaincoords{60}{135}
\definecolor{my_teal}{HTML}{73B3AB}
\definecolor{my_orange}{HTML}{D4651A}

  \begin{tikzpicture}[>=stealth,tdplot_main_coords, scale=\tikzscale, transform shape]

  \definecolor{my_purple}{RGB}{150,50,150}
  \definecolor{my_zero}{RGB}{80,80,80}
  \definecolor{my_zero_prime}{RGB}{150,150,150}
\definecolor{my_blue}{RGB}{0,62,116}

  \coordinate (O) at (0,0,0);
  \coordinate (A) at (0,0,5);  
  \coordinate (D) at (5,0,0);  
  \coordinate (E) at (0,5,0);  

  \draw[->, opacity=0.5] (O) -- (6,0,0) node[anchor=north east]{ $p_1$};
  \draw[->, opacity=0.5] (O) -- (0,6,0) node[anchor=north west]{ $p_2$};
  \draw[->, opacity=0.5] (O) -- (0,0,6) node[anchor=south]{ $p_3$};

  \draw[thick, opacity=0.9, line width=\lineThickness] (A)--(D)--(E)--cycle;

  \node[above]       at ($(A)+(0,0,1.4)$) { Model 3};
  \node[below left]  at ($(D)+(1.4,0,-0.2)$) { Model 1};
  \node[below right] at ($(E)+(0,1.4,-0.2)$) { Model 2};

  \draw[->, thick, >={Stealth[length=6pt,width=7pt]}] ($(D)!0.22!(E)$) -- ($(D)!0.78!(E)$); 
  \draw[->, thick, >={Stealth[length=6pt,width=7pt]}] ($(E)!0.22!(A)$) -- ($(E)!0.78!(A)$); 
  \draw[->, thick, >={Stealth[length=6pt,width=7pt]}] ($(A)!0.22!(D)$) -- ($(A)!0.78!(D)$); 

  \coordinate (pmix)  at (barycentric cs:D=2,E=1,A=2);   
  \coordinate (pmixp) at (barycentric cs:D=5,E=1,A=5);   

  \fill[my_purple] (pmix) circle (3pt);
  \draw[my_purple, line width=0.9pt] (pmixp) circle (3.2pt);


  \coordinate (Z1) at (barycentric cs:D=0,E=1,A=2); 
  \coordinate (Z2) at (barycentric cs:D=1,E=0,A=1); 
  \coordinate (Z3) at (barycentric cs:D=2,E=1,A=0); 

  \draw[dashed, opacity=0.6, line width=0.6pt, my_zero] (D) -- (Z1);
  \draw[dashed, opacity=0.6, line width=0.6pt, my_zero] (E) -- (Z2);
  \draw[dashed, opacity=0.6, line width=0.6pt, my_zero] (A) -- (Z3);

  \coordinate (Z1p) at (barycentric cs:D=0,E=1,A=5); 
  \coordinate (Z2p) at (barycentric cs:D=1,E=0,A=1); 
  \coordinate (Z3p) at (barycentric cs:D=5,E=1,A=0); 

  \draw[densely dotted, opacity=0.9, line width=0.6pt, my_zero_prime] (D) -- (Z1p);
  \draw[densely dotted, opacity=0.9, line width=0.6pt, my_zero_prime] (E) -- (Z2p);
  \draw[densely dotted, opacity=0.9, line width=0.6pt, my_zero_prime] (A) -- (Z3p);


\node[my_purple, anchor=west] at ($(pmix)+(0.11,0.3,0.10)$) {\fontsize{14}{14}\selectfont $p_{\mathrm{mix}}$};

\node[anchor=east, my_purple] (Zcall) at (2.7, -1.8, 0.75)
  {\fontsize{14}{14}\selectfont $p_{\mathrm{mix}}'$};
\coordinate (ZcallTarget) at ($(E)!0.93!(Z2)$);
\draw[->, simplexEdge, my_purple]
  (Zcall.east) .. controls (3.2, -1.2, 2.2) .. (ZcallTarget);

  \def\verticalShiftIa{0.0}
  \def\xA{0.2}

  \node (Ia) at ($(A)!0.3!(E) + (-0.4,1.3,0)$)
    {\fontsize{14}{14}\selectfont $2 \succ 3$};

  \coordinate (IaTarget) at ($(A)!\xA!(E) + (0,0,\verticalShiftIa) + (-0.1,0.2,-0.25)$);

  \draw[->, simplexEdge] (Ia) .. controls (-5.1,-2.3,0) .. (IaTarget);

\end{tikzpicture}}
\label{fig:ml-simplex-mix}
    \end{subfigure}
    \hfill
    \begin{subfigure}[b]{0.32\textwidth}
        \resizebox{\linewidth}{!}{
\def\lineThickness{1.4pt}
\tikzset{simplexEdge/.style={thick, opacity=0.9, line width=\lineThickness}}
\def\tikzscale{0.9}
\tdplotsetmaincoords{60}{135}
\definecolor{my_teal}{HTML}{73B3AB}
\definecolor{my_orange}{HTML}{D4651A}
\definecolor{my_blue}{RGB}{0,62,116}

  \begin{tikzpicture}[>=stealth,tdplot_main_coords, scale=\tikzscale, transform shape]
  s

  \definecolor{my_purple}{RGB}{150,50,150}

  \coordinate (O) at (0,0,0);
  \coordinate (A) at (0,0,5);  
  \coordinate (D) at (5,0,0);  
  \coordinate (E) at (0,5,0);  

  \draw[->, opacity=0.5] (O) -- (6,0,0) node[anchor=north east]{ $p_1$};
  \draw[->, opacity=0.5] (O) -- (0,6,0) node[anchor=north west]{ $p_2$};
  \draw[->, opacity=0.5] (O) -- (0,0,6) node[anchor=south]{ $p_3$};

  \draw[thick, opacity=0.9, line width=\lineThickness] (A)--(D)--(E)--cycle;

\node[above]      at ($(A)+(0,0,1.4)$) { Model 3};
\node[below left]  at ($(D)+(1.4,0,-0.2)$) { Model 1};
\node[below right] at ($(E)+(0,1.4,-0.2)$) { Model 2};


  \coordinate (pstar) at (barycentric cs:D=1,E=1,A=1);

\fill[my_purple] (pstar) circle (3pt);

\node[my_purple] (pstarlabel) at ($(pstar)+(0.,1.,0.3)$)
  {\fontsize{15}{5}\selectfont $p^\star_{\mathrm{ES}}$};

  \draw[->, thick, >={Stealth[length=6pt,width=7pt]}] ($(D)!0.22!(E)$) -- ($(D)!0.78!(E)$); 
  \draw[->, thick, >={Stealth[length=6pt,width=7pt]}] ($(E)!0.22!(A)$) -- ($(E)!0.78!(A)$); 
  \draw[->, thick, >={Stealth[length=6pt,width=7pt]}] ($(A)!0.22!(D)$) -- ($(A)!0.78!(D)$); 

\definecolor{my_zero}{RGB}{80,80,80}

\coordinate (Z1) at (barycentric cs:D=0,E=1,A=1); 
\coordinate (Z2) at (barycentric cs:D=1,E=0,A=1); 
\coordinate (Z3) at (barycentric cs:D=1,E=1,A=0); 

\draw[dashed, opacity=0.75, ]   (D) -- (Z1); 
\draw[dashed, opacity=0.75, ] (E) -- (Z2); 
\draw[dashed, opacity=0.75, ] (A) -- (Z3); 

\node[anchor=east] (Zcall) at (2.7, -1.8, 0.75)
  {\fontsize{14}{14}\selectfont $f_2(p)=0$};
\coordinate (ZcallTarget) at ($(E)!0.90!(Z2)$);
\draw[->, simplexEdge]
  (Zcall.east) .. controls (3.2, -1.2, 2.2) .. (ZcallTarget);

\end{tikzpicture}}
\label{fig:ml-simplex-es}
    \end{subfigure}
\caption{
\textbf{Maximal lotteries are sensitive to population shifts.}
Each simplex point is a lottery $p=(p_1,p_2,p_3)$ over models $\{\text{1, 2, 3}\}$ with vertices being deterministic choices. Edge arrows indicate the direction of preference. In each panel, the dashed lines are zero-margin boundaries $f_j(p)=p^\top M e_j=0$, i.e., mixtures that tie pure opponent $j$ under the stratum margin matrix $M$.
\textbf{Left:} English stratum has a Condorcet winner, so the maximal lottery collapses to the winner ($p^\star_{\mathrm{EN}}$ at a vertex).
\textbf{Right:} Spanish stratum exhibits a 3-cycle, so $p^\star_{\mathrm{ES}}$ lies in the interior, balancing the worst-case opponent.
\textbf{Middle:} two nearby population mixtures (weights $\alpha$ and $\alpha'$ over strata) induce two nearby matrices and thus two sets of zero-margin lines; their intersections give $p^\star_{\mathrm{mix}}$ and $p^\star_{\mathrm{mix}}{}'$, illustrating that even with the same majority directions, small shifts in mixture weights can move the maximal lottery.
This sensitivity motivates robust maximal lotteries, which optimize a worst-case guarantee over a set of plausible population mixtures.
}

    \label{fig:ml-simplex-three}
\end{figure*}

\noindent\textbf{Example.} Let $A=\{1,2,3\}$ and consider two user subpopulations defined by their primary language. Each group induces a margin matrix as in Definition~\ref{def:margin_matrix}. In EN, let
\[
M^{(\mathrm{EN})}=
\begin{pmatrix}
0 & 0.6 & 0.6\\
-0.6 & 0 & 0.6\\
-0.6 & -0.6 & 0
\end{pmatrix},
\]
so Model~1 beats both others and the maximal lottery is the point mass on Model~1.
In ES, let
\[
M^{(\mathrm{ES})}=
\begin{pmatrix}
0 & 0.6 & -0.6\\
-0.6 & 0 & 0.6\\
0.6 & -0.6 & 0
\end{pmatrix},
\]
which forms a 3-cycle and yields an interior maximal lottery. For mixture weight $\alpha\in[0,1]$, define the pooled matrix $M(\alpha):=\alpha M^{(\mathrm{EN})}+(1-\alpha)M^{(\mathrm{ES})}$.
\cref{fig:ml-simplex-three} visualizes the resulting maximal lotteries: (a) a Condorcet-winner vertex in EN, (c) an interior point in ES, and (b) sensitivity of the maximal lottery to nearby $\alpha$ and $\alpha'$ even when the preference directions are unchanged.

Maximal lotteries satisfy several axiomatic properties that are useful for evaluation \cite{brandl2016consistentprobabilisticsocialchoice}:
\begin{itemize}
\item \textbf{Nonparametric.} The method depends only on pairwise margins and does not assume transitivity or a latent utility scale.

\item \textbf{Existence and convexity.} A maximal lottery exists for every $M\in\mathbb{M}_m$, and the set $\mathrm{ML}(M)$ is convex.

\item \textbf{Condorcet consistency.} If there exists a \emph{Condorcet winner} $i$ such that $M_{ij}>0$ for all $j\neq i$, then the unique maximal lottery is the point mass $p^\star=e_i$.

\item \textbf{Clone invariance.} If an alternative is cloned with identical pairwise margins, the maximal lottery only splits probability among clones and leaves all other probabilities unchanged.

\item \textbf{Population consistency.} If $p$ is maximal for $M^{(1)}$ and $M^{(2)}$, then $p$ is also maximal for any convex combination $\alpha M^{(1)}+(1-\alpha)M^{(2)}$.

\item \textbf{Efficient computation.} A maximal lottery can be computed in polynomial time via linear programming.
\end{itemize}

\section{Robust Lotteries}\label{sec:robustml}

\cref{sec:ml} treats evaluation as a single game built from pooled comparisons across, for example, subpopulations. Here, we drop the pooling assumption. As \cref{fig:ml-simplex-mix} illustrates, the maximal lottery can change sharply under small shifts in how subpopulations are weighted, motivating an explicit worst-case objective over a family of plausible margin matrices.

\begin{definition}[Ambiguity Set] \label{def:ambiguity-set}
Let $\mathfrak M$ be the class of non-empty, compact, convex subsets of $\mathbb M_m$. An ambiguity set $\mathcal{M}$ is an element of $\mathfrak M$.\footnote{We assume convexity for ease of exposition. Although our motivating example involves a finite set of distinct subpopulations, this will turn out to be equivalent to using their convex hull.}  
\end{definition}

The ambiguity set captures a range of preferences we wish to represent. For example, in our motivating example, it captures the diversity of preferences inherent across different user subpopulations. More generally, it can capture other notions such as statistical uncertainty about a true margin matrix  (if it exists) or variation across evaluation tasks.

We impose three requirements on robust lotteries. First, they must reduce to maximal lotteries in the absence of uncertainty. Second, if an analyst is unsure which of two ambiguity sets is correct, they should evaluate a lottery by its worst guarantee across the two sets. Lastly, a lottery's score must vary continuously with the ambiguity set. 

We now define robust lotteries. We show in~\cref{sec:appendix} that the three requirements uniquely pin down this definition. 

\begin{definition}[Robust Lotteries]\label{def:robust_lotteries}
Given an ambiguity set \(\mathcal M \in \mathfrak M\), define the robust value of a lottery \(p\in\Delta(A)\) by  
\[
V(p,\mathcal M) := \min_{M\in\mathcal M}\min_{q\in\Delta(A)} p^\top M q. 
\]
A \emph{robust lottery} is any maximizer of this value: 
\[
p^\star \in  \argmax_{p\in\Delta(A)} V(p,\mathcal M).
\]
Let $\mathrm{RL}(\mathcal M)$ denote the set of robust lotteries for $\mathcal M$.  We call \(v^\star(\mathcal M) := \max_p \: V(p,\mathcal M)\) the \emph{robust game value}.
\end{definition}

\begin{definition}[Robust Bipartisan Set] \label{def:rbp}
The \emph{robust bipartisan set} is the set of alternatives that receive positive probability in at least one robust lottery:
\begin{equation*}
    \mathrm{RBP}(\mathcal M) := \bigcup_{p \in \mathrm{RL}(\mathcal M)} \supp(p).
\end{equation*} 
\end{definition}

\subsection{Properties of Robust Lotteries}\label{sec:properties}
Robust lotteries extend the maximal lottery rule to ambiguity sets and satisfy many corresponding axiomatic properties. We first list basic properties of robust lotteries:

\begin{itemize}
    \item \textbf{Nonparametric.} The method depends only on pairwise margins of the groups and does not assume transitivity or a latent utility scale.
    \item \textbf{Existence and convexity.} The set $\mathrm{RL}(\mathcal M)$ is non-empty and convex.
    \item \textbf{Neutrality.} The robust lottery is invariant under permutations of alternatives and groups, up to the corresponding relabeling.
    \item \textbf{Monotonicity.} Enlarging the ambiguity set cannot increase the robust game value.
\end{itemize}

In addition to these properties, we show that robust lotteries also satisfy the following analogs to the properties of maximal lotteries: \textbf{robust Condorcet consistency} (\cref{thm:rcc}), \textbf{weak-clone invariance} (\cref{thm:weak_clone_invariance}), and a \textbf{robust game value} guarantee under mixtures of ambiguity sets (\cref{thm:pop-inconsistency}). We relegate proofs of the basic properties and the following results to \cref{sec:appendix}.

\begin{definition}[Robust Condorcet Winner]\label{def:rcw}
An alternative $i^\star \in A$ is a robust Condorcet winner (RCW) if $M_{i^\star j} \ge 0$ for all $j \in A$ and all $M \in \mathcal{M}$. It is a strict RCW if, additionally, for every $j \neq i^\star$ there exists $\tilde M \in \mathcal{M}$ such that \(\tilde M_{i^\star j} > 0\).
\end{definition}

\begin{theorem}[Robust Condorcet Consistency]\label{thm:rcc}
If $i^\star$ is an RCW, then the point mass $e_{i^\star}$ is a robust lottery. If $i^\star$ is a strict RCW, then $e_{i^\star}$ is the unique robust lottery.
\end{theorem}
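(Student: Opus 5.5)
The plan is to pin down the robust game value first and then compare every lottery against it. Since every $M\in\mathcal M$ is skew-symmetric, $p^\top M p = 0$ for all $p$. Taking $q=p$ in the inner minimization therefore gives $V(p,\mathcal M)\le 0$ for every $p\in\Delta(A)$, and hence $v^\star(\mathcal M)\le 0$. Next we show that $e_{i^\star}$ attains this bound. For any $M\in\mathcal M$ and any $q\in\Delta(A)$, we have $e_{i^\star}^\top M q=\sum_j q_j M_{i^\star j}\ge 0$ by the RCW condition. So $V(e_{i^\star},\mathcal M)\ge 0$, which combined with the upper bound gives $V(e_{i^\star},\mathcal M)=0=v^\star(\mathcal M)$. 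Thus $e_{i^\star}$ is a robust lottery, proving the first claim.

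For uniqueness under a strict RCW, let $p$ be any robust lottery, so $V(p,\mathcal M)=0$. This means $p^\top M q\ge 0$ for all $M\in\mathcal M$ and all $q$. We specialize to the pure opponent $q=e_{i^\star}$ and use skew-symmetry: $p^\top M e_{i^\star}=\sum_j p_j M_{j i^\star}=-\sum_{j} p_j M_{i^\star j}\ge 0$. Each term $p_j M_{i^\star j}$ is nonnegative by the RCW property, so the sum is both $\ge 0$ and $\le 0$. Hence $p_j M_{i^\star j}=0$ for every $j$ and every $M\in\mathcal M$.

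Now suppose some $j\neq i^\star$ had $p_j>0$. Strictness supplies $\tilde M\in\mathcal M$ with $\tilde M_{i^\star j}>0$, so $p_j\tilde M_{i^\star j}>0$, a contradiction. Therefore $\supp(p)\subseteq\{i^\star\}$, i.e.\ $p=e_{i^\star}$.

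The only delicate point is the uniqueness step. There the opponent must be the fixed pure strategy $e_{i^\star}$, and the same $q$ must work for every $M$, which holds because $e_{i^\star}$ does not depend on $M$. Strictness is only needed pairwise, with possibly different $\tilde M$ for different $j$. This is sufficient precisely because the per-term vanishing holds for all $M\in\mathcal M$ simultaneously. Compactness of $\mathcal M$ is used only to guarantee that the minima in $V$ exist; convexity is not needed.
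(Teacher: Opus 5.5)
Your proof is correct and follows essentially the same route as the paper. Both bound $V(p,\mathcal M)\le 0$ via $p^\top M p=0$, get $V(e_{i^\star},\mathcal M)=0$ from the RCW condition, and for uniqueness test against the pure opponent $e_{i^\star}$ using skew-symmetry and the strictness witness $\tilde M$; the only cosmetic difference is that the paper shows directly that any $p\neq e_{i^\star}$ has $p^\top M^{(j)} e_{i^\star}<0$, whereas you argue by contradiction that $V(p,\mathcal M)=0$ forces every term $p_j M_{i^\star j}$ to vanish.
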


To complement robust Condorcet consistency, we establish a robust dominance property: any alternative that is uniformly dominated throughout the ambiguity set is excluded from the support of every robust lottery.

\begin{theorem}[Robust Dominance]\label{thm:robust-dominance} 
Suppose there exist distinct alternatives $x,y\in A$ such that for every $M\in\mathcal M$ and every $j\in A$,
\[
M_{xj} > M_{yj}.
\]
Then every robust lottery $p\in\mathrm{RL}(\mathcal M)$ satisfies $p_y=0$.
\end{theorem}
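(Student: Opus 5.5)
The plan is a mass-shifting argument: if some robust lottery put positive weight on the dominated alternative $y$, moving that weight onto $x$ would strictly increase the robust value. That contradicts optimality.

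Suppose, for contradiction, that $p\in\mathrm{RL}(\mathcal M)$ has $p_y>0$. Define
\[
p' := p + p_y\,(e_x - e_y) \in \Delta(A),
\]
which moves all of $y$'s mass to $x$. For any $M\in\mathcal M$ and $q\in\Delta(A)$,
\[
p'^\top M q - p^\top M q = p_y\,(e_x-e_y)^\top M q = p_y \sum_{j\in A} q_j\,(M_{xj}-M_{yj}).
\]
Define $g(M,q) := \sum_j q_j (M_{xj}-M_{yj})$. The hypothesis gives $M_{xj}-M_{yj}>0$ for every $j$, including $j=x,y$ where diagonal entries vanish. Since $q$ is a probability vector, $g(M,q)>0$ on all of $\mathcal M\times\Delta(A)$.

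The only real obstacle is that pointwise strict improvement does not by itself give a strict improvement of the minimum. We need a uniform positive gap. This is where compactness of $\mathcal M$, assumed in Definition~\ref{def:ambiguity-set}, enters. The function $g$ is continuous (bilinear) on the compact set $\mathcal M\times\Delta(A)$, so it attains its minimum $\delta := \min_{M,q} g(M,q)$, and $\delta>0$.

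Then for every $(M,q)$ we have $p'^\top M q \ge p^\top M q + p_y\delta \ge V(p,\mathcal M) + p_y\delta$. Taking the minimum over $(M,q)$ gives
\[
V(p',\mathcal M) \ge V(p,\mathcal M) + p_y\delta > V(p,\mathcal M) = v^\star(\mathcal M).
\]
This contradicts the definition of $v^\star(\mathcal M)$ as the maximum of $V(\cdot,\mathcal M)$. Hence $p_y=0$ for every robust lottery.
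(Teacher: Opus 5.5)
Your proof is correct and is essentially the paper's argument: the same mass transfer $p' = p + p_y(e_x - e_y)$ and the same pointwise gain $p_y(M_{xj}-M_{yj})>0$. The one difference is that you make the strict inequality after minimizing over $\mathcal M$ explicit with a uniform gap $\delta>0$ obtained from compactness. The paper instead passes to the minimum directly and evaluates only against pure opponents $e_j$, which is also valid because the minimum over the compact $\mathcal M$ is attained.
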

Another important desideratum in social choice is \emph{clone invariance}, which requires that introducing redundant alternatives does not distort the outcome among the original alternatives. This property is especially salient in language model evaluation, where it is easy to release multiple closely related checkpoints or minor variants and thereby crowd a leaderboard. Our result shows that robust lotteries are invariant to adding weaker clones of a model, defined as models that are indistinguishable from or strictly inferior to an existing model.\footnote{Standard clone invariance typically considers exact clones: alternatives that perform identically against all non-clone opponents. Our formulation is strictly stronger, ensuring robustness even against inferior variants like legacy model checkpoints.} 

\begin{definition}[Adding weaker clones]\label{def:weak_clones}
Fix \(i\in A\) and add \(\ell\ge 1\) variants \(C=\{i^{(1)},\dots,i^{(\ell)}\}\). Consider the new set of alternatives \(\tilde A:=A \;\cup \; C\). Given \(M\in\mathbb M_{|A|}\), a \emph{weak-clone expansion} of \(M\) is any \(\tilde M\in\mathbb M_{|\tilde A|}\) such that 
\begin{align*}
&\tilde M_{ab} \; = \:M_{ab} &&\forall a,b\in A,\\
&\tilde M_{c x}\le \tilde M_{i x} &&\forall c\in C,\ \forall \: x\in \tilde A.
\end{align*}
Equivalently, each added variant \(c\) is pointwise no better than \(i\) against every opponent. $\tilde{\mathcal{M}}$ is a weak-clone expansion of an ambiguity set $\mathcal{M}$ if 
\[
\tilde{\mathcal M}
:= \{\tilde M\in\mathbb M_{|\tilde A|}:\exists M\in\mathcal M\ \text{s.t.}\ \tilde M \text{ is a weak-clone expansion of } M  \}.
\]
For \(\tilde p\in\Delta(\tilde A)\), define its projection onto $A$ by 
\[
p_j=\tilde p_j\quad (j\in A\setminus\{i\}),\qquad
p_i=\tilde p_i+\sum_{c\in C}\tilde p_c.
\]
\end{definition}

\begin{theorem}[Weak-Clone Invariance]\label{thm:weak_clone_invariance} 
If \(\tilde{\mathcal M}\) is a weak-clone expansion of \(\mathcal M\), then, for every \(\tilde p\in\mathrm{RL}(\tilde{\mathcal M})\), its projected lottery \(p\) is a robust lottery for $\mathcal M$. Consequently, the robust-lottery weight on any non-clone \(j \in A \setminus \{i\}\) (and hence $j$'s membership in the robust bipartisan set) is unchanged by cloning. Moreover, if \(i\) appears in the robust bipartisan set before cloning, it still appears after cloning.
\end{theorem}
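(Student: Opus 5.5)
The plan is to show that cloning leaves the robust game value unchanged, $v^\star(\tilde{\mathcal M})=v^\star(\mathcal M)$. I will prove this with two inequalities, one for each direction of moving between $\Delta(A)$ and $\Delta(\tilde A)$. All three conclusions of the theorem then follow by comparing values.

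\emph{Step 1 (embedding).} For $p\in\Delta(A)$, let $\iota(p)\in\Delta(\tilde A)$ be $p$ extended by zeros on $C$. I claim $V(\iota(p),\tilde{\mathcal M})=V(p,\mathcal M)$.

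Fix $\tilde M\in\tilde{\mathcal M}$ with restriction $M\in\mathcal M$ to $A$. Skew-symmetry together with $\tilde M_{ca}\le\tilde M_{ia}$ gives, for every $a\in A$,
\[
\tilde M_{ac}=-\tilde M_{ca}\ge-\tilde M_{ia}=\tilde M_{ai}.
\]
So for $\iota(p)$, each clone column is pointwise no worse than column $i$. Moving the opponent's mass from $c$ to $i$ therefore cannot raise $\iota(p)^\top\tilde M\tilde q$. Hence
\[
\min_{\tilde q\in\Delta(\tilde A)}\iota(p)^\top\tilde M\tilde q=\min_{q\in\Delta(A)}p^\top Mq .
\]
Every $M\in\mathcal M$ has at least one expansion in $\tilde{\mathcal M}$ (for instance the exact one in Step 2). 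Also, every element of $\tilde{\mathcal M}$ restricts to some element of $\mathcal M$. Minimizing over $\tilde{\mathcal M}$ therefore gives the claimed equality, and in particular $v^\star(\tilde{\mathcal M})\ge v^\star(\mathcal M)$.

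\emph{Step 2 (projection).} For $\tilde p\in\Delta(\tilde A)$ with projection $p$, I claim $V(\tilde p,\tilde{\mathcal M})\le V(p,\mathcal M)$.

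Given $M\in\mathcal M$, build the exact-clone expansion $\hat M$: each clone's row and column copy those of $i$, and all entries among $\{i\}\cup C$ are $0$. This $\hat M$ is skew-symmetric, has entries in $[-1,1]$, and satisfies the weak-clone inequalities with equality, so $\hat M\in\tilde{\mathcal M}$. For any $q\in\Delta(A)$, the clone rows coincide with row $i$ on $A$, so $\tilde p^\top\hat M\iota(q)=p^\top Mq$. Taking minima over $q$ and over $M$ gives the claim, hence $v^\star(\tilde{\mathcal M})\le v^\star(\mathcal M)$.

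\emph{Step 3 (conclusions).} Combining Steps 1 and 2, the two values are equal. If $\tilde p\in\mathrm{RL}(\tilde{\mathcal M})$, Step 2 gives
\[
V(p,\mathcal M)\ge v^\star(\tilde{\mathcal M})=v^\star(\mathcal M),
\]
so the projection $p$ is robust for $\mathcal M$. Conversely, Step 1 shows that $\iota(p)$ is robust for $\tilde{\mathcal M}$ whenever $p\in\mathrm{RL}(\mathcal M)$.

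For $j\in A\setminus\{i\}$, projection and embedding both preserve the weight on $j$. Hence $j\in\mathrm{RBP}(\mathcal M)$ if and only if $j\in\mathrm{RBP}(\tilde{\mathcal M})$, which is the statement about non-clone weights and bipartisan-set membership. Finally, if $p\in\mathrm{RL}(\mathcal M)$ has $p_i>0$, then $\iota(p)$ is robust for $\tilde{\mathcal M}$ and still has $\iota(p)_i>0$, so $i$ remains in the robust bipartisan set after cloning.

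The main obstacle is the sign bookkeeping in Step 1. The hypothesis bounds the clones' rows, but the argument needs the opponent-column inequality, and skew-symmetry is what turns one into the other. A second point needing care is checking that $\hat M$ lies in $\mathbb M_{|\tilde A|}$ and satisfies the expansion constraints. Convexity of $\tilde{\mathcal M}$ is never used.
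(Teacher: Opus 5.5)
Your proof is correct, and it follows the paper for the embedding direction but not for the projection direction. Your Step~1 is the same comparison the paper makes: skew-symmetry turns the row condition into $\tilde M_{ac}\ge\tilde M_{ai}$, so against a lottery supported on $A$ the opponent never benefits from a clone, which gives $V(\iota(p),\tilde{\mathcal M})=V(p,\mathcal M)$.

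You differ on the inequality $V(\tilde p,\tilde{\mathcal M})\le V(\Pi\tilde p,\mathcal M)$. The paper proves it directly from the row condition $\tilde M_{cx}\le\tilde M_{ix}$. Moving all of $\tilde p$'s clone mass onto $i$ weakly raises $\tilde p^\top\tilde M\tilde q$ for every $\tilde M\in\tilde{\mathcal M}$ and every $\tilde q$. So the collapsed lottery $\bar p=\iota(\Pi\tilde p)$ is again optimal, and the embedding comparison then applies to it. You instead let the adversary choose the exact-clone expansion $\hat M$, under which clones are payoff-identical to $i$.

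What your route buys:
\begin{itemize}
\item It is shorter.
\item It yields the value identity $v^\star(\tilde{\mathcal M})=v^\star(\mathcal M)$ and the two-way correspondence via $\Pi$ and $\iota$ in one stroke.
\item It makes explicit that every $M\in\mathcal M$ admits an expansion. The paper uses this tacitly when it writes $\tilde M=f(M)$ and then minimizes over $M\in\mathcal M$.
\end{itemize}

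What it costs: it needs $\hat M\in\tilde{\mathcal M}$, i.e.\ that $\tilde{\mathcal M}$ is the full set of weak-clone expansions, as in the paper's definition. The paper's mass-pushing argument needs only two things: every element of $\tilde{\mathcal M}$ satisfies the weak-clone inequalities, and the restrictions to $A\times A$ sweep out $\mathcal M$. So it would also cover a smaller cloned ambiguity set that excludes exact clones, e.g.\ one generated by a fixed expansion map $M\mapsto f(M)$. It also yields the extra fact that collapsing clone mass onto $i$ preserves optimality in $\tilde{\mathcal M}$.
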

\begin{proof}[Proof sketch]
Fix \(\tilde p\in \mathrm{RL}(\tilde{\mathcal M})\) and let \(\bar p \in \Delta(\tilde A)\) be obtained by moving all probability mass that \(\tilde p\) assigns to clones onto \(i\). The weak-clone condition \(\tilde M_{cx}\le \tilde M_{ix}\) implies that, against any \(\tilde q\), replacing a clone by \(i\) can only increase the payoff \(\tilde p^\top \tilde M \tilde q\). Hence \(\bar p\) is also optimal for \(\tilde{\mathcal M}\).

Let \(p\) be the projected lottery on \(A\) induced by \(\bar p\) (equivalently by \(\tilde p\)). Comparing the inner minimizations shows that, for each \(M\in\mathcal M\) and its expansion \(\tilde M\), \(\min_{\tilde q}\bar p^\top \tilde M\tilde q=\min_{q} p^\top M q\), so \(V(\bar p,\tilde{\mathcal M})=V(p,\mathcal M)\). Since \(\bar p\) is optimal for \(\tilde{\mathcal M}\), this forces \(p\) to be optimal for \(\mathcal M\), proving the first claim. The invariance of weights for \(j \in A\setminus\{i\}\) is immediate from the definition of the projection. 

Finally, if \(i\) is in the robust bipartisan set for \(\mathcal M\), we can lift an optimal robust lottery on \(A\) to \(\tilde A\) by assigning zero mass to clones. The same comparison shows it remains optimal for \(\tilde{\mathcal M}\), so \(i\) remains in the bipartisan set after cloning.
\end{proof}
Lastly, we discuss how \emph{population consistency} relates to our setting. In the classical framework, maximality is characterized by linear inequalities in the margin matrix, and these constraints are preserved under convex combinations. Equivalently, if a lottery is maximal for each population, it remains maximal for their mixture. Our setup is qualitatively different. Robust lotteries are defined by a minimax criterion over an ambiguity set of margin matrices. The mixed set $\mathcal M_\lambda$ contains matrices formed by combining different elements of $\mathcal M_1$ and $\mathcal M_2$, and the identity of the worst-case matrix (which determines the minimax optimum) can therefore change under mixing. Accordingly, population consistency is not the appropriate invariance notion for minimax choice over ambiguity sets. Nevertheless, the robust game value behaves well under aggregation at the level of guarantees. In fact, it is lower bounded by the mixture of each game value; see \cref{thm:pop-inconsistency}.

\begin{definition}[Mixture of Ambiguity Sets] \label{def:ambiguity_mixture}
Let $\mathcal{M}_1, \mathcal{M}_2$ be two ambiguity sets of margin matrices. For $\lambda \in [0, 1]$, their population mixture is the Minkowski convex combination:
\begin{align*}  
    \mathcal{M}_\lambda &:= \lambda \mathcal{M}_1 \oplus (1-\lambda)\mathcal{M}_2 \\&= \{ \lambda M_1 + (1-\lambda) M_2: M_1 \in \mathcal{M}_1, M_2 \in \mathcal{M}_2 \}. 
\end{align*}
\end{definition}

\begin{definition}[Population Consistency] \label{def:population_consistency}
A social choice rule $F$ mapping ambiguity sets to sets of lotteries is \textit{population-consistent} if, for any $\mathcal{M}_1, \mathcal{M}_2$ and any $\lambda \in (0, 1)$: 
\begin{equation*}
    p \in F(\mathcal{M}_1) \cap F(\mathcal{M}_2) \implies p \in F(\mathcal{M}_\lambda).
\end{equation*}
\end{definition}

\begin{theorem}\label{thm:pop-inconsistency}
Robust lotteries are not population-consistent for $m \geq 3$. However, when there exists a lottery $p$ that is a robust lottery for both $\mathcal M_1$ and $\mathcal M_2$, the robust game value achieved is stable under mixing populations,
\[
v^\star(\mathcal M_\lambda)\ \ge\ \lambda v^\star(\mathcal M_1)+(1-\lambda)v^\star(\mathcal M_2).  
\]
\end{theorem}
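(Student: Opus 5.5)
The inequality follows from a one-line convexity argument, and the failure of population consistency needs an explicit counterexample for $m=3$, which I then extend to larger $m$.

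\textbf{The inequality.} Let $p$ be a robust lottery for both $\mathcal M_1$ and $\mathcal M_2$, so $V(p,\mathcal M_1)=v^\star(\mathcal M_1)$ and $V(p,\mathcal M_2)=v^\star(\mathcal M_2)$. Any $M\in\mathcal M_\lambda$ can be written as $M=\lambda M_1+(1-\lambda)M_2$ with $M_k\in\mathcal M_k$. The map $q\mapsto p^\top M q$ is linear, and a minimum of a sum is at least the sum of the minima. Hence
\[
\min_{q} p^\top M q \;\ge\; \lambda \min_q p^\top M_1 q + (1-\lambda)\min_q p^\top M_2 q \;\ge\; \lambda v^\star(\mathcal M_1)+(1-\lambda)v^\star(\mathcal M_2).
\]
Taking the minimum over $M\in\mathcal M_\lambda$ gives $V(p,\mathcal M_\lambda)\ge \lambda v^\star(\mathcal M_1)+(1-\lambda)v^\star(\mathcal M_2)$. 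Since $v^\star(\mathcal M_\lambda)\ge V(p,\mathcal M_\lambda)$, the claim follows. I would also note that $\mathcal M_\lambda$ is again non-empty, compact, convex and contained in $\mathbb M_m$, so $v^\star(\mathcal M_\lambda)$ is well defined.

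\textbf{Non-consistency.} The inequality above already shows what a counterexample must look like. The common lottery $p$ only attains the mixed bound $\lambda v_1+(1-\lambda)v_2$. So it suffices to build sets for which some other lottery $p'$ does strictly better on $\mathcal M_\lambda$. Two constraints guide the construction:
\begin{itemize}
\item For singleton sets, robust lotteries coincide with maximal lotteries, which are population-consistent. So at least one set must be non-singleton.
\item Since $\min_q p^\top M q\le p^\top M p=0$, every robust value is at most $0$. The sets should therefore have negative values $v_1,v_2<0$ that are attained by $p$ but improved on the mixture by $p'$.
\end{itemize}

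\textbf{Construction for $m=3$.} I would take each $\mathcal M_k$ to be the segment $\mathrm{conv}\{X_k,Y_k\}$ between two matrices with different Condorcet winners. For instance, $\mathcal M_1=\mathrm{conv}\{X,Y\}$, where alternative $1$ is the Condorcet winner of $X$ and alternative $2$ that of $Y$, and $\mathcal M_2$ is built symmetrically with the roles chosen so that the Minkowski combination no longer contains the adversarial extreme pairings. Concretely, the segment from $\mathcal M_1$ should be paired with a reversed segment from $\mathcal M_2$, so that $\lambda X_1+(1-\lambda)Y_2$ and $\lambda Y_1+(1-\lambda)X_2$ partly cancel.

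I would then:
\begin{enumerate}
\item solve the small robust linear programs for $\mathcal M_1$ and $\mathcal M_2$, checking that a common lottery $p$ (for example a vertex or the uniform lottery) is optimal for both;
\item evaluate $V(\cdot,\mathcal M_\lambda)$ at $\lambda=1/2$, exhibiting a $p'$ with $V(p',\mathcal M_{1/2})>V(p,\mathcal M_{1/2})$.
\end{enumerate}
For $m>3$, I would pad with alternatives that are strictly dominated by alternative $1$ in every matrix. By Theorem~\ref{thm:robust-dominance} these receive zero weight in every robust lottery, so the $3\times3$ example carries over.

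\textbf{Main obstacle.} The hard step is finding explicit numbers that simultaneously make $p$ exactly optimal for both $\mathcal M_1$ and $\mathcal M_2$ and strictly suboptimal for $\mathcal M_{1/2}$. I would first try a parametrized family, with entries $\pm a,\pm b$ and a cycle component, and tune the parameters using the LP optimality (dual) conditions.
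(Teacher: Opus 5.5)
Your proof of the inequality is correct and coincides with the paper's: superadditivity of $\min_q$ for a fixed $p\in\mathrm{RL}(\mathcal M_1)\cap\mathrm{RL}(\mathcal M_2)$, then $v^\star(\mathcal M_\lambda)\ge V(p,\mathcal M_\lambda)$.

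The first claim, however, is not proved. Non-population-consistency is an existence statement, and your proposal ends with a search strategy. You concede yourself that the decisive step is still open: explicit $\mathcal M_1,\mathcal M_2$, a common robust lottery $p$, and a strictly better $p'$ on $\mathcal M_{1/2}$. Your heuristic also misses the ingredient that makes such an example work. You aim to have extreme pairings cancel in the Minkowski sum, but nothing stops $p$ from profiting from that cancellation as much as $p'$ does. Unless $p$'s worst cases in $\mathcal M_1$ and in $\mathcal M_2$ are attained against a common opponent $q$, $V(p,\mathcal M_\lambda)$ is strictly larger than $\lambda v^\star(\mathcal M_1)+(1-\lambda)v^\star(\mathcal M_2)$, and $p$ may well stay optimal. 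What you want is for $p$'s worst case to survive mixing while $p'$'s does not. The simplest way to force the former is to let the two sets share an extreme matrix on which $p$ is worst, since $M=\lambda M+(1-\lambda)M\in\mathcal M_\lambda$.

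This is exactly the paper's construction. Let
\[
M^a=\begin{pmatrix}0&-1&-1\\1&0&-1\\1&1&0\end{pmatrix},\quad
M^b=\begin{pmatrix}0&-1&-1\\1&0&1\\1&-1&0\end{pmatrix},\quad
M^c=\begin{pmatrix}0&0&1\\0&0&0\\-1&0&0\end{pmatrix},
\]
and take $\mathcal M_1=\operatorname{conv}\{M^a,M^b\}$ and $\mathcal M_2=\operatorname{conv}\{M^a,M^c\}$. Since $v(p,\cdot)$ is concave, robust values are minima over vertices.
\begin{itemize}
\item The lottery $p^\star=(0,\tfrac12,\tfrac12)$ has value $-\tfrac12$ against each of $M^a,M^b,M^c$.
\item The bounds $v(p,M^a)\le p_3-1$, $v(p,M^b)\le p_2-1$ and $v(p,M^c)=-p_3$ give $v^\star(\mathcal M_1)=v^\star(\mathcal M_2)=-\tfrac12$, so $p^\star$ is robust for both.
\item The mixture $\mathcal M_{1/2}=\operatorname{conv}\{M^a,\tfrac12(M^a+M^b),\tfrac12(M^a+M^c),\tfrac12(M^b+M^c)\}$ still contains $M^a$, so $V(p^\star,\mathcal M_{1/2})=-\tfrac12$.
\item By contrast, $p'=(0,\tfrac13,\tfrac23)$ has worst cases $M^b$ and $M^c$ (value $-\tfrac23$ each), which appear only averaged in $\mathcal M_{1/2}$, so $V(p',\mathcal M_{1/2})=-\tfrac13$.
\end{itemize}

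A smaller caveat concerns your extension to $m>3$. \cref{thm:robust-dominance} needs $M_{xj}>M_{yj}$ for every $j$, which is impossible if the dominating alternative $x$ ever has margin $-1$. In the example above every alternative does, e.g.\ alternative $1$ in $M^a$. The fix is to rescale all matrices by $\tfrac12$ first; this leaves robust lotteries unchanged and commutes with Minkowski mixing. Then pad with alternatives that lose to every original alternative by margin $1$. Such alternatives are strictly dominated and are never a best response against a lottery supported on the original three, so the example carries over.
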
 
There is a special case when robust lotteries are population-consistent. This happens when all margin matrices share a common strict RCW. 

\begin{theorem}[Population consistency under a common strict RCW]\label{thm:population_rcw}
If there exists $i^\star\in A$ that is an RCW for both $\mathcal M_1$ and $\mathcal M_2$, then $i^\star$ is an RCW for $\mathcal M_\lambda$. Consequently, $e_{i^\star}\in \mathrm{RL}(\mathcal M_\lambda)$. If $i^\star$ is a strict RCW for $\mathcal M_1$ and $\mathcal M_2$, then robust lotteries are population-consistent.
\end{theorem}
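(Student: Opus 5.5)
The plan has three steps: transfer RCW to the mixture, invoke \cref{thm:rcc}, and then upgrade to strictness to get population consistency.

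\textbf{Step 1: $i^\star$ is an RCW for $\mathcal M_\lambda$.} Take any element of $\mathcal M_\lambda$. By \cref{def:ambiguity_mixture} it has the form $M=\lambda M_1+(1-\lambda)M_2$ with $M_1\in\mathcal M_1$ and $M_2\in\mathcal M_2$. Then, for every $j\in A$,
\[
M_{i^\star j}=\lambda (M_1)_{i^\star j}+(1-\lambda)(M_2)_{i^\star j}\ge 0 ,
\]
since both terms are nonnegative by the RCW hypothesis. So $i^\star$ is an RCW for $\mathcal M_\lambda$. I would also check that $\mathcal M_\lambda\in\mathfrak M$, so that the earlier results apply. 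A Minkowski combination of nonempty compact convex sets is nonempty, compact and convex. It stays inside $\mathbb M_m$ because skew-symmetry and the entry bounds $[-1,1]$ are preserved by convex combinations.

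\textbf{Step 2: $e_{i^\star}\in\mathrm{RL}(\mathcal M_\lambda)$.} This follows immediately from the first part of \cref{thm:rcc}.

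\textbf{Step 3: population consistency under a strict RCW.} Assume $i^\star$ is a strict RCW for both $\mathcal M_1$ and $\mathcal M_2$. By \cref{thm:rcc}, $\mathrm{RL}(\mathcal M_1)=\mathrm{RL}(\mathcal M_2)=\{e_{i^\star}\}$. So the only $p$ in the intersection required by \cref{def:population_consistency} is $p=e_{i^\star}$, and Step 2 already gives $e_{i^\star}\in\mathrm{RL}(\mathcal M_\lambda)$. That proves the implication.

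For completeness I would also show that $i^\star$ is a strict RCW for $\mathcal M_\lambda$, which makes $e_{i^\star}$ the unique robust lottery there as well. Fix $j\neq i^\star$. Strictness for $\mathcal M_1$ gives some $\tilde M_1\in\mathcal M_1$ with $(\tilde M_1)_{i^\star j}>0$. Pair it with an arbitrary $M_2\in\mathcal M_2$, which is possible because $\mathcal M_2$ is nonempty. For $\lambda\in(0,1)$,
\[
\bigl(\lambda \tilde M_1+(1-\lambda)M_2\bigr)_{i^\star j}\ \ge\ \lambda(\tilde M_1)_{i^\star j}>0 ,
\]
because the $M_2$ term is nonnegative by the RCW property.

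The only potential obstacle is the bookkeeping in this last step. Strict positivity needs $\lambda>0$ (or, symmetrically, $1-\lambda>0$), and it needs the partner matrix to be merely nonnegative in row $i^\star$, which the non-strict RCW property supplies. Beyond that, the argument is a direct application of \cref{thm:rcc}.
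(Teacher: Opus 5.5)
Your proposal is correct and follows the paper's proof essentially step for step. You show row $i^\star$ of $\lambda M_1+(1-\lambda)M_2$ is entrywise nonnegative, apply \cref{thm:rcc} to get $e_{i^\star}\in\mathrm{RL}(\mathcal M_\lambda)$, and use uniqueness under strictness to reduce $\mathrm{RL}(\mathcal M_1)\cap\mathrm{RL}(\mathcal M_2)$ to $\{e_{i^\star}\}$; your extra checks (that $\mathcal M_\lambda\in\mathfrak M$, and that for $\lambda\in(0,1)$ strictness transfers so $e_{i^\star}$ is the unique robust lottery for $\mathcal M_\lambda$) are not in the paper but are valid strengthenings.
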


\subsection{Computing Robust Lotteries}\label{sec:computing_lotteries} 
Our robust-lottery objective is a minimax problem over an ambiguity set of margin matrices. In full generality, this set can be arbitrary, and solving the corresponding minimax program can be computationally prohibitive. This subsection isolates a specific instantiation of uncertainty over the \emph{subpopulation mixture weights} that admits efficient optimization and finite-sample guarantees.

Consider \(K\) subpopulations, each inducing a margin matrix \(M^{(k)} \in \mathbb{M}_m\). Any distribution over subpopulations corresponds to a mixture: 
\[
M(w) := \sum_{k=1}^K w_k M^{(k)}, \qquad w \in \Delta(K). 
\]
To model uncertainty in the mixture weights, we allow \(w\) to vary over a convex and non-empty set \(\mathcal{W} \subseteq \Delta(K)\), which gives the ambiguity set:
\[
\mathcal{M} := \{ M(w) : w \in \mathcal{W} \}.
\]
Fix a reference mixture \(w_0 \in \Delta(K)\). For a radius \(\rho \in [0,1]\), we take \(\mathcal{W}\) to be the $\ell_1$ ball:
\[
\mathcal{W}(w_0, \rho)
:=
\bigl\{ w \in \Delta(K) : \tfrac12 \|w - w_0\|_1 \le \rho \bigr\}.
\]
This choice provides a continuous interpolation between standard maximal lottery over the aggregate matrix (\(\rho = 0\)) and worst-case robustness over all mixtures (\(\rho = 1\)). The $\ell_1$ ball  has a clean distributional interpretation: $\mathcal{W}(w_0, \rho)$ consists of the set of distributions over groups that are within a total variation distance $\rho$ of the original distribution $w_0$. Intuitively, this corresponds to reallocating up to a \(\rho\)-fraction of the mixture weight from some groups to others. This models shifts in prompt or annotator mix without requiring a parametric shift model.

\begin{definition}[Distributionally Robust Lottery]\label{def:dRL}
Consider the robust value function defined over $\mathcal W(w_0, \rho)$:
\begin{equation*}
     V(p, \mathcal W(w_0, \rho))=\;\min_{w \in \mathcal{W}(w_0, \rho)} \;\min_{q \in \Delta(A)} p^\top M(w)\, q . 
\end{equation*}
A distributionally robust lottery (DRL) is any distribution \(p \in \Delta(A)\) that maximizes the worst-case game value over admissible mixtures:
\begin{align}\label{eq:dRL}
p^\star \in \argmax_{p \in \Delta(A)} V(p, \mathcal W(w_0, \rho)).
\end{align}
We refer to the robust game value as $v^\star(\mathcal W(w_0, \rho))$. 
\end{definition}

From a computational standpoint, uncertainty in $\ell_1$ distance leads to a linear program through standard duality, which yields an \(O(mK)\)-size formulation in \cref{thm:lp-tv}.

\begin{theorem}[LP formulation for DRL]\label{thm:lp-tv}
The distributionally robust lottery problem in ~\cref{eq:dRL} can be written as a linear program of size $O(mK)$:
\begin{align*}
\max_{p,t,\mu,\lambda,\gamma}\quad & t\\
\mathrm{s.t.}\quad & \textstyle\sum_{i\in A} p_i=1,\quad p_i\ge 0\ \ \forall i, \\
& t \le \mu_a - 2\rho\,\lambda_a + \textstyle\sum_{k=1}^K w_{0,k}\gamma_{a,k}\quad \forall a\in A,\\
& \mu_a + \gamma_{a,k}\le p^\top M^{(k)} e_a\quad \forall a\in A,\ \forall k\in[K],\\
& -\lambda_a\le \gamma_{a,k}\le \lambda_a\quad \forall a\in A,\ \forall k\in[K],\\
& \lambda_a\ge 0\quad \forall a\in A.
\end{align*} 
\end{theorem}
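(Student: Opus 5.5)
The plan is to reduce the inner double minimization to a minimum over pure opponents, and then dualize the resulting linear program over mixture weights for each opponent separately.

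\textbf{Step 1: reduce to pure opponents.} For fixed $p$ and $w$, the map $q\mapsto p^\top M(w)q$ is linear on the simplex. Its minimum is therefore attained at a vertex $e_a$. Swapping the two minimizations gives
\[
V(p,\mathcal W(w_0,\rho))=\min_{a\in A}\ \min_{w\in\mathcal W(w_0,\rho)} \sum_{k=1}^K w_k c_{a,k}(p),\qquad c_{a,k}(p):=p^\top M^{(k)}e_a .
\]
Each $c_{a,k}(p)$ is linear in $p$. The epigraph variable $t$ must then satisfy $t\le \phi_a(p)$ for every $a$, where
\[
\phi_a(p):=\min_{w\in\mathcal W}\langle w,c_a\rangle .
\]

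\textbf{Step 2: dualize the inner problem for fixed $a$.} Write the inner problem as a linear program in $w$:
\[
\min \langle w,c_a\rangle \quad\text{s.t.}\quad \textstyle\sum_k w_k=1,\quad w\ge 0,\quad \tfrac12\|w-w_0\|_1\le\rho .
\]
Linearize the norm by introducing $s_k\ge |w_k-w_0|$ and the constraint $\sum_k s_k\le 2\rho$. Now assign multipliers: $\mu_a$ (free) to the simplex equality, $\lambda_a\ge0$ to the budget $\sum_k s_k\le 2\rho$, and nonnegative multipliers to the two absolute-value inequalities. Taking their difference, and eliminating $s$ through stationarity, produces a free variable $\gamma_{a,k}$ with $|\gamma_{a,k}|\le\lambda_a$. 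The resulting dual is
\[
\max_{\mu_a,\lambda_a\ge 0,\gamma_a}\ \mu_a-2\rho\lambda_a+\textstyle\sum_k w_{0,k}\gamma_{a,k}
\quad\text{s.t.}\quad \mu_a+\gamma_{a,k}\le c_{a,k}(p),\quad -\lambda_a\le\gamma_{a,k}\le\lambda_a .
\]
The constraint $\mu_a+\gamma_{a,k}\le c_{a,k}$ is dual feasibility for the coordinate $w_k\ge0$. Sign conventions need care and should be checked directly, as a sanity check, via weak duality. For any feasible $w$ and dual point,
\[
\langle w,c_a\rangle\ \ge\ \mu_a+\textstyle\sum_k w_k\gamma_{a,k}
=\mu_a+\textstyle\sum_k w_{0,k}\gamma_{a,k}+\textstyle\sum_k (w_k-w_{0,k})\gamma_{a,k}
\ \ge\ \mu_a+\textstyle\sum_k w_{0,k}\gamma_{a,k}-2\rho\lambda_a .
\]
The first inequality uses $\sum_k w_k=1$ and $w\ge0$. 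The second uses $\|w-w_0\|_1\le2\rho$ together with $|\gamma_{a,k}|\le\lambda_a$. This confirms the dual objective is a valid lower bound. Strong duality holds because the primal is a feasible ($w_0$ is feasible), bounded LP over a compact polytope. Hence $\phi_a(p)$ equals the dual optimum.

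\textbf{Step 3: merge into one LP.} Since $t\le\phi_a(p)$ and $\phi_a(p)$ is a maximum over dual variables, the condition is equivalent to the existence of dual-feasible $(\mu_a,\lambda_a,\gamma_a)$ with $t$ bounded by the dual objective. This is the standard "max of a max" merge. The dual constraints are linear jointly in $(p,\mu,\lambda,\gamma)$, because $c_{a,k}(p)$ is linear in $p$. Introducing a separate copy of the dual variables for each $a$ gives exactly the stated LP. Counting variables ($m+1+2m+mK$) and constraints ($O(mK)$) gives size $O(mK)$. Optimal values and optimal $p$ coincide with those of \cref{eq:dRL}.

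\textbf{Main obstacle.} The key step is the exact derivation of the dual in Step 2. The delicate point is showing that strong duality yields equality rather than just a bound; this follows from LP duality given feasibility and boundedness. The sign conventions that turn the multipliers of $s_k\ge\pm(w_k-w_{0,k})$ into a single free $\gamma_{a,k}$ with $|\gamma_{a,k}|\le\lambda_a$ must also be handled correctly. I would verify that step carefully by writing out the KKT conditions.
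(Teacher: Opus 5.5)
Your proposal is correct and follows the paper's proof essentially step for step: reduce to pure opponents $e_a$, linearize the total-variation constraint with slack variables, dualize each per-opponent inner LP (strong duality from feasibility at $w_0$ and boundedness), and merge the resulting duals with the epigraph variable $t$ into a single LP of size $O(mK)$. Your weak-duality check, and your account of how the two absolute-value multipliers combine into one free $\gamma_{a,k}$ with $|\gamma_{a,k}|\le\lambda_a$, fill in the ``standard duality computation'' that the paper leaves implicit.
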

An immediate benefit of the LP formulation is that it accommodates additional linear constraints on the lottery, e.g., inference budget, latency, or minimum performance on an external benchmark. For example, we exploit this flexibility in \cref{fig:robustfrontier} to compute the cost–performance frontier for the robust lottery with $\rho = 1$.

Moreover, there exists a \textit{sparse} lottery, with support logarithmic in both the number of models and groups, that is $\varepsilon$-close to optimal. This supports the empirical pattern that the robust bipartisan set is typically small compared to the set of alternatives.

\begin{theorem}[Sparse \(\varepsilon\)-optimal DRL]\label{thm:sparse-tv} 
Fix \(w_0\in\Delta(K)\) and \(\varepsilon\in(0,1]\). Consider any $\rho$ satisfying \(\rho \le \min \{\min_{k\in[K]} w_{0,k},\ 1-\max_{k\in[K]} w_{0,k}\}.\) Then there exists \(p\in\Delta(A)\) with 
\[ |\supp(p)| \le\max \left\{ \left(\frac{8}{\varepsilon^2}\log(4m)\right), \: \left(\frac{32\rho^2}{\varepsilon^2}\log(8mK)\right) \right\} \] such that \(V(p,\mathcal W(w_0,\rho))\ge v^\star(\mathcal W(w_0,\rho))-\varepsilon\). 
\end{theorem}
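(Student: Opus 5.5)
The plan is the probabilistic method (Althöfer / Lipton--Young style sparsification). We sample $N$ models i.i.d.\ from an optimal distributionally robust lottery $p^\star$ and take the empirical distribution $\hat p$. Then $|\supp(\hat p)|\le N$, and it remains to show that with positive probability $V(\hat p,\mathcal W(w_0,\rho))\ge V(p^\star,\mathcal W(w_0,\rho))-\varepsilon$. To make the robust value amenable to concentration, we first write it in closed form as a minimum of finitely many simple functionals that are Lipschitz in finitely many linear statistics of $p$.

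\textbf{Step 1 (closed form of the inner minimization).} For each $a\in A$ and $k\in[K]$ set $f_{a,k}(p):=p^\top M^{(k)}e_a\in[-1,1]$ and $g_a(p):=\sum_k w_{0,k}f_{a,k}(p)$. Since the objective is linear in $q$, the inner minimum over $q$ is attained at a pure strategy, so
\[
V(p,\mathcal W(w_0,\rho))=\min_{a\in A}\ \min_{w\in\mathcal W(w_0,\rho)}\ \textstyle\sum_k w_k f_{a,k}(p).
\]
Minimizing a linear function over a total-variation ball of radius $\rho$ around $w_0$ amounts to moving up to $\rho$ mass from high-value groups to low-value groups. The hypothesis $\rho\le\min_k w_{0,k}$ guarantees that any group, in particular an argmax group $k^+$, holds at least $\rho$ mass that can be removed. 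The hypothesis $\rho\le 1-\max_k w_{0,k}$ guarantees that an argmin group $k^-$ can receive $\rho$ mass. Hence the simplex constraint never binds, and the best move is to shift exactly $\rho$ mass from $k^+$ to $k^-$. This gives
\[
V(p,\mathcal W(w_0,\rho))=\min_{a\in A}\Bigl[g_a(p)-\rho\bigl(\max_k f_{a,k}(p)-\min_k f_{a,k}(p)\bigr)\Bigr].
\]
I would verify this formula by a short exchange argument. It is the same computation that underlies the dual variables $\lambda_a,\gamma_{a,k}$ in \cref{thm:lp-tv}.

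\textbf{Step 2 (concentration).} Each $f_{a,k}(\hat p)$ and $g_a(\hat p)$ is an average of $N$ i.i.d.\ variables in $[-1,1]$ with mean $f_{a,k}(p^\star)$, respectively $g_a(p^\star)$. Hoeffding's inequality combined with a union bound over the $m$ functionals $g_a$ gives
\[
|g_a(\hat p)-g_a(p^\star)|\le \varepsilon/2 \quad\text{for all } a
\]
with failure probability at most $2m\,e^{-N\varepsilon^2/8}$. This is at most $1/2$ once $N\ge \frac{8}{\varepsilon^2}\log(4m)$.

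Similarly, a union bound over the $mK$ functionals $f_{a,k}$ gives
\[
|f_{a,k}(\hat p)-f_{a,k}(p^\star)|\le \varepsilon/(4\rho) \quad\text{for all } a,k
\]
with failure probability at most $2mK\,e^{-N\varepsilon^2/(32\rho^2)}$. This is below $1/2$ once $N\ge\frac{32\rho^2}{\varepsilon^2}\log(8mK)$, which has some slack. If $\rho=0$, this second event is vacuous.

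Taking $N$ equal to the stated maximum, both events hold simultaneously with positive probability.

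\textbf{Step 3 (perturbation of the value).} On the good event, $\max_k f_{a,k}$ and $\min_k f_{a,k}$ each move by at most $\varepsilon/(4\rho)$, because max and min are $1$-Lipschitz in sup-norm. Hence the range penalty $\rho(\max_k f_{a,k}-\min_k f_{a,k})$ moves by at most $\varepsilon/2$. Together with the $\varepsilon/2$ deviation of $g_a$, each bracketed term in the closed form drops by at most $\varepsilon$, so the minimum over $a$ does too. Therefore $V(\hat p)\ge v^\star-\varepsilon$.

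The main obstacle is Step 1. Its derivation must use the hypothesis on $\rho$ correctly, including when there are ties for the max or min, so that the robust value becomes a Lipschitz function of only $m+mK$ linear statistics. Without this hypothesis, the worst-case $w$ could depend on the full ordering of the $f_{a,k}$, and the clean bound would need a more careful argument.
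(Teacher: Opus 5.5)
Your proposal is correct and follows the paper's proof essentially step for step. It uses the same closed form $\min_j[\bar f_j(r)-\rho R_j(r)]$ for the robust value (the paper proves the lower bound by H\"older after centering, which is equivalent to your exchange argument), the same Hoeffding thresholds $\varepsilon/2$ and $\eta=\varepsilon/(4\rho)$ with the same union bounds, and the same perturbation step. The only cosmetic difference is that you use a two-sided bound on the mean terms (failure at most $1/2$) where the paper uses a one-sided one (failure at most $1/4$); because your range event fails with probability at most $1/4$, the total failure probability is still at most $3/4<1$.
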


\paragraph{Choosing \(\rho\) from data.}
So far, \(\rho\) controls robustness to shifts in the (unknown) subpopulation weights. Even without explicit distribution shift, the DRL can also stabilize the solution against finite sample error when one only has access to an empirical estimate of weights \(\hat w\). When \(\rho=0\), the DRL reduces to the maximal lottery for the single empirical pooled matrix \(M(\hat w)\). Since \(\hat w\) is random, this can yield a lottery that performs poorly under the \textit{true population} mixture \(M(w^\star)\). The next result gives a data-dependent radius \(\rho(n,\delta)\) that guarantees small regret under \(w^\star\).

Formally, fix group-specific margin matrices \(M^{(1)},\dots,M^{(K)}\in\mathbb M_m\), and let \(w^\star\in\Delta(K)\) denote the (unknown) population mixture that generates the preference dataset. We observe i.i.d.\ group labels \(Z_1,\dots,Z_n\sim w^\star\) and form \(\hat w\in\Delta(K)\), 
\[
\hat w_k:=\frac1n\sum_{t=1}^n \mathbf 1\{Z_t=k\},\qquad k\in[K]. \vspace{-2mm}
\]
The value for a pair \((p, w)\) is $$v(p,w):=\min_{q\in\Delta(A)} p^\top M(w)\,q.$$ Let \(p^\star\) be the maximal lottery for \(M(w^\star)\), then \(v(p^\star,w^\star)=0\). Given \(\rho\ge 0\), we compute the empirical DRL 
\begin{equation}\label{eq:empirical-drl} 
\hat p \in \arg\max_{p\in\Delta(A)}\ \min_{w\in\mathcal W(\hat w,\rho)} v(p,w). 
\end{equation}
We measure performance under the true \(w^\star\) via the regret
\[
\mathrm{Regret}(w^\star):=v(p^\star,w^\star)-v(\hat p,w^\star)=-v(\hat p,w^\star).
\]
\begin{theorem}[Regret bound for DRL]\label{thm:dRL-regret}
Fix \(\delta\in(0,1)\). Set 
\begin{align*}
\rho = \rho(n, \delta)  \;:=\; \min\left\{1, \: \sqrt{\frac{K}{n}}+\sqrt{\frac{2}{n}\log\!\left(\frac{2}{\delta}\right)}\right\},
\end{align*}
and compute the empirical DRL in \cref{eq:empirical-drl}. Then, with probability at least \(1-\delta\), we have that $w^\star\in\mathcal W(\hat w,\rho)$ and 
\begin{align*}
\mathrm{Regret}(w^\star)\;\le\;
4\:\sqrt{\frac{K}{n}} \;+\; 4 \: \sqrt{\frac{2}{n}\log\!\left(\frac{2}{\delta}\right)}.
\end{align*} 
\end{theorem}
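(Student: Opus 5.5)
Our plan has three steps: a concentration bound showing $w^\star\in\mathcal W(\hat w,\rho)$ with high probability, a Lipschitz bound for $v(p,\cdot)$ in $w$, and a sandwich argument that chains the two with the optimality of $\hat p$.

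\textbf{Step 1 (concentration).} We would show that with probability at least $1-\delta$,
\[
\tfrac12\|\hat w-w^\star\|_1\le \sqrt{K/n}+\sqrt{(2/n)\log(2/\delta)},
\]
possibly after adjusting constants. First bound the expectation:
\[
\mathbb E\|\hat w-w^\star\|_1\le\sum_k\sqrt{w^\star_k(1-w^\star_k)/n}\le\sqrt{K/n},
\]
using Jensen and then Cauchy--Schwarz with $\sum_k w^\star_k\le 1$. Next, $f(Z_1,\dots,Z_n)=\|\hat w-w^\star\|_1$ has bounded differences $2/n$, so McDiarmid gives a deviation term $\sqrt{(2/n)\log(1/\delta)}$. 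Halving the $\ell_1$ norm makes the stated $\rho$ more than enough. The $\log(2/\delta)$ leaves room for a two-sided or union bound, and taking the minimum with $1$ is harmless because total variation is at most $1$. On this event $w^\star\in\mathcal W(\hat w,\rho)$.

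\textbf{Step 2 (Lipschitz property in $w$).} For any $p$ and any $w,w'$, every entry of $M^{(k)}$ lies in $[-1,1]$. Hence
\[
|p^\top M(w)q-p^\top M(w')q|\le\|w-w'\|_1 ,
\]
and since a minimum of uniformly Lipschitz functions is Lipschitz, $|v(p,w)-v(p,w')|\le \|w-w'\|_1$. Every $w\in\mathcal W(\hat w,\rho)$ satisfies $\|w-\hat w\|_1\le 2\rho$.

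\textbf{Step 3 (sandwich).} On the event of Step 1 we chain three facts:
\begin{align*}
v(\hat p,w^\star)&\ge\min_{w\in\mathcal W(\hat w,\rho)}v(\hat p,w) && \text{since } w^\star\in\mathcal W(\hat w,\rho),\\
&\ge \min_{w\in\mathcal W(\hat w,\rho)}v(p^\star,w) && \text{by optimality of } \hat p,\\
&\ge v(p^\star,w^\star)-\max_{w\in\mathcal W(\hat w,\rho)}\|w-w^\star\|_1 && \text{by Step 2.}
\end{align*}
By the triangle inequality, every $w\in\mathcal W(\hat w,\rho)$ satisfies
\[
\|w-w^\star\|_1\le\|w-\hat w\|_1+\|\hat w-w^\star\|_1\le 2\rho+2\rho=4\rho .
\]
Since $v(p^\star,w^\star)=0$, this gives $\mathrm{Regret}\le 4\rho$, which is the stated bound.

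\textbf{Main obstacle.} The delicate part is the constant bookkeeping in Step 1: matching the $\tfrac12\|\cdot\|_1$ normalization of $\mathcal W$ with McDiarmid's constant and the $\log(2/\delta)$ factor. Any slack can be absorbed because the stated $\rho$ is generous. If needed, I would instead use the Bretagnolle--Huber--Carol inequality, or bound $\max_S|\hat w(S)-w^\star(S)|$ directly.
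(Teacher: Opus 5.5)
Your proposal is correct and follows essentially the same route as the paper. The paper also proves an $\ell_1$ concentration lemma (Jensen plus Cauchy--Schwarz for the mean, McDiarmid with bounded differences $2/n$), the Lipschitz bound $|v(p,w)-v(p,w')|\le\|w-w'\|_1$, and an oracle inequality that is exactly your three-step sandwich, concluding $\mathrm{Regret}(w^\star)\le 4\rho$.
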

This regret bound gives a theoretically principled way to select $\rho$ when weights are estimated from finite samples. We later demonstrate in experiments the consequences of using various $\rho$ in practice.
\section{Experiments} \label{sec:experiments}

\begin{figure*}[t]
    \centering
    \begin{subfigure}[b]{0.48\linewidth}
        \centering
        \includegraphics[width=\linewidth]{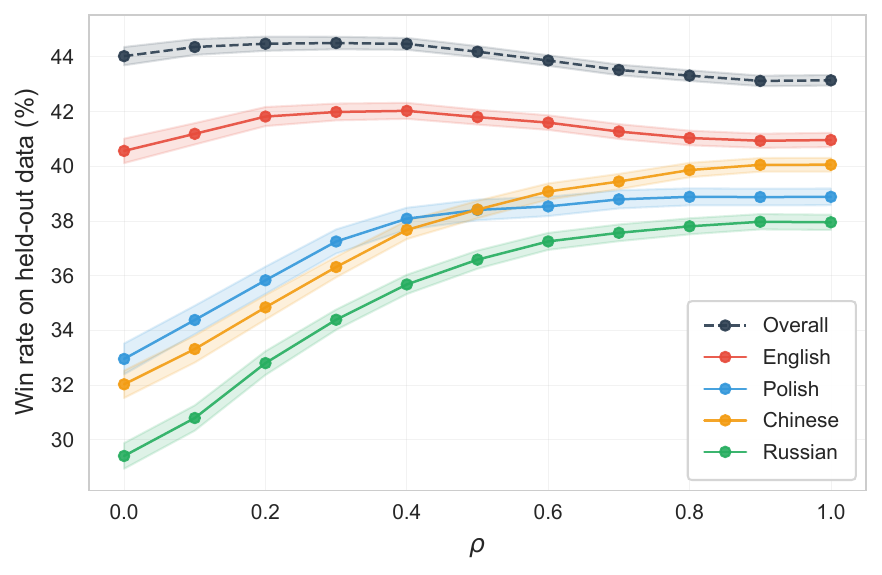}
        \label{fig:winrate_test_lmarena}
    \end{subfigure}
    \hfill
    \begin{subfigure}[b]{0.48\linewidth}
        \centering
        \includegraphics[width=\linewidth]{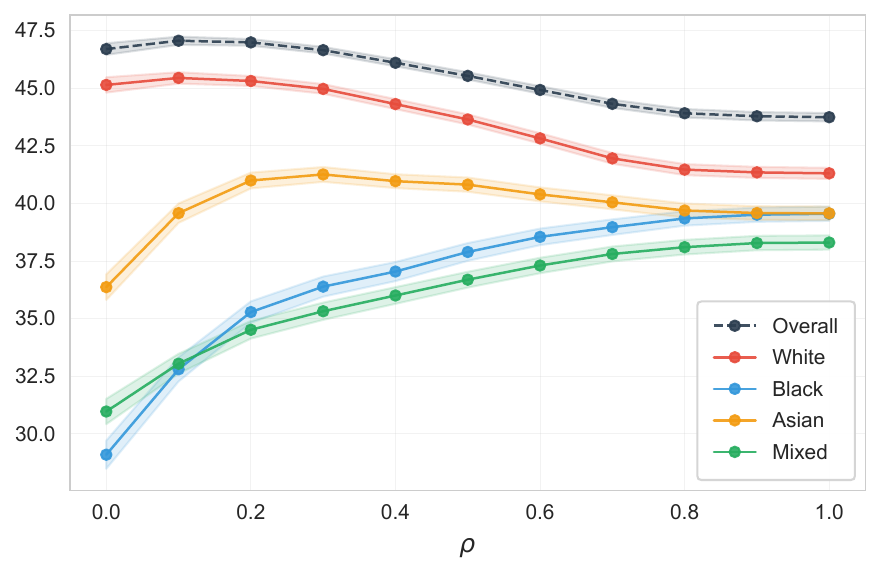}
        \label{fig:winrate_test_humaine}
    \end{subfigure}
    \vspace{-3mm}
    \caption{\textbf{Robust lotteries improve win rate guarantees across subpopulations.} We compute robust lotteries for varying radius values $\rho$ and evaluate each lottery on held-out votes (20\% split). We show bootstrap means of win rates achieved on the overall population and each subgroup with standard errors (200 samples). As $\rho$ increases, robust lotteries improve the win rate guarantees for the lowest-performing groups with a modest decrease for the highest-performing groups, illustrating a robustness--accuracy trade-off. \textbf{Left:} LMArena, with groups defined by prompt language. \textbf{Right:} HUMAINE, with groups defined by annotator's ethnic group.}
    \label{fig:winrate_test}
\end{figure*}

\begin{figure*}[t]
    \centering
    \includegraphics[width=0.9\linewidth]{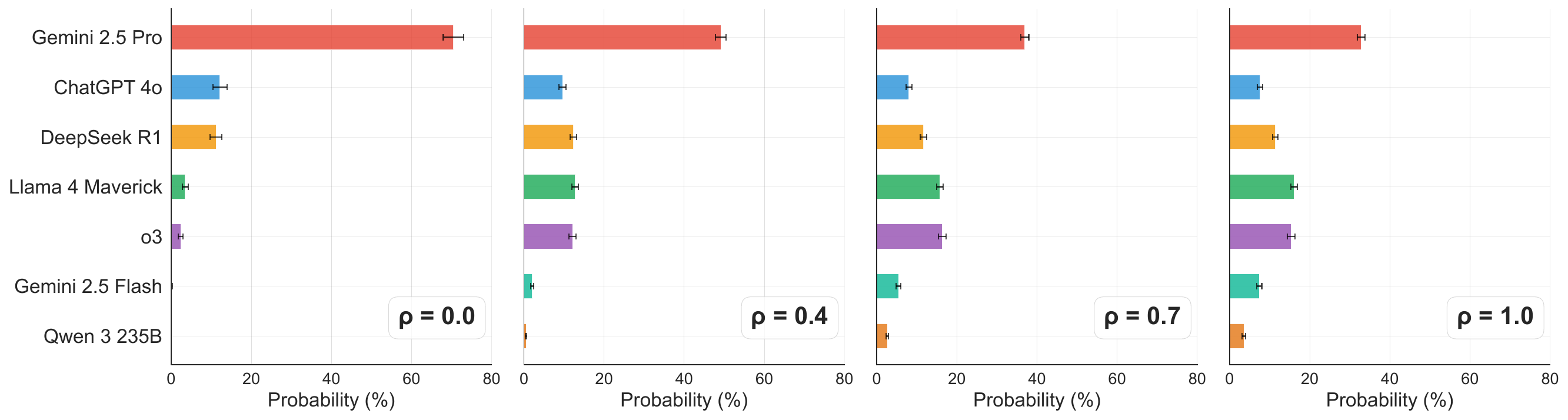} 
    \caption{\textbf{Robust lotteries diversify the lottery to handle preference tradeoffs among subpopulations.} We present the estimated probability assigned to each model with its standard error from LMArena  (200 samples). At $\rho=0$, the lottery concentrates on the top aggregate performer (Gemini~2.5~Pro). As $\rho$ increases, probability mass shifts toward additional strong models (e.g., o3 and Llama~4~Maverick), reflecting the need to hedge to improve win rate guarantees as seen in \cref{fig:winrate_test}.}
    \label{fig:lmarena_bp} 
\end{figure*}

Having established the theoretical properties of robust lotteries, we now evaluate robust lotteries on several real language model leaderboards.

\paragraph{Leaderboard 1: LMArena \cite{chiang_chatbot_2024}.} We consider the most recent publicly released datasets from LMArena as of time of submission. We categorize annotators into groups based on the language of the prompts they themselves generated, which is the only identifier provided in the dataset.

\paragraph{Leaderboard 2: HUMAINE \cite{humaine2025}.} HUMAINE is a collection of pairwise preferences across diverse demographic groups and conversation contexts. We categorize annotators based on their expressed ethnic group.

We provide details on the experimental setup and results on two additional leaderboards (SearchArena and Open LLM) in \cref{sec:additional_experiments}.  

\subsection{Results}
\textbf{Preferences vary sharply across groups.} Different groups often prefer different models, so a single pooled ranking or maximal lottery can obscure heterogeneity. To quantify this, we compute the probability of a \emph{preference reversal} across groups for each model pair \((i,j)\): 

\begin{equation*}\label{eq:reversal}
    \Pr\!\left[\mathrm{sign}\!\bigl(M^{(k)}_{ij}\bigr)\neq \mathrm{sign}\!\bigl(M^{(k')}_{ij}\bigr)\right],
\:\: k,k' \stackrel{\text{i.i.d.}}{\sim} \hat w,\ \ k\neq k', 
\end{equation*}
where \(\hat w\) is the empirical group-frequency distribution in the dataset. In LMArena, the five model pairs with the largest disagreement have reversal rates of \(42\%\)--\(47\%\), typically driven by English versus non-English prompt categories (e.g., Chinese, Polish, Russian); see \cref{fig:lmarena_heterogeneity} in \cref{sec:additional_experiments}. For example, Gemini~2.0~Flash is substantially more preferred in non-English groups than the overall stronger model Opus~4 Thinking.

\textbf{Robust lotteries improve worst-group test performance (\cref{fig:winrate_test}).}
We compute robust lotteries for a range of radii \(\rho\).  Across leaderboards, increasing \(\rho\) improves the worst-group win rate on held-out votes, with a modest decrease for the best-performing groups. Additionally, increasing \(\rho\) reduces the gap between train and test win rates on LMArena from roughly \(6\%\) at \(\rho=0\) (the standard maximal lottery) to about \(2\%\) at \(\rho=1\), indicating improved stability.

\textbf{Robust lotteries identify a set of frontier models (\cref{fig:lmarena_bp}).} Improving the worst-group win rate requires placing additional weight on models that are comparatively stronger in the lower-performing groups. As \(\rho\) increases, the learned mixture expands beyond the top aggregate model and assigns mass to a small set of complementary models. 
\section{Discussion}\label{sec:discussion} \vspace{-1mm}
AI evaluation inevitably relies on a sequence of reductions, since complex preference data -- often collected from thousands of users -- is mapped onto a single human-understandable ranking or set of ``good'' models. However, if we ignore the inherent heterogeneity in preferences, our findings show that we risk rewarding models that fail on specific tasks or user populations when deployed in the wild. Robust lotteries are a step towards addressing this issue, offering an evaluation method that captures variations across users and tasks. While it does not directly provide a ``leaderboard" -- see the work of \citet{lanctot2025evaluatingagentsusingsocial} that extends lotteries to a ranking -- it offers practitioners the ability to navigate tradeoffs in the model ecosystem and find the best set of models under real-world constraints. Ultimately, prioritizing pluralistic approaches that consider worst-case performance, like robust lotteries, is a necessary step for ensuring that AI systems work for everyone.
\section*{Acknowledgments}

This work was partially supported by the National Science Foundation under grant IIS-2229881, CIF-2231707, and CIF-2312667; by the Office of Naval Research under grants N00014-24-1-2704 and N00014-25-1-2153;  by grants from the Cooperative AI Foundation, the Foresight Institute, and Coefficient Giving; and a faculty award by JPMorganChase.

\newpage
\bibliographystyle{unsrtnat}
\bibliography{biblio}

\newpage
\clearpage
\appendix


\counterwithin{figure}{section}
\counterwithin{table}{section}
\counterwithin{theorem}{section}

\onecolumn
\section{Additional Proofs} \label{sec:appendix}
In this section, we provide additional results and proofs.
\begin{definition}[Admissible Robust Score]\label{def:admissible-robust-score}
A functional $V: \Delta(A) \times \mathfrak M \to\mathbb R$ is an admissible robust score if for every lottery $p\in\Delta(A)$ the following hold:
\begin{enumerate}
\item For all $M\in\mathbb M_m$,
\(
V(p, \{M\})=\min_{q\in\Delta(A)} p^\top M q.
\)
\item For all $\mathcal M_1,\mathcal M_2\in\mathfrak M$,
\[
V(p, \mathcal M_1\vee \mathcal M_2)=\min\{V(p, \mathcal M_1),\,V(p, \mathcal M_2)\}
\]
where \(\mathcal M_1\vee \mathcal M_2\) is the smallest convex ambiguity set that contains both \(\mathcal M_1\) and \(\mathcal M_2\).
\item The map $\mathcal M\mapsto V(p, \mathcal M)$ is continuous on $\mathfrak M$ with respect to the Hausdorff metric induced by $\|\cdot\|_\infty$.
\end{enumerate}
\end{definition}

\begin{theorem}[Characterization of Robust Lotteries]\label{thm:axiom-robust}
Let \(V\) be an admissible robust score in the sense of Definition~\ref{def:admissible-robust-score}.
Then for every lottery \(p\in\Delta(A)\) and ambiguity set \(\mathcal M\in\mathfrak M\),
\[
V(p, \mathcal M)= \min_{M\in\mathcal M}\min_{q\in\Delta(A)} p^\top M q.
\]
The social choice rule induced by $V$ is
\[
F(\mathcal M) := \arg\max_{p \in \Delta(A)} V(p, \mathcal M).
\]
Consequently, the induced correspondence is exactly the robust lottery. Moreover, this is the unique score-based rule satisfying Definition~\ref{def:admissible-robust-score}.
\end{theorem}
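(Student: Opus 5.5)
The plan is to show that the three axioms force $V(p,\cdot)$ to agree with $W(p,\mathcal M):=\min_{M\in\mathcal M}\min_q p^\top M q$, first on polytopes and then on all of $\mathfrak M$ by density. Throughout I fix $p$ and write $g(M):=\min_{q\in\Delta(A)} p^\top M q$. This $g$ is a minimum of linear functions of $M$, so it is concave and $1$-Lipschitz in $\|\cdot\|_\infty$. Indeed, $|p^\top(M-M')q|\le \|M-M'\|_\infty$ for $p,q$ in the simplex when $\|\cdot\|_\infty$ is the entrywise max norm; for other matrix norms the constant changes, but only continuity is needed.

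\textbf{Step 1 (finite convex hulls).} For finitely many $M_1,\dots,M_r\in\mathbb M_m$, I will show by induction on $r$ that
\[
V(p,\mathrm{conv}\{M_1,\dots,M_r\})=\min_{s} g(M_s).
\]
The base case is axiom 1. For the inductive step, note that $\mathrm{conv}\{M_1,\dots,M_{r}\}=\mathrm{conv}\{M_1,\dots,M_{r-1}\}\vee\{M_r\}$, since the smallest convex set containing both pieces is the convex hull of their union. Axiom 2 then gives the claim.

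Next, concavity of $g$ means its minimum over a polytope is attained at a vertex. Hence $\min_s g(M_s)=\min_{M\in \mathrm{conv}\{M_s\}} g(M)=W(p,\mathrm{conv}\{M_s\})$, so $V=W$ on all polytopes in $\mathfrak M$.

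\textbf{Step 2 (density and continuity).} Take an arbitrary nonempty compact convex $\mathcal M\subseteq\mathbb M_m$ and $\varepsilon>0$. Compactness gives a finite $\varepsilon$-net $\{M_1,\dots,M_r\}\subseteq\mathcal M$. Set $P_\varepsilon:=\mathrm{conv}\{M_s\}$. Then $P_\varepsilon\subseteq\mathcal M$, because $\mathcal M$ is convex, and every point of $\mathcal M$ lies within $\varepsilon$ of some $M_s$. So the Hausdorff distance between $P_\varepsilon$ and $\mathcal M$ is at most $\varepsilon$, and $P_\varepsilon\in\mathfrak M$ since it is skew-symmetric with entries in $[-1,1]$.

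Axiom 3 gives $V(p,P_\varepsilon)\to V(p,\mathcal M)$. On the other side, $W(p,\cdot)$ is itself Hausdorff-continuous: the Lipschitz bound on $g$ gives $|\min_{\mathcal M} g-\min_{\mathcal M'} g|\le d_H(\mathcal M,\mathcal M')$, so $W(p,P_\varepsilon)\to W(p,\mathcal M)$. Since $V=W$ on each $P_\varepsilon$ by Step 1, letting $\varepsilon\to0$ yields $V(p,\mathcal M)=W(p,\mathcal M)$.

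\textbf{Step 3 (consequences).} Because $V(p,\mathcal M)=W(p,\mathcal M)$ for every $p$, the argmax sets coincide, so $F(\mathcal M)=\mathrm{RL}(\mathcal M)$ by Definition~\ref{def:robust_lotteries}. Uniqueness of the score-based rule is immediate, since any admissible score equals $W$.

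For completeness I will also check that $W$ is admissible, i.e., that the three axioms are consistent. Axiom 1 is immediate. Axiom 3 is the Lipschitz estimate from Step 2. For axiom 2, note that $\mathcal M_1\vee\mathcal M_2=\mathrm{conv}(\mathcal M_1\cup\mathcal M_2)$, which is compact as the hull of a compact set in finite dimensions. By concavity of $g$, its minimum over the hull equals its minimum over $\mathcal M_1\cup\mathcal M_2$, since $g(\sum\theta_s N_s)\ge\min_s g(N_s)$.

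\textbf{Main obstacle.} The main obstacle is Step 2: the argument needs $\mathfrak M$ to contain the approximating polytopes and needs Hausdorff continuity of $W$. Both are handled as above, and everything else is direct bookkeeping.
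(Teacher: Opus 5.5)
Your proposal is correct and follows essentially the same route as the paper: axioms 1 and 2 settle the value on finite convex hulls, and an $\varepsilon$-net approximation combined with axiom 3 and the $1$-Lipschitz bound extends it to all of $\mathfrak M$. The only cosmetic difference is that you use concavity to identify the polytope value with $W(p,P_\varepsilon)$ and then invoke Hausdorff continuity of $W$, whereas the paper directly bounds the minimum over the net between $\min_{M\in\mathcal M} v(p,M)$ and $\min_{M\in\mathcal M} v(p,M)+\varepsilon$.
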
 
\begin{proof}[Proof of~\cref{thm:axiom-robust}]
Fix $p\in\Delta(A)$ and consider
\(
v(p, M) :=\min_{q\in\Delta(A)} p^\top M q
\).
For any $M,M'\in\mathbb M_m$, we have that:
\begin{align*}
|v(p, M)-v(p,M')|
& =\Bigl|\min_{q} p^\top M q-\min_{q} p^\top M' q\Bigr|
 \le \max_{q\in\Delta(A)} |p^\top (M-M')q|
\le \|M-M'\|_\infty,
\end{align*}
since $p^\top (M-M')q$ is a convex combination of the entries of $M-M'$. 

Assume there exists $V$ satisfying the three stated conditions. Let $\mathcal M\in\mathfrak M$. We claim that for every $p\in\Delta(A)$,
\begin{equation}\label{eq:U-is-min}
V(p, \mathcal M) = \min_{M\in\mathcal M} v(p, M).
\end{equation}
To prove \cref{eq:U-is-min}, we begin with the case where $\mathcal M$ is the convex hull of finitely many matrices. Consider $M^{(1)},\dots,M^{(k)}\in\mathbb M_m$ and set $\mathcal P:=\operatorname{conv}\{M^{(1)},\dots,M^{(k)}\}\in\mathfrak M$. By definition of the operator $\vee$, we have $\mathcal P=\{M^{(1)}\}\vee\cdots\vee\{M^{(k)}\}$, so repeated use of property (2) yields
\[
V(p, \mathcal P)=\min_{1\le i\le k} V(p, \{M^{(i)}\}).
\]
By property (1), $V(p, \{M^{(i)}\})=v(p,M^{(i)})$, hence
\begin{equation}\label{eq:polytope}
V(p, \mathcal P)=\min_{1\le i\le k} v(p,M^{(i)}).
\end{equation}
Now consider an arbitrary $\mathcal M\in\mathfrak M$. For each $\varepsilon>0$, choose a finite $\varepsilon$-net $\mathcal N_\varepsilon\subset\mathcal M$ in $\|\cdot\|_\infty$ and set $\mathcal P_\varepsilon:=\operatorname{conv}(\mathcal N_\varepsilon)$. Then $\mathcal P_\varepsilon\subseteq\mathcal M$, and every $M\in\mathcal M$ lies within $\varepsilon$ (in $\|\cdot\|_\infty$) of some element of $\mathcal N_\varepsilon\subseteq \mathcal P_\varepsilon$. It follows that the Hausdorff distance satisfies $d_H(\mathcal P_\varepsilon,\mathcal M)\le \varepsilon$. By property (3),
\[
V(p, \mathcal M) = \lim_{\varepsilon\to 0} V(p, \mathcal P_\varepsilon).
\]
Using \cref{eq:polytope} for $\mathcal P_\varepsilon=\operatorname{conv}(\mathcal N_\varepsilon)$,
\[
V(p, \mathcal P_\varepsilon)=\min_{M\in\mathcal N_\varepsilon} v(p, M).
\]
We compare this minimum over the net to $\min_{M\in\mathcal M} v(p, M)$. Since $\mathcal N_\varepsilon\subseteq\mathcal M$,
\begin{equation}\label{eq:one-side}
\min_{M\in\mathcal M} v(p, M)\ \le\ \min_{M\in\mathcal N_\varepsilon} v(p, M).
\end{equation}
Conversely, compactness of $\mathcal M$ and continuity of $v(p, \cdot) $ imply the minimum is attained since we can choose $M^\star\in\arg\min_{M\in\mathcal M}v(p, M)$. Pick $\hat M\in\mathcal N_\varepsilon$ with $\|M^\star-\hat M\|_\infty\le \varepsilon$. By the Lipschitz estimate above,
\[
v(p, \hat M)\le v(p, M^\star)+\varepsilon=\min_{M\in\mathcal M}v(p, M)+\varepsilon,
\]
hence
\begin{equation}\label{eq:other-side}
\min_{M\in\mathcal N_\varepsilon}v(p, M)\ \le\ \min_{M\in\mathcal M}v(p, M)+\varepsilon.
\end{equation}
Combining \cref{eq:one-side} and \cref{eq:other-side} and taking $\varepsilon\to 0$ gives
\[
\lim_{\varepsilon\to 0}\min_{M\in\mathcal N_\varepsilon}v(p, M)=\min_{M\in\mathcal M} v(p, M).
\]
Therefore,
\begin{align*}
V(p, \mathcal M) & =\lim_{\varepsilon\to 0} V(p, \mathcal P_\varepsilon)  =\lim_{\varepsilon\to 0}\min_{M\in\mathcal N_\varepsilon} v(p, M)
 =\min_{M\in\mathcal M} v(p, M),
\end{align*}
which proves \cref{eq:U-is-min} and hence $F=\mathrm{RL}$.

For the converse direction, assume $F=\mathrm{RL}$ and define
\[
V(p, \mathcal M):=\min_{M\in\mathcal M}\min_{q\in\Delta(A)} p^\top M q.
\]
Property (1) is immediate. Property (2) holds because for fixed $p$ the map $M\mapsto v(p, M)$ is concave since it is the pointwise minimum of linear functionals in $M$. The minimum of $v(p, \cdot)$ over $\operatorname{conv}(\mathcal M_1\cup\mathcal M_2)$ equals its minimum over $\mathcal M_1\cup\mathcal M_2$, which is $\min\{\min_{\mathcal M_1}v(p, \cdot),\min_{\mathcal M_2} v(p, \cdot)\}$. Property (3) follows from the Lipschitz bound $|v(p, M)-v(p, M')|\le \|M-M'\|_\infty$ and the fact that taking a minimum of a uniformly continuous function over a compact set is continuous with respect to the Hausdorff metric. By construction $F(\mathcal M)=\arg\max_p V(p, \mathcal M)$, so the stated representation holds.
\end{proof}

\begin{proof}[Proof of basic properties of robust lotteries] Recall $V(p, \mathcal{M})\coloneq \min_{M\in\mathcal M}\min_{q\in\Delta(A)} p^\top M q$
and $v^\star(\mathcal M)\coloneq \max_{p\in\Delta(A)} V(p, \mathcal{M})$, with
$\mathrm{RL}(\mathcal M)=\arg\max_{p\in\Delta(A)} V(p, \mathcal{M})$.

\noindent\textbf{Existence and convexity.}
For each fixed $(M,q)$, the map $p\mapsto p^\top M q$ is continuous and linear.
Hence $p\mapsto V(p, \mathcal{M})$, being the pointwise infimum of a family of continuous linear
functions, is upper semi-continuous and concave on the compact set $\Delta(A)$.
Therefore $V(\cdot,\mathcal M)$ attains a maximum on $\Delta(A)$, so $\mathrm{RL}(\mathcal M)$ is non-empty.

To see convexity, let $p_1,p_2\in \mathrm{RL}(\mathcal M)$ and let $v^\star=v^\star(\mathcal M)$.
For any $\lambda\in[0,1]$ set $p_\lambda=\lambda p_1+(1-\lambda)p_2$.
By concavity of $V(\cdot,\mathcal M)$,
\[
V(p_\lambda,\mathcal M)\;\ge\;\lambda V(p_1,\mathcal M)+(1-\lambda)V(p_2,\mathcal M)
=\lambda v^\star+(1-\lambda)v^\star=v^\star.
\]
Since $v^\star$ is the maximum value, $V(p_\lambda,\mathcal M)=v^\star$ and thus
$p_\lambda\in \mathrm{RL}(\mathcal M)$. Hence $\mathrm{RL}(\mathcal M)$ is convex.

\noindent\textbf{Neutrality.} It is immediate that robust lotteries are invariant under relabeling groups. We prove invariance with respect to relabeling alternatives. Let $\sigma$ be a permutation with associated permutation matrix $P_\sigma$, and define
$\mathcal{M}^\sigma = \{ P_\sigma M P_\sigma^\top : M \in \mathcal{M} \}$.
For all $p,q\in\Delta(A)$ and $M\in\mathcal M$,
\[
(P_\sigma p)^\top (P_\sigma M P_\sigma^\top) (P_\sigma q) = p^\top M q.
\]
Since $\Delta(A)$ is invariant under $P_\sigma$, it follows that
$V(P_\sigma p,\mathcal M^\sigma)=V(p, \mathcal{M})$ for all $p$, and therefore
$\mathrm{RL}(\mathcal M^\sigma)=P_\sigma\,\mathrm{RL}(\mathcal M)$.

\noindent\textbf{Monotonicity.}
If $\mathcal M_1\subseteq \mathcal M_2$, then for any $p\in\Delta(A)$,
\[
V(p,\mathcal M_2)=\min_{M\in\mathcal M_2}\min_{q\in\Delta(A)} p^\top M q
\;\le\;
\min_{M\in\mathcal M_1}\min_{q\in\Delta(A)} p^\top M q
=V(p,\mathcal M_1).
\]
Taking $\max_{p\in\Delta(A)}$ on both sides yields $v^\star(\mathcal M_2)\le v^\star(\mathcal M_1)$.
\end{proof}

\begin{proof}[Proof of~\cref{thm:rcc}] Since $q\mapsto p^\top M q$ is linear and $\Delta(A)=\operatorname{conv}\{e_j:j\in A\}$, we have
\begin{equation}\label{eq:vertex}
\min_{q\in\Delta(A)} p^\top M q=\min_{j\in A} p^\top M e_j.
\end{equation}
Also, for any $p\in\Delta(A)$ and any skew-symmetric $M$, we have $p^\top M p=0$, hence
\[
\min_{q\in\Delta(A)} p^\top M q \le p^\top M p=0
\]
Hence, $V(p, \mathcal{M}) \le 0$. Assume $i^\star$ is a robust Condorcet winner, i.e.\ $M_{i^\star j}\ge 0$ for all $j\in A$ and all $M\in\mathcal M$.
Fix $M\in\mathcal M$. By \cref{eq:vertex},
\[
\min_{q\in\Delta(A)} e_{i^\star}^\top M q
=\min_{j\in A} e_{i^\star}^\top M e_j
=\min_{j\in A} M_{i^\star j}.
\]
By the RCW condition, $M_{i^\star j}\ge 0$ for all $j$, and $M_{i^\star i^\star}=0$, so $\min_{j}M_{i^\star j}=0$. Therefore, we have that
\[
V(e_{i^\star},\mathcal M)=\min_{M\in\mathcal M} 0 = 0.
\]
Since $V(p, \mathcal{M})\le 0$ for all $p$, it follows that $e_{i^\star}$ maximizes $V(\cdot,\mathcal M)$, i.e.
$e_{i^\star}\in \mathrm{RL}(\mathcal M)$.

As for uniqueness, $V(e_{i^\star},\mathcal M)=0$, hence $v^\star(\mathcal M)=0$. Let $p\in\Delta(A)$ with $p\neq e_{i^\star}$. Then there exists $j\neq i^\star$ with $p_j>0$. Choose $M^{(j)}\in\mathcal M$ such that $M^{(j)}_{i^\star j}>0$. Take the pure action $q=e_{i^\star}$. Then
\[
p^\top M^{(j)} q = p^\top M^{(j)} e_{i^\star} = \sum_{a\in A} p_a\, M^{(j)}_{a i^\star}.
\]
By skew-symmetry, $M^{(j)}_{a i^\star}=-M^{(j)}_{i^\star a}\le 0$ for all $a$ (since $i^\star$ is an RCW), and moreover for $a=j$ we have
\[
M^{(j)}_{j i^\star}=-M^{(j)}_{i^\star j}<0.
\]
Since $p_j>0$, it follows that
\[
p^\top M^{(j)} e_{i^\star}<0.
\]
Therefore
\[
V(p, \mathcal{M})
=\min_{M\in\mathcal M}\min_{q\in\Delta(A)} p^\top M q
\le \min_{q\in\Delta(A)} p^\top M^{(j)} q
\le p^\top M^{(j)} e_{i^\star}
<0.
\]
Thus every $p\neq e_{i^\star}$ achieves strictly negative robust value, while $e_{i^\star}$ achieves $0=v^\star(\mathcal M)$. Hence $\mathrm{RL}(\mathcal M)=\{e_{i^\star}\}$.
\end{proof}

\begin{proof}[Proof of \cref{thm:robust-dominance}]
Let $p\in\Delta(A)$ with $p_y>0$ and define $p':=p+p_y(e_x-e_y)$. Fix any $M\in\mathcal M$ and any $j\in A$. Using $e_x^\top M e_j=M_{xj}$,
\[
p'^\top M e_j - p^\top M e_j
= p_y(e_x-e_y)^\top M e_j
= p_y(M_{xj}-M_{yj})
> 0.
\]
Thus $p'^\top M e_j> p^\top M e_j$ for all $j$, hence
\[
\min_{j\in A} p'^\top M e_j \ >\ \min_{j\in A} p^\top M e_j.
\]
Taking $\min_{M\in\mathcal M}$ and using $\min_{q\in\Delta(A)}p^\top M q=\min_j p^\top M e_j$ gives
$V(p',\mathcal M)> V(p, \mathcal{M})$.

Therefore, any $p\in\Delta(A)$ with $p_y>0$ can be strictly improved (in robust value) by moving its mass on $y$ to $x$. In particular, no maximizer of $V(\cdot,\mathcal M)$ can assign positive probability to $y$, so every robust lottery satisfies $p_y=0$.
\end{proof}

\begin{proof}[Proof of \cref{thm:weak_clone_invariance}]
For \(\tilde r\in\Delta(\tilde A)\), write \(\Pi\tilde r\in\Delta(A)\) for its projected lottery as in Definition~\ref{def:weak_clones}. Let \(\tilde p\in \mathrm{RL}(\tilde{\mathcal M})\) and define \(\bar p\in\Delta(\tilde A)\) by pushing all clone mass onto \(i\):
\[
\bar p_a=\tilde p_a\ \ (a\neq i),\qquad
\bar p_i=\tilde p_i+\sum_{c\in C}\tilde p_c,\qquad
\bar p_c=0\ \ (c\in C).
\]
By construction, \(\Pi\bar p=\Pi\tilde p\).

Fix \(M\in\mathcal M\) and let \(\tilde M=f(M)\in\tilde{\mathcal M}\) be its weak-clone expansion.
For any \(\tilde q\in\Delta(\tilde A)\),
\begin{align*}
\bar p^\top \tilde M \tilde q-\tilde p^\top \tilde M \tilde q
&=\sum_{c\in C}\tilde p_c\Bigl(\tilde M_{i\cdot}\tilde q-\tilde M_{c\cdot}\tilde q\Bigr)\\
&=\sum_{c\in C}\tilde p_c\sum_{x\in\tilde A}(\tilde M_{ix}-\tilde M_{cx})\tilde q_x
\ \ge\ 0,
\end{align*}
since \(\tilde M_{cx}\le \tilde M_{ix}\) for all \(c\in C\) and \(x\in\tilde A\). Taking \(\min_{\tilde q\in\Delta(\tilde A)}\) and then \(\min_{\tilde M\in\tilde {\mathcal M}}\) yields \(V(\bar p,\tilde{\mathcal M})\ge V(\tilde p,\tilde{\mathcal M})\). Because \(\tilde p\) maximizes \(V(\cdot,\tilde{\mathcal M})\), it follows that \(\bar p\in \mathrm{RL}(\tilde{\mathcal M})\).

Set \(p:=\Pi\bar p\in\Delta(A)\). We show \(p\in\mathrm{RL}(\mathcal M)\). Fix \(M\in\mathcal M\) and \(\tilde M=f(M)\). For any \(\tilde q\in\Delta(\tilde A)\), let \(q:=\Pi\tilde q\in\Delta(A)\).
Since \(\bar p\) is supported on \(A\),
\begin{align*}
\bar p^\top \tilde M \tilde q
&=\sum_{a\in A} p_a\sum_{x\in\tilde A}\tilde M_{ax}\tilde q_x
=\sum_{a\in A} p_a\Bigl(\sum_{b\in A}M_{ab}\tilde q_b+\sum_{c\in C}\tilde M_{ac}\tilde q_c\Bigr).
\end{align*}
For each \(a\in A\) and \(c\in C\), skew-symmetry gives \(\tilde M_{ac}=-\tilde M_{ca}\), and the weak-clone condition with \(x=a\in A\) gives
\(\tilde M_{ca}\le \tilde M_{ia}=M_{ia}\). Hence
\[
\tilde M_{ac}=-\tilde M_{ca}\ \ge\ -M_{ia}=M_{ai}.
\]
Therefore,
\begin{align*}
\bar p^\top \tilde M \tilde q
&\ge \sum_{a\in A} p_a\Bigl(\sum_{b\in A}M_{ab}\tilde q_b+\sum_{c\in C}M_{ai}\tilde q_c\Bigr)
=\sum_{a,b\in A} p_a M_{ab} q_b
=p^\top M q.
\end{align*}
Minimizing over \(\tilde q\) gives
\[
\min_{\tilde q\in\Delta(\tilde A)} \bar p^\top \tilde M \tilde q
\ \ge\
\min_{q\in\Delta(A)} p^\top M q.
\]
Conversely, for any \(q\in\Delta(A)\), define \(\tilde q\in\Delta(\tilde A)\) by \(\tilde q|_A=q\) and \(\tilde q_c=0\) for all \(c\in C\). Then \(\Pi\tilde q=q\) and, since \(\tilde M\) agrees with \(M\) on \(A\times A\), \(\bar p^\top \tilde M \tilde q=p^\top M q\). Thus
\[
\min_{\tilde q\in\Delta(\tilde A)} \bar p^\top \tilde M \tilde q
\ \le\
\min_{q\in\Delta(A)} p^\top M q.
\]
Combining yields, for each \(M\in\mathcal M\),
\[
\min_{\tilde q\in\Delta(\tilde A)} \bar p^\top \tilde M \tilde q
=
\min_{q\in\Delta(A)} p^\top M q.
\]
Minimizing over \(M\in\mathcal M\) gives
\[
V(\bar p,\tilde{\mathcal M})=V(p,\mathcal M).
\]
Now let \(r\in\Delta(A)\) be arbitrary and lift it to \(\tilde r\in\Delta(\tilde A)\) by \(\tilde r|_A=r\) and \(\tilde r_c=0\) for all \(c\in C\). The same argument as above shows \(V(\tilde r,\tilde{\mathcal M})=V(r,\mathcal M)\). Since \(\bar p\in\mathrm{RL}(\tilde{\mathcal M})\), we have \(V(\bar p,\tilde{\mathcal M})\ge V(\tilde r,\tilde{\mathcal M})\), hence \(V(p,\mathcal M)\ge V(r,\mathcal M)\) for all \(r\). Therefore \(p\in\mathrm{RL}(\mathcal M)\).

Finally, \(\Pi\tilde p=\Pi\bar p=p\), so the projected lottery of \(\tilde p\) is a robust lottery for \(\mathcal M\), proving the first claim of the theorem. The second claim (weights and bipartisan membership for \(j\neq i\)) is immediate from the definition of the projection.

For the last claim, suppose \(i\) appears in the robust bipartisan set for \(\mathcal M\). Then there exists \(p\in\mathrm{RL}(\mathcal M)\) with \(p_i>0\). Let \(\tilde p\in\Delta(\tilde A)\) be its lift with \(\tilde p|_A=p\) and \(\tilde p_c=0\) for all \(c\in C\). As above, \(V(\tilde p,\tilde{\mathcal M})=V(p,\mathcal M)\), and since \(p\) is optimal on \(A\), \(\tilde p\) is optimal on \(\tilde A\). In particular, \(\tilde p_i=p_i>0\), so \(i\) still appears after cloning.
\end{proof}

\begin{proof}[Proof of \cref{thm:pop-inconsistency}]
We give a counterexample for $m=3$. This suffices for all $m\ge 3$ since one may embed any $3\times 3$ margin matrix into an $m\times m$ one by adding $m-3$ robustly dominated alternatives. Let $A=\{1,2,3\}$. Remember that
\begin{equation*}\label{eq:u-vertex}
v(p,M) = \min_{a\in A} p^\top M e_a.
\end{equation*}
Consider the three skew-symmetric matrices
\begin{align*}
&M^{a}=
\begin{pmatrix}
0&-1&-1\\
1&0&-1\\
1&1&0
\end{pmatrix}, \quad
M^{b}=
\begin{pmatrix}
0&-1&-1\\
1&0&1\\
1&-1&0
\end{pmatrix}, \quad M^{c}=
\begin{pmatrix}
0&0&1\\
0&0&0\\
-1&0&0
\end{pmatrix}.
\end{align*}
Let
\(
\mathcal M_1:=\operatorname{conv}\{M^{a},M^{b}\}\) and \(\mathcal M_2:=\operatorname{conv}\{M^{a},M^{c}\}\). 

Consider $p^\star:=\Bigl(0,\frac12,\frac12\Bigr)$. We will show it is the robust maximal lottery for both sets $\mathcal M_1$ and $\mathcal M_2$.

Since $\mathcal M_1$ is the convex hull of two points and $v(p, \cdot)$ is concave, we have that
\[
V(p, \mathcal M_1)=\min\{v(p, M^a),v(p, M^b)\}.
\]
for every $p$. A direct computation gives
\[
p^\star{}^\top M^a=(1,\tfrac12,-\tfrac12),\qquad p^\star{}^\top M^b=(1,-\tfrac12,\tfrac12),
\]
so $v(p^\star, M^a)=v(p^\star, M^b)=-\tfrac12$, hence
\begin{equation}\label{eq:Vm1pstar}
V(p^\star, \mathcal M_1)=-\frac12.
\end{equation}
Now let $p=(p_1,p_2,p_3)\in\Delta(A)$. Note that
\begin{align*}
  p^\top M^a e_3=-p_1-p_2=p_3-1, \qquad p^\top M^b e_2=-p_1-p_3=p_2-1.  
\end{align*}
Therefore $v(p, M^a)\le p_3-1$ and $v(p, M^b)\le p_2-1$, hence
\begin{align*}
    V(p, \mathcal M_1)&=\min\{v(p, M^a),v(p, M^b) \} \le \min\{p_3-1,\;p_2-1\}.
\end{align*}
If $p_3\le \tfrac12$ then $p_3-1\le -\tfrac12$. Otherwise $p_3>\tfrac12$ implies $p_2<\tfrac12$, hence
$p_2-1<-\tfrac12$. 

In either case, $V(p, \mathcal M_1)\le -\tfrac12$ for all $p$. Together with
\cref{eq:Vm1pstar}, this shows $p^\star\in\mathrm{RL}(\mathcal M_1)$. Following the same arguments, we have that $p^\star\in\mathrm{RL}(\mathcal M_2)$.

Let $\lambda=\tfrac12$ and $\mathcal M_{1/2}:=\tfrac12\mathcal M_1\oplus\tfrac12\mathcal M_2$.
Because $\mathcal M_1=\operatorname{conv}\{M^a,M^b\}$ and $\mathcal M_2=\operatorname{conv}\{M^a,M^c\}$,
the set $\mathcal M_{1/2}$ is the convex hull of the four matrices
\[
M^a,\quad \frac{M^a+M^b}{2},\quad \frac{M^a+M^c}{2},\quad \frac{M^b+M^c}{2}.
\]
Fix $p$. Since $v(p, \cdot)$ is concave, $V(p, \mathcal M_{1/2})$ is the minimum of $v(p,M)$ over these four extreme points.

First, $M^a\in\mathcal M_{1/2}$, so
\[
V(p^\star, \mathcal M_{1/2})\le v(p^\star, M^a)=-\frac12.
\]
Next, define $p':=(0,\tfrac13,\tfrac23)$. A direct computation gives:
\begin{align*}
& v(p', M^a)=-\frac13, \quad
v\Bigl(p', \frac{M^a+M^b}{2}\Bigr)\ge -\frac13, \quad
v\Bigl(p', \frac{M^a+M^c}{2}\Bigr)\ge -\frac13, \quad
v\Bigl(p', \frac{M^b+M^c}{2}\Bigr)=-\frac13.
\end{align*}
\[
\]
Hence $V(p', \mathcal M_{1/2})=-\tfrac13$, and therefore
\[
V(p', \mathcal M_{1/2})=-\frac13>-\frac12\ge V(p^\star, \mathcal M_{1/2}).
\]
Thus $p^\star\notin \mathrm{RL}(\mathcal M_{1/2})$.

We now prove the lower bound on the game value. Assume there exists $p^\star\in \mathrm{RL}(\mathcal M_1)\cap \mathrm{RL}(\mathcal M_2)$, so $V(p^\star,\mathcal M_k)=v^\star(\mathcal M_k)$ for $k=1,2$. Fix any $M_1\in\mathcal M_1$ and $M_2\in\mathcal M_2$. Then we have that
\[
\min_{q\in\Delta(A)} p^{\star\top}\bigl(\lambda M_1+(1-\lambda)M_2\bigr)q
\ \ge\
\lambda \min_{q\in\Delta(A)} p^{\star\top}M_1q
+(1-\lambda)\min_{q\in\Delta(A)} p^{\star\top}M_2q.
\]
Taking $\min$ over $M_1\in\mathcal M_1$ and $M_2\in\mathcal M_2$ yields
\[
V(p^\star,\mathcal M_\lambda)\ \ge\ \lambda V(p^\star,\mathcal M_1)+(1-\lambda)V(p^\star,\mathcal M_2)
=\lambda v^\star(\mathcal M_1)+(1-\lambda)v^\star(\mathcal M_2).
\]
Finally, since $v^\star(\mathcal M_\lambda)=\max_{p\in\Delta(A)}V(p,\mathcal M_\lambda)$,
we have $v^\star(\mathcal M_\lambda)\ge V(p^\star,\mathcal M_\lambda)$, proving
\[
v^\star(\mathcal M_\lambda)\ \ge\ \lambda v^\star(\mathcal M_1)+(1-\lambda)v^\star(\mathcal M_2).
\]
\end{proof}

\begin{proof}[Proof of \cref{thm:population_rcw}] Fix $M\in\mathcal M_\lambda$. By Definition~\ref{def:ambiguity_mixture}, there exist $M_1\in\mathcal M_1$ and $M_2\in\mathcal M_2$ such that $M=\lambda M_1+(1-\lambda)M_2$. For every $j\in A$, since $i^\star$ is an RCW for both $\mathcal M_1$ and $\mathcal M_2$, we have $(M_1)_{i^\star j}\ge 0$ and $(M_2)_{i^\star j}\ge 0$.
Hence
\[
M_{i^\star j}=\lambda (M_1)_{i^\star j}+(1-\lambda)(M_2)_{i^\star j}\ge 0.
\]
Since this holds for all $j\in A$ and all $M\in\mathcal M_\lambda$, then $i^\star$ is an RCW for $\mathcal M_\lambda$. Applying \cref{thm:rcc} yields $e_{i^\star}\in \mathrm{RL}(\mathcal M_\lambda)$.

Under the additional assumption, \cref{thm:rcc} implies that $e_{i^\star}$ is the unique robust lottery for each $\mathcal M_k$, $k\in\{1,2\}$. Thus $\mathrm{RL}(\mathcal M_1)=\mathrm{RL}(\mathcal M_2)=\{e_{i^\star}\}$, so any $p\in \mathrm{RL}(\mathcal M_1)\cap \mathrm{RL}(\mathcal M_2)$ must equal $e_{i^\star}$. Since $e_{i^\star}\in \mathrm{RL}(\mathcal M_\lambda)$, this proves the stated implication.
\end{proof}

\begin{proof}[Proof of~\cref{thm:lp-tv}]
Fix $p\in\Delta(A)$ and write $M(w):=\sum_{k=1}^K w_k M^{(k)}$. The robust value of $p$ under a weight set $\mathcal W$ is
\[
V(p,\mathcal W)=\min_{w\in\mathcal W}\min_{a\in A} p^\top M(w) e_a.
\]
We introduce an epigraph variable $t$ and rewrite the robust maximization as
\begin{align*}
  &  \max_{p\in\Delta(A)} V(p,\mathcal W(w_0,\rho))
 \\ 
 & = 
\max_{p\in\Delta(A),\,t\in\mathbb R}\ \Bigl\{t:\ t\le \min_{w\in\mathcal W(w_0,\rho)} p^\top M(w)e_a\ \forall a\in A\Bigr\}.
\end{align*}
Fix an alternative $a\in A$ and a lottery $p\in\Delta(A)$. Define the coefficients
\[
c_{a,k}(p):=p^\top M^{(k)} e_a\qquad (k=1,\dots,K),
\]
so that $p^\top M(w)e_a=\sum_{k=1}^K w_k c_{a,k}(p)$. Consider the inner minimization problem
\[
\Phi_a(p):=\min_{w\in\Delta(K)}\ \sum_{k=1}^K w_k c_{a,k}(p)
\quad\text{s.t.}\quad \frac12\|w-w_0\|_1\le \rho.
\]
Write the TV constraint with slack variables $s\in\mathbb R_{\ge 0}^K$:
\[
w_k-w_{0,k}\le s_k,\quad
-(w_k-w_{0,k})\le s_k,\quad
\sum_{k=1}^K s_k\le 2\rho.
\]
Thus $\Phi_a(p)$ is the optimal value of the linear program
\begin{align*}
\min_{w,s}\quad & \sum_{k=1}^K w_k\,c_{a,k}(p)\\
\text{s.t.}\quad & \sum_{k=1}^K w_k=1,\quad w_k\ge 0\ \ \forall k,\\
& w_k-w_{0,k}\le s_k,\quad -(w_k-w_{0,k})\le s_k\ \ \forall k,\\
& \sum_{k=1}^K s_k\le 2\rho,\quad s_k\ge 0\ \ \forall k.
\end{align*}
This LP is feasible (e.g.\ $w=w_0$, $s=0$) and bounded (since $w$ lies in a compact polytope), so strong duality holds. A standard duality computation yields the dual
\begin{align*}
\max_{\mu_a,\lambda_a,\gamma_{a}}\quad &
\mu_a - 2\rho\,\lambda_a + \sum_{k=1}^K w_{0,k}\gamma_{a,k}\\
\text{s.t.}\quad &
\mu_a + \gamma_{a,k}\le c_{a,k}(p)\quad \forall k,\\
& -\lambda_a\le \gamma_{a,k}\le \lambda_a\quad \forall k,\\
& \lambda_a\ge 0,
\end{align*}
where $\mu_a\in\mathbb R$ is free and $\gamma_a =(\gamma_{a,1},\dots,\gamma_{a,K})$. By strong duality, $\Phi_a(p)$ equals the optimum of this dual. Therefore, the robust problem $\max_{p\in\Delta(A)}\min_{a\in A}\Phi_a(p)$ is equivalently written as the single LP
\begin{align*}
\max_{p,t,\mu,\lambda,\gamma}\quad & t\\
\text{s.t.}\quad & \sum_{i\in A} p_i=1,\quad p_i\ge 0\ \ \forall i,\\
& t \le \mu_a - 2\rho\,\lambda_a + \sum_{k=1}^K w_{0,k}\gamma_{a,k}\quad \forall a\in A,\\
& \mu_a + \gamma_{a,k}\le p^\top M^{(k)} e_a\quad \forall a\in A,\ \forall k\in[K],\\
& -\lambda_a\le \gamma_{a,k}\le \lambda_a\quad \forall a\in A,\ \forall k\in[K],\\
& \lambda_a\ge 0\quad \forall a\in A.
\end{align*}
All constraints are linear in the variables, and the number of variables is $O(mK)$.
\end{proof}

\begin{proof}[Proof of \cref{thm:sparse-tv}]
Assume \(\rho\) is \emph{small} in the sense that
\begin{equation}\label{eq:rho0_small}
\rho\ \le\ \rho_0
\ :=\ 
\min\Bigl\{\min_{k\in[K]} w_{0,k},\ 1-\max_{k\in[K]} w_{0,k}\Bigr\}.
\end{equation}
Let \(\pi\in\arg\max_{p\in\Delta(A)} V(p,\mathcal W(w_0,\rho))\), so \(V(\pi,\mathcal W(w_0,\rho))=v^\star\). Sample \(I_1,\dots,I_s\sim \pi\) i.i.d.\ and define
\[
p:=\frac{1}{s}\sum_{t=1}^s e_{I_t}\in\Delta(A),
\qquad\text{so}\qquad
|\supp(p)|\le s.
\]
For each alternative \(j\in A\) and group \(k\in[K]\), define
\[
f_{j,k}(r):=\sum_{i\in A} r_i\,M^{(k)}_{ij}\qquad (r\in\Delta(A)).
\]
Define the \(w_0\)-average matrix \(\bar M:=\sum_{k=1}^K w_{0,k}M^{(k)}\) and
\[
\bar f_j(r):=\sum_{i\in A} r_i\,\bar M_{ij}
=\sum_{k=1}^K w_{0,k}f_{j,k}(r),
\quad
R_j(r):=\max_{k\in[K]} f_{j,k}(r)-\min_{k\in[K]} f_{j,k}(r).
\]
Fix \(r\in\Delta(A)\) and \(j\in A\) and set \(s_k:=f_{j,k}(r)\in[-1,1]\).
Then
\begin{equation}\label{eq:tv_range_identity}
\min_{w\in\mathcal W(w_0,\rho)} \sum_{k=1}^K w_k f_{j,k}(r)
\ =\ \bar f_j(r)\ -\ \rho\,R_j(r).
\end{equation}
\emph{Proof.}
Write \(w=w_0+u\) with \(\sum_k u_k=0\) and \(\|u\|_1\le 2\rho\).
For any constant \(c\), \(u^\top s=u^\top(s-c\mathbf 1)\), hence by Hölder,
\[
u^\top s \ge -\|u\|_1\,\|s-c\mathbf 1\|_\infty.
\]
Choosing \(c=\frac12(\max_k s_k+\min_k s_k)\) gives \(\|s-c\mathbf 1\|_\infty=\frac12(\max s-\min s)=\frac12R_j(r)\), so \(u^\top s\ge -(2\rho)(R_j(r)/2)=-\rho R_j(r)\). Thus \(\min_w w^\top s\ge w_0^\top s-\rho R_j(r)=\bar f_j(r)-\rho R_j(r)\).

For achievability, let \(k^+\in\arg\max_k s_k\) and \(k^-\in\arg\min_k s_k\) and define \(\tilde w:=w_0-\rho e_{k^+}+\rho e_{k^-}\). Under \cref{eq:rho0_small}, we have \(w_{0,k^+}\ge\rho\) and \(w_{0,k^-}\le 1-\rho\), hence \(\tilde w\in\Delta(K)\) and \(\tfrac12\|\tilde w-w_0\|_1=\rho\). 

Therefore
\(\tilde w^\top s=w_0^\top s-\rho(\max s-\min s)=\bar f_j(r)-\rho R_j(r)\), proving \cref{eq:tv_range_identity}. \hfill\(\diamond\)

Since \(p^\top M(w)q\) is linear in \(q\), we have \(\min_{q\in\Delta(A)} r^\top M(w)q=\min_{j\in A}\sum_k w_k f_{j,k}(r)\). Moreover, because the expression is linear in \(w\), the minimizations commute:
\[
V(r,\mathcal W(w_0,\rho))
=\min_{j\in A}\min_{w\in\mathcal W(w_0,\rho)}\sum_k w_k f_{j,k}(r).
\]
Applying \cref{eq:tv_range_identity} yields the decomposition
\begin{equation}\label{eq:V_decomp_tv}
V(r,\mathcal W(w_0,\rho))
=
\min_{j\in A}\Bigl[\bar f_j(r)-\rho R_j(r)\Bigr].
\end{equation}
Since \(\bar M_{ij}\in[-1,1]\), Hoeffding implies for each \(j\in A\),
\[
\Pr\!\left(\bar f_j(p)\le \bar f_j(\pi)-\frac{\varepsilon}{2}\right)
\le \exp\!\left(-\frac{s\varepsilon^2}{8}\right).
\]
A union bound over \(j\in A\) gives
\begin{equation}\label{eq:mean_union_tv}
\Pr\!\left(\exists j:\ \bar f_j(p)\le \bar f_j(\pi)-\frac{\varepsilon}{2}\right)
\le m\exp\!\left(-\frac{s\varepsilon^2}{8}\right).
\end{equation}
If \(\rho=0\), then \(R_j(\cdot)\) does not appear in \cref{eq:V_decomp_tv} and we may skip this step. Assume \(\rho>0\) and set \(\eta:=\varepsilon/(4\rho)\). For each \((j,k)\in A\times[K]\), Hoeffding gives the two-sided bound
\[
\Pr\!\left(\bigl|f_{j,k}(p)-f_{j,k}(\pi)\bigr|\ge \eta\right)
\le 2\exp\!\left(-\frac{s\eta^2}{2}\right).
\]
A union bound over \((j,k)\) yields
\begin{equation}\label{eq:range_union_tv}
\Pr\!\left(\exists (j,k):\ \bigl|f_{j,k}(p)-f_{j,k}(\pi)\bigr|\ge \eta\right)
\le 2mK\exp\!\left(-\frac{s\eta^2}{2}\right).
\end{equation}
Choose \(s\) such that the right-hand sides of \cref{eq:mean_union_tv} and \cref{eq:range_union_tv} are at most \(1/4\). Equivalently, it suffices that
\[
s\ \ge\ \frac{8}{\varepsilon^2}\log(4m)
\qquad\text{and}\qquad
s\ \ge\ \frac{2}{\eta^2}\log(8mK)
\ =\ \frac{32\rho^2}{\varepsilon^2}\log(8mK).
\]
Then with probability at least \(1/2\) we have simultaneously for all \(j\in A\),
\begin{equation}\label{eq:mean_ok_tv}
\bar f_j(p)\ge \bar f_j(\pi)-\frac{\varepsilon}{2},
\end{equation}
and for all \((j,k)\in A\times[K]\),
\begin{equation}\label{eq:all_ok_tv}
\bigl|f_{j,k}(p)-f_{j,k}(\pi)\bigr|\le \eta.
\end{equation}
From \cref{eq:all_ok_tv}, for each \(j\),
\[
\max_k f_{j,k}(p)\le \max_k f_{j,k}(\pi)+\eta,
\qquad
\min_k f_{j,k}(p)\ge \min_k f_{j,k}(\pi)-\eta,
\]
hence \(R_j(p)\le R_j(\pi)+2\eta\). Combining with \cref{eq:mean_ok_tv} gives, for each \(j\),
\[
\bar f_j(p)-\rho R_j(p)
\ \ge\
\bar f_j(\pi)-\rho R_j(\pi)-\frac{\varepsilon}{2}-2\rho\eta
\ =\
\bar f_j(\pi)-\rho R_j(\pi)-\varepsilon,
\]
since \(2\rho\eta=\varepsilon/2\). Taking \(\min_{j\in A}\) and using \cref{eq:V_decomp_tv} yields
\[
V(p,\mathcal W(w_0,\rho))
\ \ge\
V(\pi,\mathcal W(w_0,\rho))-\varepsilon
\ =\ v^\star-\varepsilon.
\]
Because the success event has positive probability, there exists a realization of the sample for which \(|\supp(p)|\le s\) and \(V(p,\mathcal W(w_0,\rho))\ge v^\star-\varepsilon\).
\end{proof}

\begin{lemma}[Concentration of empirical mixture in $\ell_1$]\label{lem:empirical-tv}
Let \(w^\star\in\Delta(K)\) denote the (unknown) population mixture. We observe i.i.d.\ group labels \(Z_1,\dots,Z_n\sim w^\star\) and form \(\hat w\in\Delta(K)\). Fix \(\varepsilon\in(0,1)\) and \(\delta\in(0,1)\). If
\begin{align*}
n \;\ge\; \max\left\{\frac{K}{\varepsilon^2}, \frac{2}{\varepsilon^2}\log\!\left(\frac{2}{\delta}\right)\right\},
\end{align*}
then with probability at least \(1-\delta\),
\begin{align*}
\|\hat w-w^\star\|_1 \;\le\; 2\varepsilon,
\end{align*}
and equivalently \(d(\hat w,w^\star):= \frac{1}{2} \|\hat w-w^\star\|_1 \le \varepsilon\).
\end{lemma}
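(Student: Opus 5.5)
We would prove the lemma by splitting the deviation $\|\hat w-w^\star\|_1$ into its expectation plus a fluctuation term. The expectation is handled by Cauchy--Schwarz. The fluctuation is handled by McDiarmid's bounded-differences inequality. This is the standard route for $\ell_1$ concentration of empirical distributions.

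\textbf{Step 1 (mean bound).} Each coordinate satisfies $n\hat w_k\sim\mathrm{Bin}(n,w^\star_k)$, so $\mathbb E|\hat w_k-w^\star_k|\le\sqrt{\mathrm{Var}(\hat w_k)}=\sqrt{w^\star_k(1-w^\star_k)/n}$ by Jensen. Summing over $k$ and applying Cauchy--Schwarz gives
\[
\mathbb E\|\hat w-w^\star\|_1\le \frac{1}{\sqrt n}\sum_{k=1}^K\sqrt{w^\star_k}\le \frac{1}{\sqrt n}\sqrt{K\textstyle\sum_k w^\star_k}=\sqrt{K/n}.
\]

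\textbf{Step 2 (concentration).} View $g(Z_1,\dots,Z_n):=\|\hat w-w^\star\|_1$ as a function of the independent labels. Changing a single $Z_t$ moves mass $1/n$ from one coordinate of $\hat w$ to another, so it changes $g$ by at most $2/n$. McDiarmid's inequality then gives
\[
\Pr\bigl(g\ge \mathbb E g+u\bigr)\le \exp\!\left(-\frac{2u^2}{n(2/n)^2}\right)=\exp\!\left(-\frac{nu^2}{2}\right).
\]
Setting the right-hand side to $\delta$ (or to $\delta/2$ if one wants the $\log(2/\delta)$ form exactly) yields, with probability at least $1-\delta$,
\[
\|\hat w-w^\star\|_1\le\sqrt{K/n}+\sqrt{(2/n)\log(2/\delta)}.
\]

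\textbf{Step 3 (plug in $n$).} The hypothesis $n\ge K/\varepsilon^2$ gives $\sqrt{K/n}\le\varepsilon$. The hypothesis $n\ge (2/\varepsilon^2)\log(2/\delta)$ gives the second term $\le\varepsilon$. Together these yield $\|\hat w-w^\star\|_1\le 2\varepsilon$, and dividing by two gives $d(\hat w,w^\star)\le\varepsilon$.

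The only delicate point is getting the constants to match. The bounded-difference constant must be $2/n$ rather than $1/n$, and the exponent bookkeeping must produce exactly $\sqrt{(2/n)\log(2/\delta)}$. This fits within the $2\varepsilon$ budget, since $\log(1/\delta)\le\log(2/\delta)$. Everything else is routine.

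The same display, with $\rho(n,\delta)$ defined as in \cref{thm:dRL-regret}, gives $\tfrac12\|\hat w-w^\star\|_1\le\rho$, which is what that theorem needs. It requires only $\tfrac12(a+b)\le a+b$, and the $\min\{1,\cdot\}$ is harmless because total variation never exceeds $1$.
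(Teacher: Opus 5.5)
Your proposal is correct and matches the paper's proof essentially step for step. Both use the Jensen/Cauchy--Schwarz bound $\mathbb E\|\hat w-w^\star\|_1\le\sqrt{K/n}$, followed by McDiarmid's inequality with bounded-difference constant $2/n$ and deviation $\varepsilon$; the only cosmetic difference is that you use the one-sided tail, while the paper uses the two-sided bound $2\exp(-nt^2/2)$, which is where its $\log(2/\delta)$ comes from.
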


\begin{proof}
Let \(N_k:=\sum_{t=1}^n \mathbf 1\{Z_t=k\}\). Then \(\hat w_k=N_k/n\) and \(N_k\) follows a binomial distribution with parameters $(n, \:w_k^\star)$. Denote \(\Delta:=\hat w-w^\star\). 

Using \(\mathbb E|\Delta_k|\le \sqrt{\mathbb E[\Delta_k^2]}=\sqrt{\mathrm{Var}(\hat w_k)}\) and summing over \(k\), we have that
\begin{align*}
\mathbb E\|\Delta\|_1
&\;\le\; \sum_{k=1}^K \sqrt{\mathrm{Var}(\hat w_k)}
  \;=\; \sum_{k=1}^K \sqrt{\frac{w_k^\star(1-w_k^\star)}{n}}
 \;\le\; \sum_{k=1}^K \sqrt{\frac{w_k^\star}{n}} = 
 \frac{1}{\sqrt n}\sum_{k=1}^K \sqrt{w_k^\star}
\;\le\; \sqrt{\frac{K}{n}},
\end{align*}
where the last step uses Cauchy--Schwarz inequality. Then, if $n \ge \frac{K}{\varepsilon^2}$, then $\mathbb E[\| \hat w-w^\star\|_1] \leq \varepsilon$.

Now define \(f(Z_1,\dots,Z_n):=\|\hat w-w^\star\|_1\). Replacing a single sample \(Z_t\) by \(Z_t'\) changes the value by at most $\frac{2}{n}$:
\begin{align*}
\bigl|f(Z_1,&\dots,Z_t,\dots,Z_n)  -f(Z_1,\dots,Z_t',\dots,Z_n)\bigr|\le \frac{2}{n}\quad \forall\,t\in[n].
\end{align*}
By McDiarmid's inequality, for any \(t>0\),
\begin{align*}
\mathbb P\!\left(\left| f-\mathbb Ef \right| \ge t\right)
\;\le\;
2\exp\!\left(-\frac{2t^2}{\sum_{i=1}^n (2/n)^2}\right)
\;=\;
2\exp\!\left(-\frac{nt^2}{2}\right).
\end{align*}
Then, if $n \ge \frac{2}{\varepsilon^2} \log \frac{2}{\delta}$, then $ \| \hat w-w^\star\|_1 \leq 2\varepsilon$ with probability at least $1 - \delta$. We get  our lemma by putting together both results.
\end{proof}

\begin{lemma}[Lipschitz continuity in the mixture weights]\label{lem:lipschitz-w}
For any fixed \(p\in\Delta(A)\) and any \(w,w'\in\Delta(K)\),
\begin{align*}
|v(p,w)-v(p,w')| \;\le\; \|w-w'\|_1.
\end{align*}
\end{lemma}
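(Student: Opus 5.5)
The plan is to reduce the claim to the same Lipschitz estimate used in the proof of \cref{thm:axiom-robust}, namely $|v(p,M)-v(p,M')|\le\|M-M'\|_\infty$, applied to the pooled matrices $M(w)$ and $M(w')$. Then it only remains to bound $\|M(w)-M(w')\|_\infty$ by $\|w-w'\|_1$.

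\textbf{Step 1: Lipschitz dependence on the matrix.} Fix $p$. Recall that $v(p,w)=\min_{q\in\Delta(A)} p^\top M(w)q$. The difference of two minima over the same set is at most the maximal pointwise difference, so
\[
|v(p,w)-v(p,w')|\le \max_{q\in\Delta(A)}\bigl|p^\top\bigl(M(w)-M(w')\bigr)q\bigr|.
\]
The quantity $p^\top D q=\sum_{i,j}p_iq_jD_{ij}$ is a convex combination of the entries of $D$. Hence it is bounded in absolute value by $\max_{i,j}|D_{ij}|$, which is the entrywise $\|\cdot\|_\infty$ norm used in \cref{thm:axiom-robust}.

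\textbf{Step 2: bound the matrix difference.} Write
\[
M(w)-M(w')=\sum_{k=1}^K (w_k-w'_k)M^{(k)}.
\]
Since each $M^{(k)}\in\mathbb M_m$ has entries in $[-1,1]$, every entry satisfies
\[
\Bigl|\sum_k (w_k-w'_k)M^{(k)}_{ij}\Bigr|\le \sum_k|w_k-w'_k|=\|w-w'\|_1.
\]
Combining this with Step 1 gives the claimed inequality.

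No step is a real obstacle. The only point needing care is justifying $|\min f-\min g|\le\sup|f-g|$. To see it, evaluate $f$ at a minimizer of $g$ and vice versa. Compactness of $\Delta(A)$ guarantees both minima are attained. One could sharpen the bound to $\|w-w'\|_1/2$, since $\sum_k(w_k-w'_k)=0$ allows centering the coefficients, but the stated bound does not require this.
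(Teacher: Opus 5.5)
Your argument is correct and is essentially the paper's proof. The paper bounds $\bigl|\sum_k (w_k-w'_k)\,p^\top M^{(k)}q\bigr|\le\sum_k|w_k-w'_k|$ for each $q$ and then uses $|\min_q a_q-\min_q b_q|\le\sup_q|a_q-b_q|$; this is your Steps 1--2 with the triangle inequality applied before, rather than after, forming $p^\top D q$.

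Your closing remark about halving the constant is false, however. Centering only gives $\tfrac12\|w-w'\|_1\bigl(\max_k M^{(k)}_{ij}-\min_k M^{(k)}_{ij}\bigr)$, and margins in $[-1,1]$ can have range $2$, so the constant $1$ is tight. For example, take $m=K=2$, $M^{(1)}_{12}=1=-M^{(2)}_{12}$, $p=e_1$, $w=(\tfrac12,\tfrac12)$ and $w'=(0,1)$. Then $v(p,w)=0$, $v(p,w')=-1$ and $\|w-w'\|_1=1$.
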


\begin{proof}
Fix \(p\in\Delta(A)\) and \(w,w'\in\Delta(K)\). Denote $\phi_w(q):=p^\top M(w)q$. For any \(q\in\Delta(A)\), we have that:
\begin{align*}
|\phi_w(q)-\phi_{w'}(q)|
=\Bigl|p^\top\bigl(M(w)-M(w')\bigr)q\Bigr|
& =\Bigl|\sum_{k=1}^K (w_k-w_k')\, p^\top M^{(k)} q\Bigr| \\
&\le \sum_{k=1}^K |w_k-w_k'| \, \bigl|p^\top M^{(k)} q\bigr|\\&
\;\le\; \sum_{k=1}^K |w_k-w_k'| \, \|M^{(k)}\|_\infty
\\& \;\le\; \|w-w'\|_1,
\end{align*}
where we used \(|p^\top X q|\le \|X\|_\infty\) for \(p,q\in\Delta(A)\). Taking the minimum over \(q\) and using \(|\min_q a_q-\min_q b_q|\le \sup_q |a_q-b_q|\), we conclude that:
\begin{align*}
|v(p,w)-v(p,w')|
&=\Bigl|\min_{q}\phi_w(q)-\min_{q}\phi_{w'}(q)\Bigr|  \;\le\;\sup_{q\in\Delta(A)}|\phi_w(q)-\phi_{w'}(q)| \;\le\;\|w-w'\|_1.
\end{align*}
\end{proof}
\begin{lemma}[Oracle inequality]\label{lem:oracle-ineq}
Fix \(\rho\ge 0\) and suppose \(w^\star\in\mathcal W(\hat w,\rho)\). Then we have that \(v(\hat p,w^\star)\ge \hat v\) and  \(v(\hat p,w^\star)\ge -4\rho\).
\end{lemma}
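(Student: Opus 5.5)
Here $\hat v$ denotes the empirical robust value $\hat v:=\max_{p\in\Delta(A)}\min_{w\in\mathcal W(\hat w,\rho)} v(p,w)$, which is attained by $\hat p$ by construction in \cref{eq:empirical-drl}. The first inequality is immediate from the assumption $w^\star\in\mathcal W(\hat w,\rho)$. Since $w^\star$ is one admissible mixture in the ball,
\[
v(\hat p,w^\star)\ \ge\ \min_{w\in\mathcal W(\hat w,\rho)} v(\hat p,w)\ =\ \hat v .
\]

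For the second inequality, it suffices to lower bound $\hat v$ by $-4\rho$. We do this by exhibiting a competitor lottery whose worst case over the ball is controlled. The natural competitor is a maximal lottery $\tilde p$ for the empirical pooled matrix $M(\hat w)$, which satisfies $v(\tilde p,\hat w)=0$ by \cref{def:ml}. For any $w\in\mathcal W(\hat w,\rho)$ we have $\|w-\hat w\|_1\le 2\rho$, so \cref{lem:lipschitz-w} gives
\[
v(\tilde p,w)\ \ge\ v(\tilde p,\hat w)-\|w-\hat w\|_1\ \ge\ -2\rho .
\]
Hence $\hat v\ge \min_{w}v(\tilde p,w)\ge -2\rho\ge -4\rho$, and combining with the first part gives $v(\hat p,w^\star)\ge -4\rho$.

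An alternative route yields the same constant $4\rho$ and may be what the stated constant reflects. Use $p^\star$, the maximal lottery for $M(w^\star)$, as the competitor. Every $w$ in the ball satisfies $\|w-w^\star\|_1\le\|w-\hat w\|_1+\|\hat w-w^\star\|_1\le 4\rho$, so $v(p^\star,w)\ge -4\rho$ and therefore $\hat v\ge -4\rho$. Either route is valid; I would present the first and remark that the second gives exactly the stated constant.

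The main obstacle is only bookkeeping. The radius is measured in total variation, i.e.\ $\tfrac12\|\cdot\|_1\le\rho$, whereas \cref{lem:lipschitz-w} is stated in terms of $\|\cdot\|_1$. The argument therefore has to track the factor of $2$ carefully when converting between the two.
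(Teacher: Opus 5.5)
Your proof is correct. Your ``alternative route'' is exactly the paper's argument: the paper compares $\hat p$ against the population maximal lottery $p^\star$ for $M(w^\star)$. It then uses the triangle inequality $\|w-w^\star\|_1\le\|w-\hat w\|_1+\|\hat w-w^\star\|_1\le 4\rho$ together with \cref{lem:lipschitz-w} to obtain $\hat v\ge -4\rho$.

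Your primary route differs only in the choice of competitor, but that choice matters. Taking a maximal lottery $\tilde p$ of the ball's center $M(\hat w)$ avoids the triangle inequality and gives the sharper bound $\hat v\ge -2\rho$. It also makes the lower bound on $\hat v$ deterministic: it holds for every realization of $\hat w$. So the hypothesis $w^\star\in\mathcal W(\hat w,\rho)$ is needed only for the first inequality, whereas the paper's route uses it in both steps.

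Carried through the proof of \cref{thm:dRL-regret}, your version halves the regret bound to $2\sqrt{K/n}+2\sqrt{(2/n)\log(2/\delta)}$. The paper's choice of $p^\star$ is natural because regret is measured against $p^\star$, but it pays the radius twice. Nothing is lost by presenting your first route, which proves a stronger bound than the one stated.
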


\begin{proof}
Since \(w^\star\in\mathcal W(\hat w,\rho)\), we have
\begin{align*}
v(\hat p,w^\star)\;\ge\;\min_{w\in\mathcal W(\hat w,\rho)} v(\hat p,w)\;=\;\hat v.
\end{align*}
By optimality of \(\hat p\),
\begin{align*}
\hat v
\;=\;
\min_{w\in\mathcal W(\hat w,\rho)} v(\hat p,w)
\;\ge\;
\min_{w\in\mathcal W(\hat w,\rho)} v(p^\star,w),
\end{align*}
where \(p^\star\in\arg\max_{p} v(p,w^\star)\) achieves \(v(p^\star,w^\star)=v^\star=0\).
For any \(w\in\mathcal W(\hat w,\rho)\), Lemma~\ref{lem:lipschitz-w} and the triangle inequality give
\begin{align*}
v(p^\star,w)
\ge v(p^\star,w^\star)-\|w-w^\star\|_1 
&\ge 0 - \Bigl(\|w-\hat w\|_1+\|\hat w-w^\star\|_1\Bigr)
 \;\ge\; -\bigl(2\rho+2\rho\bigr)
\;=\; -4\rho,
\end{align*}
since \(w\in\mathcal W(\hat w,\rho)\) and \(w^\star\in\mathcal W(\hat w,\rho)\) imply
\begin{align*}
\|w-\hat w\|_1 \le 2\rho,
\qquad
\|\hat w-w^\star\|_1 \le 2\rho.
\end{align*}
Therefore,
\begin{align*}
\min_{w\in\mathcal W(\hat w,\rho)} v(p^\star,w)\;\ge\;-4\rho \implies 
\hat v\;\ge\;-4\rho.
\end{align*}
This gives us that \(v(\hat p,w^\star)\ge \hat v\ge -4\rho\).
\end{proof}

\begin{proof}[Proof of~\cref{thm:dRL-regret}]
By Lemma~\ref{lem:empirical-tv}, with probability at least \(1-\delta\),
\begin{align*}
\|\hat w-w^\star\|_1 \;\le\; 2\rho,
\end{align*}
which implies \(d(\hat w,w^\star)\le \rho\), i.e., \(w^\star\in\mathcal W(\hat w,\rho)\).
On this event, Lemma~\ref{lem:oracle-ineq} gives
\begin{align*}
\mathrm{Regret}(w^\star)\;= -v(\hat p,w^\star)\;\le\; 4\rho.
\end{align*}
The final bound follows by substituting the definition of \(\rho\).
\end{proof}
\newpage
\section{Additional Experiments}\label{sec:additional_experiments}
All the following experiments were conducted on CPU using open-source datasets. We provide the code \href{https://anonymous.4open.science/r/Robust-AI-Evaluation-through-Maximal-Lotteries-7172}{here}.

\subsection{LMArena}\label{sec:lmarena}
LMArena \cite{chiang_chatbot_2024} is a large-scale, crowdsourced benchmark for evaluating language models via pairwise comparisons. Users interact with a randomly selected pair of models on prompts they submit, view the two responses side-by-side, and cast a vote for the better response. Because prompts are user-generated and interactions occur in the wild, the resulting preferences reflect a diverse mix of tasks, languages, and user populations, making the benchmark well-suited for studying heterogeneity and robustness in preference-based evaluation.

In particular, we use the most recent public release from LMArena\footnote{\href{https://huggingface.co/datasets/lmarena-ai/arena-human-preference-140k}{https://huggingface.co/datasets/lmarena-ai/arena-human-preference-140k}}. The dataset contains approximately 136K pairwise votes collected on the LMArena platform between April~17,~2025 and July~25,~2025, spanning 53 frontier models. For our analysis, we restrict to a subset of 23 models to ensure sufficient head-to-head votes across model pairs.  Moreover, we add a Laplace smoothing constant of $\eta = 1$ to regularize against small counts. \cref{tab:models_lmarena} specifies the models in consideration, along with their price in USD per 1M input tokens. These prices were sourced from the model provider if available or from AWS Bedrock otherwise. 

Since annotators submit their votes anonymously and no demographic metadata is provided, prompt language is the only available group identifier. We define subpopulations by the language of the user-authored prompt and focus on the four most frequent prompt languages in this release---English, Polish, Russian, and Chinese. We present the mixture distribution of these languages in the LMArena data in \cref{fig:votes_lmarena} and specify the count of votes per category under our subset of models in \cref{tab:lmarena_language_counts}. We illustrate in \cref{fig:lmarena_heterogeneity} the preference heterogeneity in the LMArena dataset using the reversal method described in \cref{eq:reversal}, along with specific examples in \cref{tab:lmarena_reversal_examples}.

\begin{table}
\centering
\begin{minipage}[t]{0.64\linewidth}
\centering
\caption{Models in LMArena and their input-token prices (USD per 1M input tokens). The symbol (T) designates a thinking version.}
\vspace{3pt}
\label{tab:models_lmarena}
\begin{tabular}{lr}
\toprule
Model & Price in USD per 1M input tokens\\
\midrule
GPT-4o v3 & 2.50 \\
GPT-4.1 Mini & 0.40 \\
o3 & 2.00 \\
o4 Mini & 1.10 \\
\midrule
Haiku 3.5 & 0.80 \\
Sonnet 3.5 v2 & 3.00 \\
Sonnet 3.7 & 3.00 \\
Sonnet 3.7 (T) & 3.00 \\
Sonnet 4 & 3.00 \\
Sonnet 4 (T) & 3.00 \\
Opus 4 & 15.00 \\
Opus 4 (T) & 15.00 \\
\midrule
DeepSeek R1 & 0.28 \\
DeepSeek V3 & 0.28 \\
\midrule
Gemini 2.0 Flash & 0.10 \\
Gemini 2.5 Flash & 0.30 \\
Gemini 2.5 Pro & 1.25 \\
Gemma 3 & 0.23 \\
\midrule
Llama 4 Maverick Exp & 0.24 \\
Llama 4 Maverick & 0.24 \\
\midrule
Mistral Medium & 0.40 \\
\midrule
Qwen3 235B & 0.22 \\
Qwen3 30B & 0.15 \\
\bottomrule
\end{tabular}
\end{minipage}\hfill
\begin{minipage}[t]{0.35\linewidth}
\centering
\captionof{table}{Vote counts in LMArena}
\vspace{3pt}
\label{tab:lmarena_language_counts}
\begin{tabular}{lr}
\toprule
Language & Votes \\
\midrule
English & 26{,}172 \\
Polish & 4{,}842 \\
Russian & 3{,}530 \\
Chinese & 2{,}565 \\
\midrule
Overall & 37{,}109 \\
\bottomrule
\end{tabular}
\end{minipage}
\end{table}

\begin{figure}[t]
    \centering
    \begin{subfigure}[b]{0.48\linewidth}
        \centering        \includegraphics[width=\linewidth]{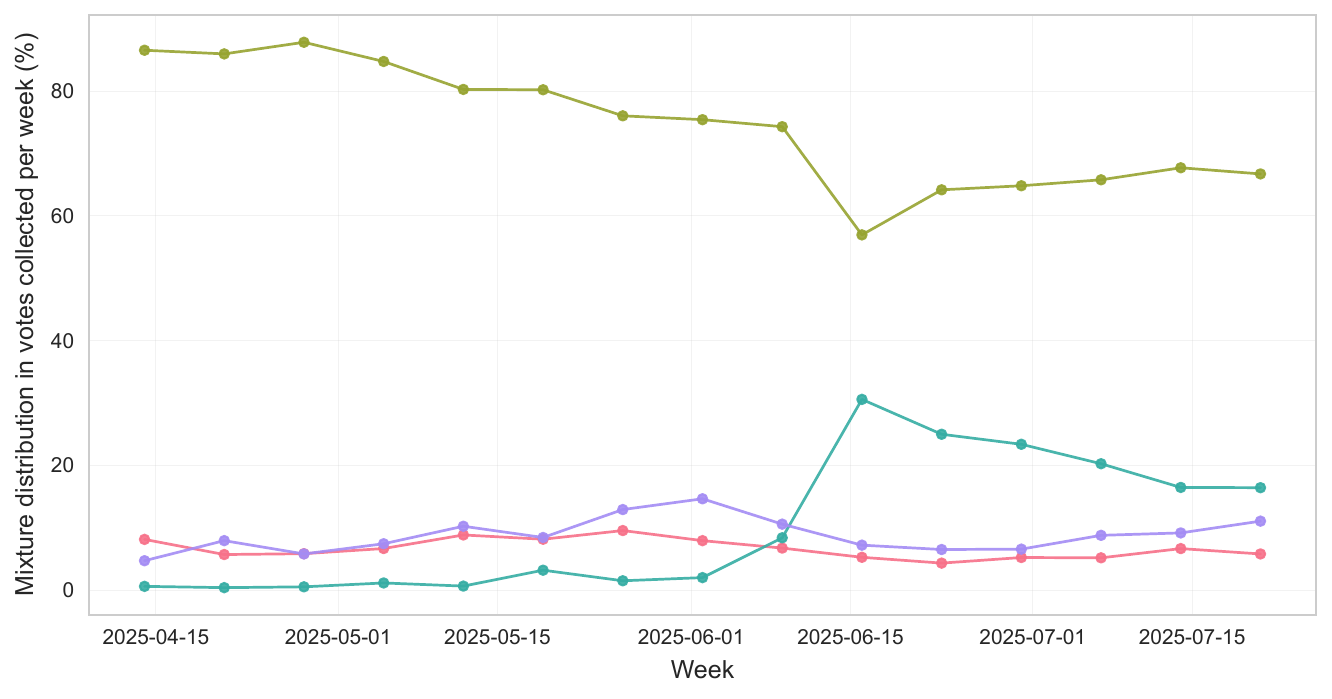}
        \label{fig:votes_lmarena_perweek}
    \end{subfigure}
    \hfill
    \begin{subfigure}[b] {0.48\linewidth}
        \centering
        \includegraphics[width=\linewidth]{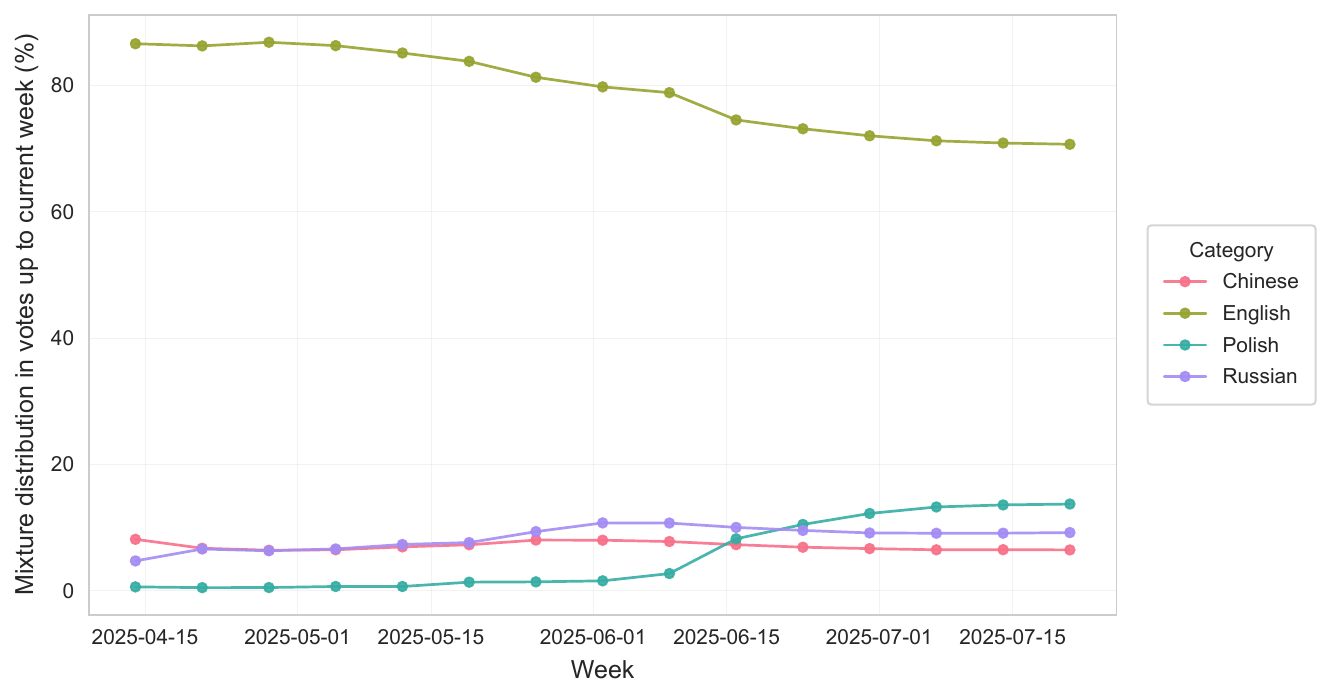}
        \label{fig:votes_cumulative_lmarena}
    \end{subfigure}
    \caption{Weekly (left) and cumulative (right) mixture distributions of vote categories in LMArena. The weekly plot shows the fraction of votes collected per category each week, while the cumulative plot shows the category composition of all votes collected up to a given week.}
    \label{fig:votes_lmarena}
\end{figure}

\begin{figure}[t]
    \centering
    \includegraphics[width=0.8\linewidth]{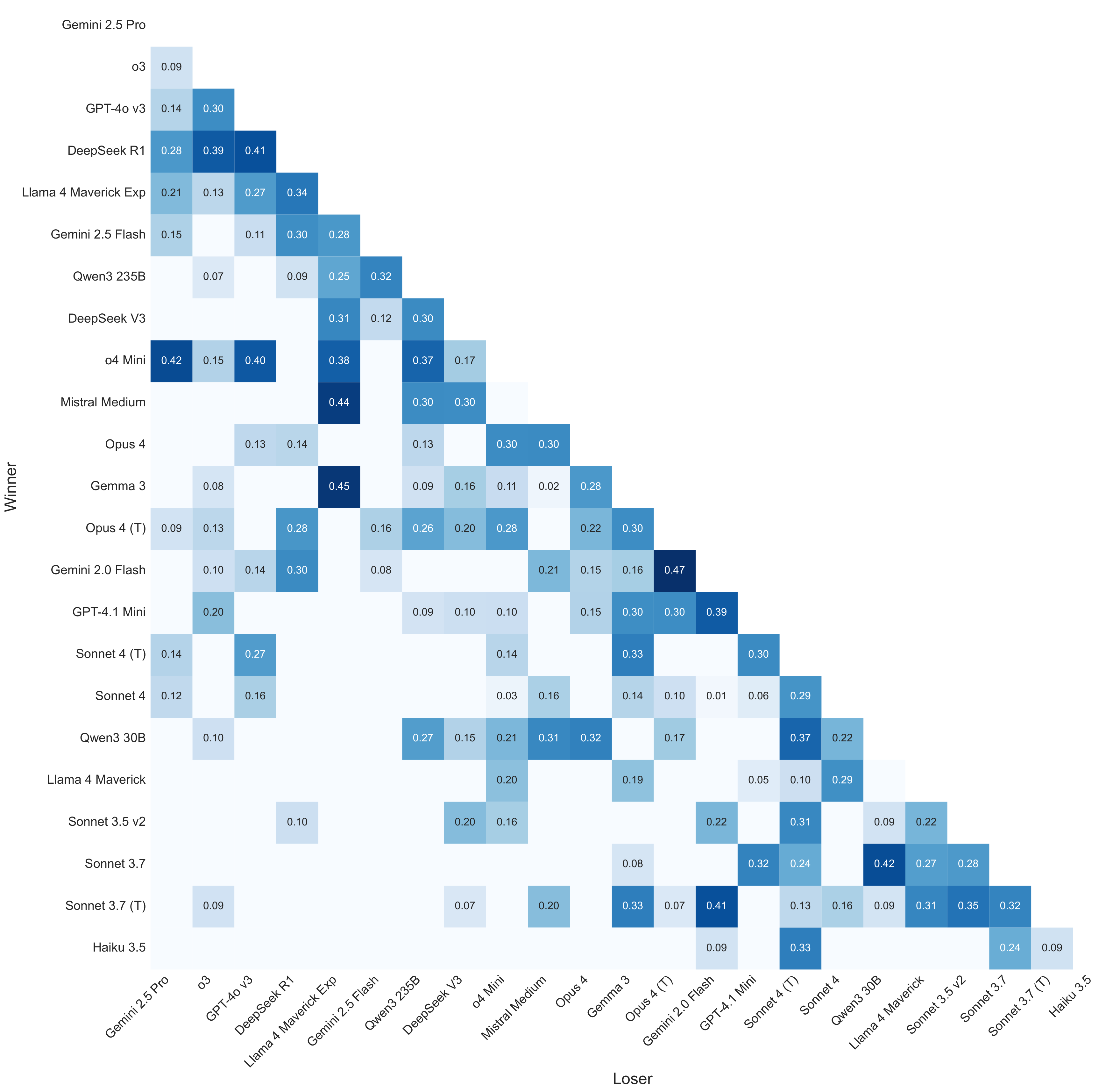}
    \caption{\textbf{Pairwise preferences exhibit substantial heterogeneity across groups in LMArena.} We compute the probability that two distinct groups (sampled proportional to their comparison counts) yield opposite preferences for $(i, j) $, i.e., their margins $M_{ij}$ have opposite signs.}
    \label{fig:lmarena_heterogeneity}
\end{figure}

\begin{table}
\centering
\caption{\textbf{Highest reversal-rate pairs in LMArena.} For each model pair \((i,j)\), we report the reversal probability and the languages which prefer \(i\) over \(j\), along with the number of comparisons available in each language (ZH = Chinese, PL = Polish, RU = Russian, EN = English).}
\vspace{3pt}
\label{tab:lmarena_reversal_examples}
\setlength{\tabcolsep}{3pt}
\renewcommand{\arraystretch}{0.9}
\small
\begin{tabular}{lccc}
\toprule
Pair $(i,j)$ & Rev. prob. & Prefer $i$ & Prefer $j$ \\
\midrule
Gemini 2.0 Flash, Opus 4 & 0.467 & ZH (5), PL (17), RU (10) & EN (54) \\
Llama 4 Maverick, Gemma 3 & 0.453 & ZH (8), PL (13), RU (13) & EN (64) \\
Mistral Medium, Llama 4 Maverick & 0.443 & EN (127) & ZH (19), PL (30), RU (14) \\
o4 Mini, Gemini 2.5 Pro & 0.419 & EN (115) & ZH (10), PL (28), RU (11) \\
Qwen3 30B, Sonnet 3.7 & 0.415 & ZH (10), PL (20) & EN (64), RU (8) \\
\bottomrule
\end{tabular}
\end{table}

\begin{figure}
    \centering
    \includegraphics[width=0.6\linewidth]{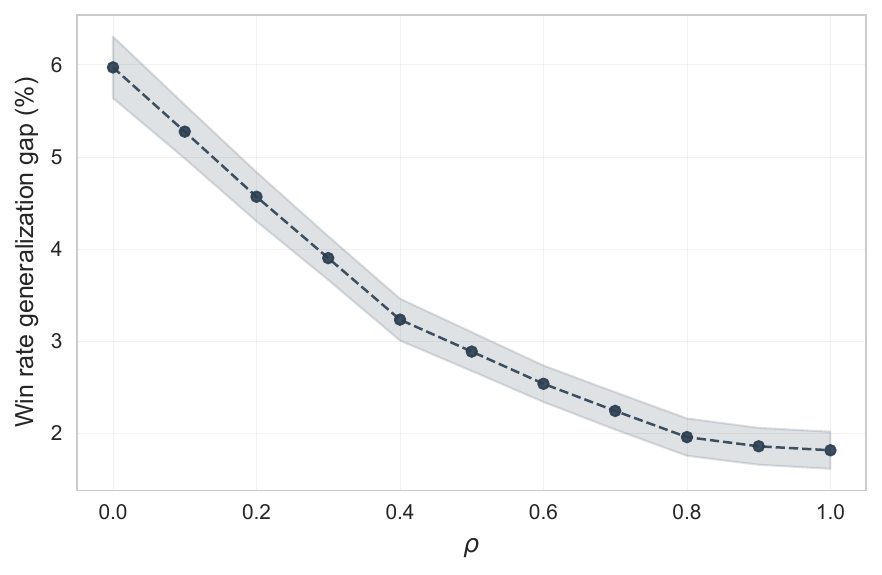}
    \caption{\textbf{Robust lotteries reduce overfitting in LMArena.}  We compute robust lotteries for varying radius values $\rho$ and evaluate each lottery on the overall population's training votes and held-out votes from LMArena (80\%-20\% split). We report the generalization gap (train win rate minus test win rate) as a function of $\rho$ (200 samples). As $\rho$ increases, the gap shrinks from roughly \(6\%\) at \(\rho=0\) -- which corresponds to a standard maximal lottery -- to about \(2\%\) at \(\rho=1\), indicating that robustness regularizes the learned mixture and yields more stable performance.}
    \label{fig:lmarena_generalization}
\end{figure}

\newpage

\subsection{HUMAINE Dataset}\label{sec:humaine}
The HUMAINE leaderboard \cite{humaine2025} contains 174K pairwise preference votes comparing 33 large language models (LLMs). Each vote records the two models shown, the winner, and annotator metadata (including age, country of residence, ethnicity, and political affiliation). We focus on the ethnic group as an axis of preference heterogeneity, and we also present the results for political affiliation. To ensure that comparisons are sufficiently connected within each group, we restrict to a set of 15 models (with $\eta = 1$ Laplace smoothing); see \cref{tab:models_humaine} for the list of models and \cref{tab:votes_humaine} for the number of votes.
\begin{table}[!htbp]
\centering
\caption{Models evaluated in HUMAINE dataset.}
\label{tab:models_humaine}
\begin{tabular}{llll}
\toprule
Claude 3.7 Sonnet & Gemini 2.0 Flash & o1 & Grok 3 \\
Claude Opus 4 & Gemini 2.5 Flash & o3 Mini & Grok 4 \\
Claude Sonnet 4 & Gemini 2.5 Pro & o4 Mini & DeepSeek R1 \\
Command A & Command R7B  & Mistral Nemo \\
\bottomrule
\end{tabular}
\end{table}

\begin{table}[!htbp]
\centering
\caption{Vote counts in HUMAINE by group type. \textbf{Left.} Group votes by ethnic group. \textbf{Right.} Group votes by political affiliation.}
\label{tab:votes_humaine}
\begin{subtable}[t]{0.48\linewidth}
  \centering
  \label{tab:ethnic_humaine}
  \begin{tabular}{lr}
    \toprule
    Group & Votes \\
    \midrule
    White & 14{,}266 \\
    Black & 4{,}134 \\
    Asian & 3{,}359 \\
    Mixed & 1{,}392 \\
    \bottomrule
  \end{tabular}
\end{subtable}\hfill
\begin{subtable}[t]{0.48\linewidth}
  \centering
  \label{tab:political_humaine}
  \begin{tabular}{lr}
    \toprule
    Group & Votes \\
    \midrule
    Democrat & 3{,}740 \\
    Labour & 3{,}594 \\
    Republican & 2{,}998 \\
    Independent & 2{,}826 \\
    \bottomrule
  \end{tabular}
\end{subtable}
\end{table}

\begin{figure}[t]
  \centering
  \begin{subfigure}[t]{0.49\linewidth}
    \centering
    \includegraphics[width=\linewidth]{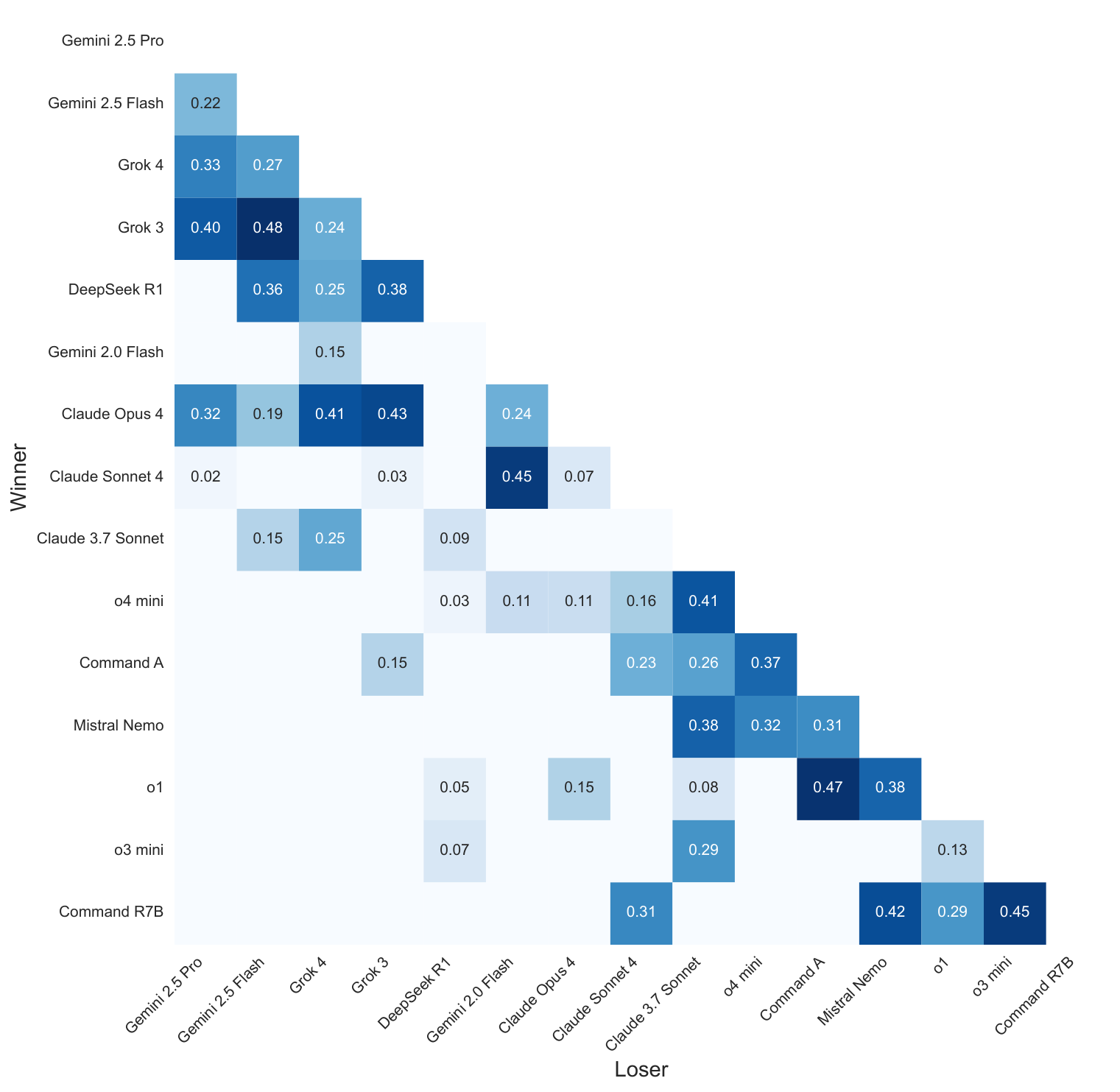}
    \label{fig:inconsistency-ethnic}
  \end{subfigure}\hfill
  \begin{subfigure}[t]{0.49\linewidth}
    \centering
    \includegraphics[width=\linewidth]{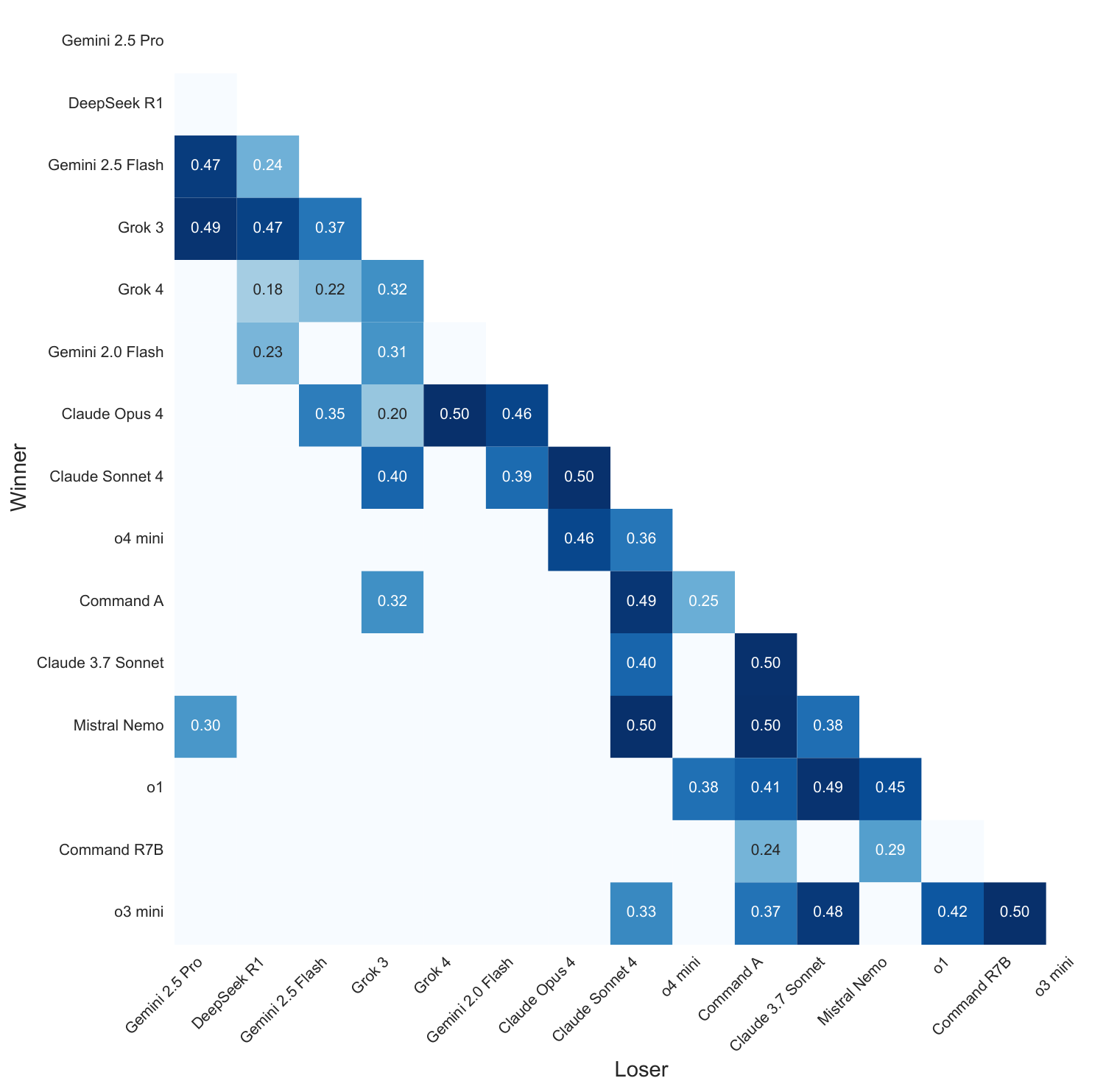}
    \label{fig:inconsistency-political}
  \end{subfigure}
  \caption{\textbf{Pairwise preferences exhibit substantial heterogeneity across groups in the HUMAINE dataset.} We compute the probability that two distinct groups (sampled proportional to their comparison counts) yield opposite preferences for $(i, j) $, i.e., their margins $M_{ij}$ have opposite signs. \textbf{Left.} Groups defined by annotator's ethnic group. \textbf{Right.} Groups defined by annotator's political affiliation.}
  \label{fig:inconsistency}
\end{figure}

\begin{figure}[t]
  \centering
  \begin{subfigure}[t]{0.49\linewidth}
    \centering
    \includegraphics[width=\linewidth]{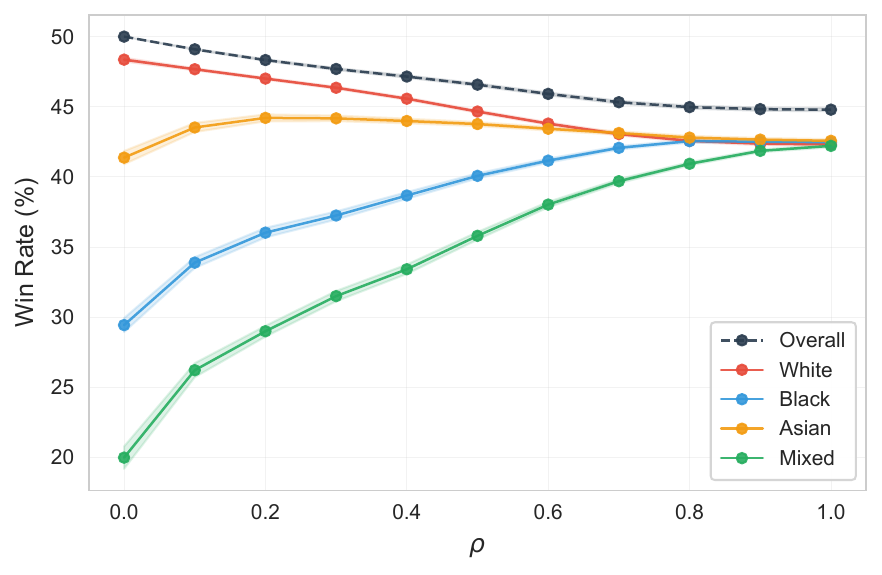}
    \label{fig:winrate-vs-rho-ethnic}
  \end{subfigure}\hfill
  \begin{subfigure}[t]{0.49\linewidth}
    \centering
    \includegraphics[width=\linewidth]{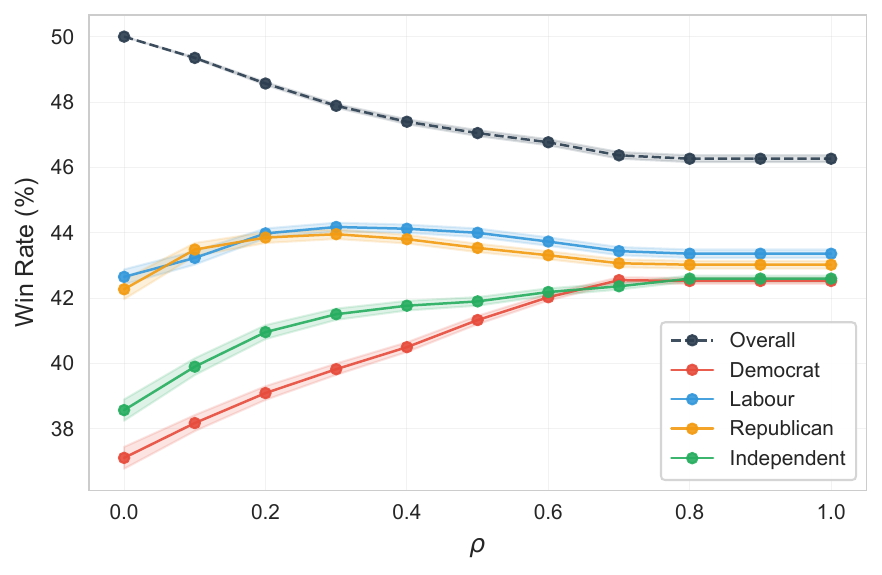}
    \label{fig:winrate-vs-rho-political}
  \end{subfigure}
  \caption{\textbf{Win rate achieved by robust lottery on training margin matrices in the HUMAINE dataset.} We compute robust lotteries for varying radius values $\rho$ and evaluate each lottery on the training votes (80\% split). We show bootstrap means of win rates achieved on the overall population and each subgroup with standard errors (200 samples).  \textbf{Left.} Groups defined by annotator's ethnic group. \textbf{Right.} Groups defined by annotator's political affiliation.}
  \label{fig:winrate-vs-rho}
\end{figure}

\begin{figure}[t]
  \centering
  \begin{subfigure}[t]{0.49\linewidth}
    \centering
    \includegraphics[width=\linewidth]{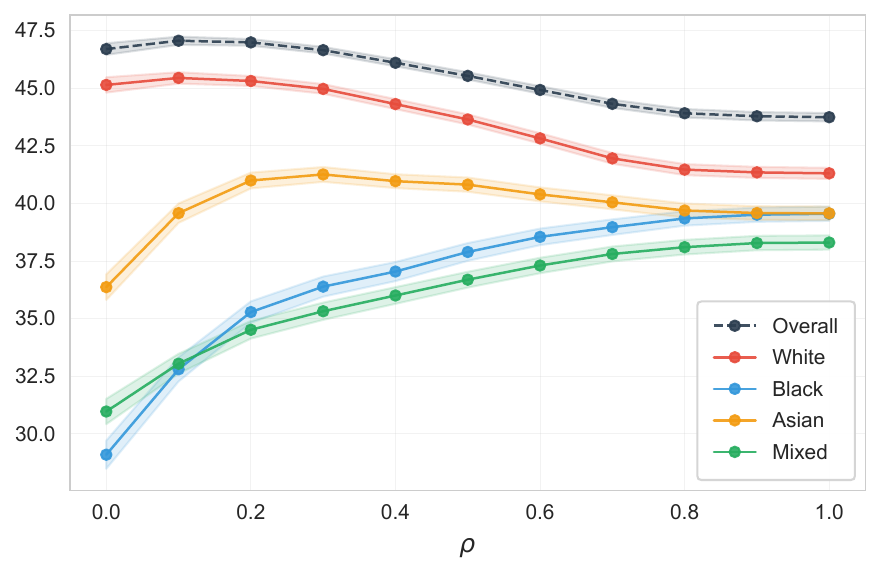}
    \label{fig:heldout-ethnic}
  \end{subfigure}\hfill
  \begin{subfigure}[t]{0.49\linewidth}
    \centering
    \includegraphics[width=\linewidth]{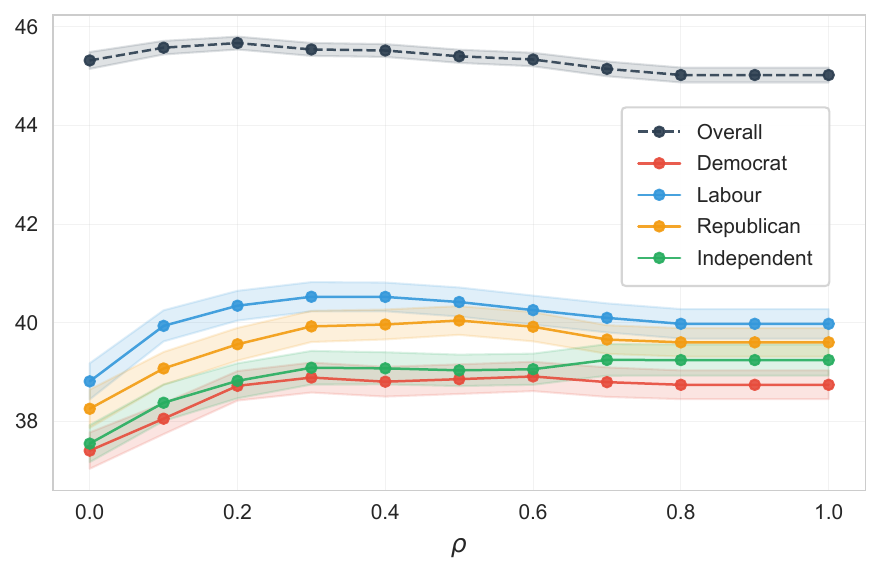}
    \label{fig:heldout-political}
  \end{subfigure}
  \caption{\textbf{Win rate achieved by robust lottery on held-out votes in the HUMAINE dataset.} We compute robust lotteries for varying radius values $\rho$ and evaluate each lottery on held-out votes (20\% split). We show bootstrap means of win rates achieved on the overall population and each subgroup with standard errors (200 samples). \textbf{Left.} Groups defined by annotator's ethnic group. \textbf{Right.} Groups defined by annotator's political affiliation.}
  \label{fig:heldout_humaine}
\end{figure}

\begin{figure}[t]
  \centering
  \begin{subfigure}[t]{0.49\linewidth}
    \centering
    \includegraphics[width=\linewidth]{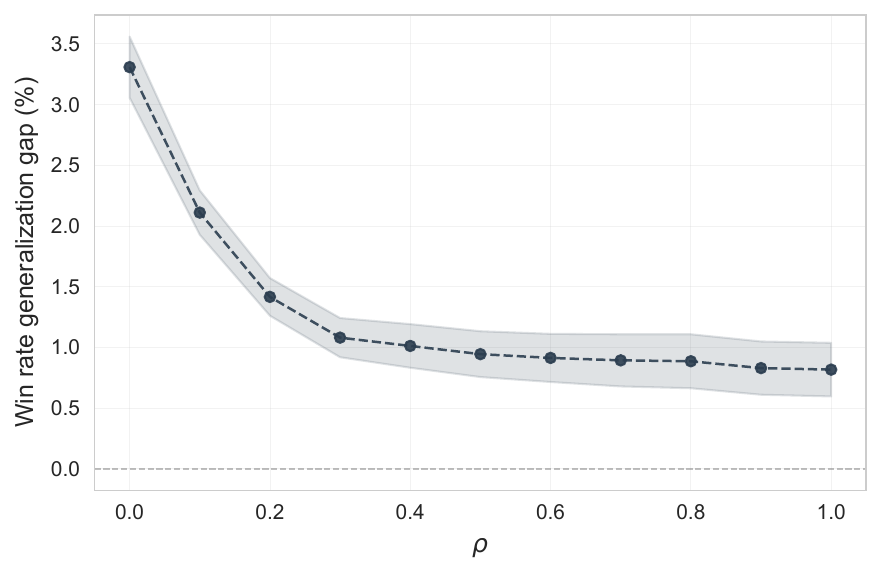}
    \label{fig:generalization-ethnic}
  \end{subfigure}\hfill
  \begin{subfigure}[t]{0.49\linewidth}
    \centering
    \includegraphics[width=\linewidth]{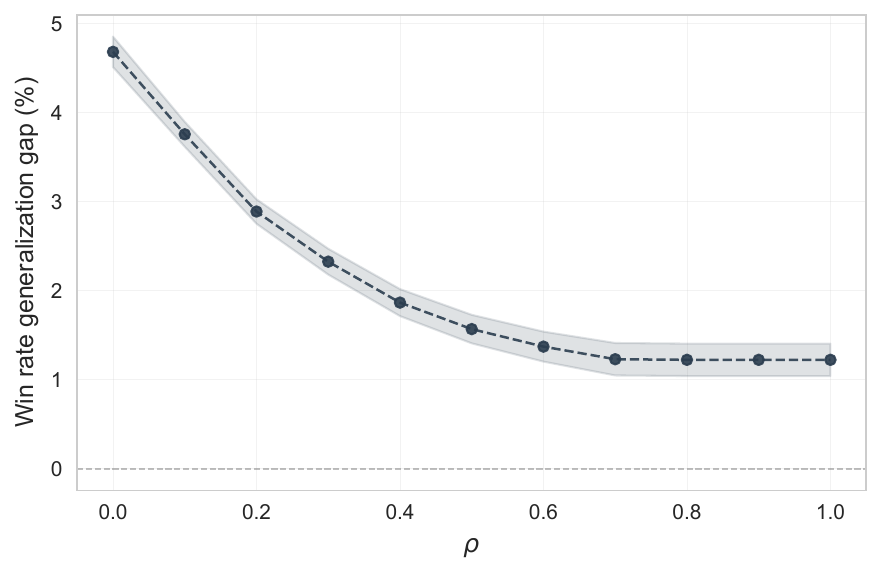}
    \label{fig:generalization-political}
  \end{subfigure}
  \caption{\textbf{Robust lotteries reduce overfitting in the HUMAINE dataset.}  We compute robust lotteries for varying radius values $\rho$ and evaluate each lottery on the overall population's training votes and held-out votes from HUMAINE (80\%-20\% split). We report the generalization gap (train win rate minus test win rate) as a function of $\rho$ (200 samples).}
  \label{fig:generalization}
\end{figure}

\begin{figure}[t]
  \centering
    \includegraphics[width=\linewidth]{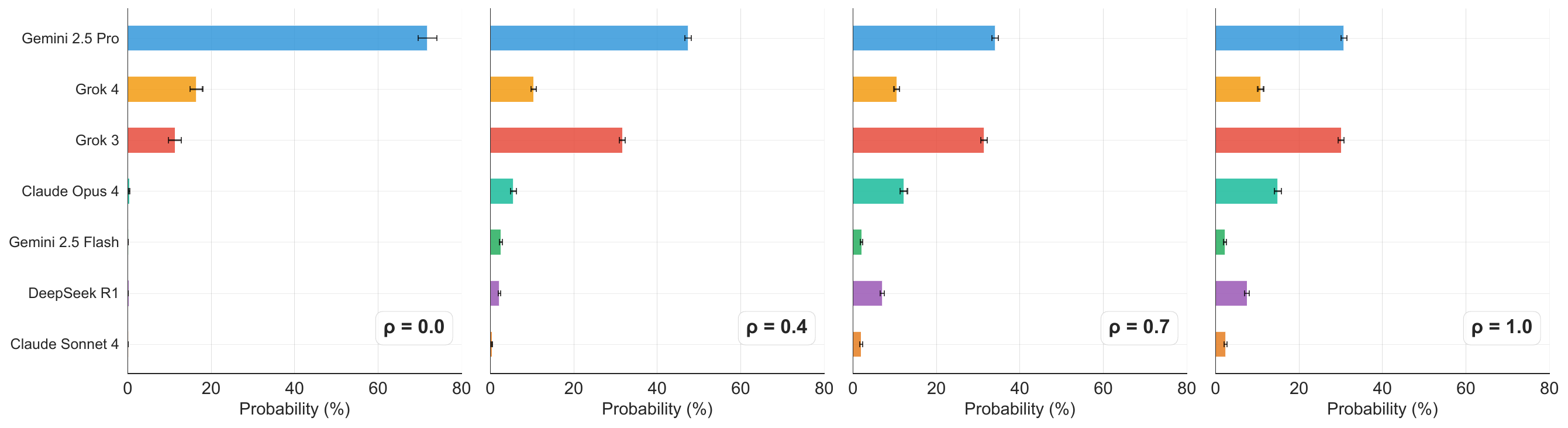}
    \caption{\textbf{Robust lotteries diversify the lottery to handle preference tradeoffs among ethnic groups in the HUMAINE dataset.} We present the estimated probability assigned to each model with its standard error from HUMAINE when annotators are grouped based on \textbf{ethnic group} (200 samples).}
  \label{fig:perf-grid_ethnic}
\end{figure}

\begin{figure}[t]
  \centering

\includegraphics[width=\linewidth]{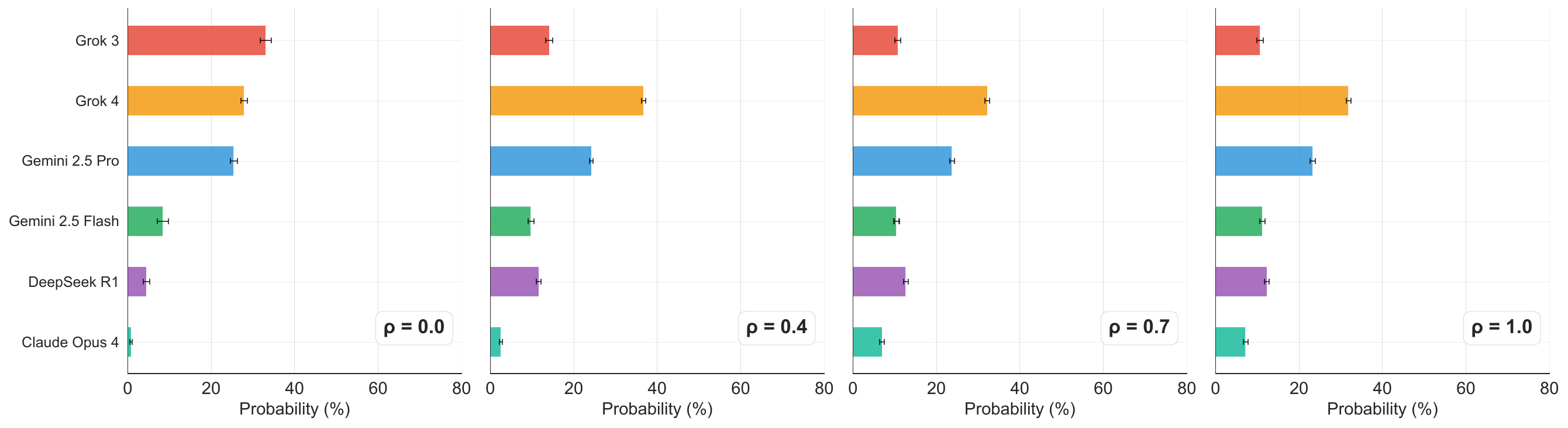}
\caption{\textbf{Robust lotteries diversify the lottery to handle preference tradeoffs among political groups in the HUMAINE dataset.} We present the estimated probability assigned to each model with its standard error from HUMAINE when annotators are grouped based on \textbf{political affiliation} (200 samples).}
\label{fig:perf-grid-political}
\end{figure}

\subsection{Search Arena}\label{sec:search}
Search Arena \cite{miroyan2025search} is a large-scale, crowdsourced benchmark for evaluating language models via pairwise comparisons on the task of web search. We use the most recent public release from Search Arena\footnote{\href{https://huggingface.co/datasets/lmarena-ai/search-arena-24k}{https://huggingface.co/datasets/lmarena-ai/search-arena-24k}}. The dataset contains approximately 24K pairwise votes collected on the LMArena platform between March~18,~2025 and May~8,~2025, spanning 13 frontier models. We consider the full set of models (\cref{tab:models_searcharena}) and add a Laplace smoothing constant of $\eta = 1$ to regularize against small counts. We present results based on grouping the votes by prompt language or by the annotator's primary intent for the given task in the prompt. We filter out rows with non-unique or null group values.

\begin{table}[!htbp]
\centering
\caption{Models evaluated in Search Arena dataset.}
\label{tab:models_searcharena}
\begin{tabular}{lllll}
\toprule
Gemini 2.0 Flash & Gemini 2.5 Flash & Gemini 2.5 Pro & Gemini 2.5 Pro (no search) & GPT-4o Mini \\
GPT-4o & GPT-4o High & GPT-4o High Loc & Sonar & Sonar Pro \\
Sonar Pro High & Sonar Reasoning & Sonar Reasoning Pro High & -- & -- \\
\bottomrule
\end{tabular}
\end{table}

\begin{table}[!htbp]
\centering
\caption{Vote counts in Search Arena by group type. \textbf{Left.} Group votes by language. \textbf{Right.} Group votes by user intent.}
\label{tab:search_votes_side_by_side}

\begin{subtable}[t]{0.48\linewidth}
  \centering
  \label{tab:language_search}
  \begin{tabular}{lr}
    \toprule
    Group & Votes \\
    \midrule
    English & 12{,}849 \\
    Russian & 2{,}604 \\
    Chinese & 1{,}150 \\
    German & 637 \\
    \bottomrule
  \end{tabular}
\end{subtable}\hfill
\begin{subtable}[t]{0.48\linewidth}
  \centering
  \label{tab:intent_search}
  \begin{tabular}{lr}
    \toprule
    Group & Votes \\
    \midrule
    Factual Lookup & 4{,}269 \\
    Information Synthesis & 3{,}931 \\
    Recommendation & 2{,}441 \\
    Analysis & 2{,}348 \\
    \bottomrule
  \end{tabular}
\end{subtable}
\end{table}

\begin{figure}[t]
  \centering
  \begin{subfigure}[t]{0.49\linewidth}
    \centering
    \includegraphics[width=\linewidth]{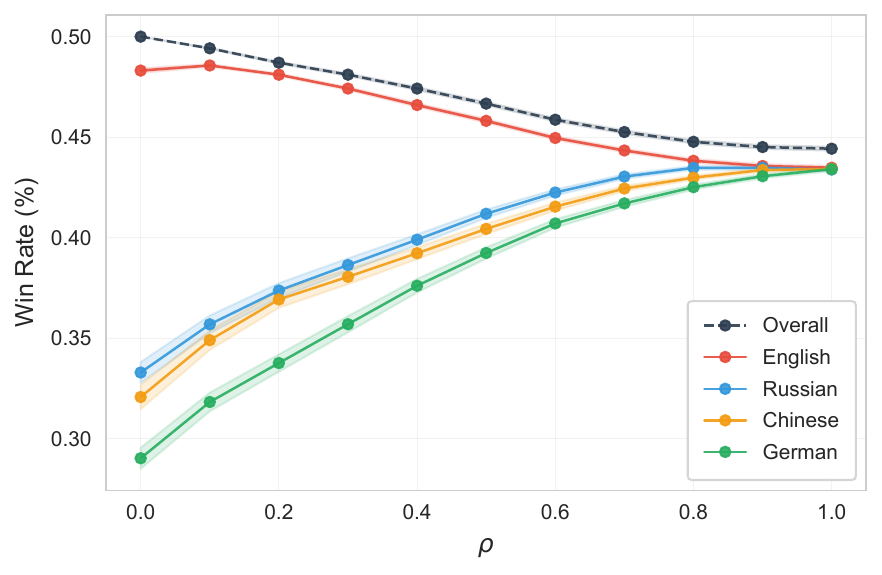}
    \label{fig:winrate-vs-rho-language_search}
  \end{subfigure}\hfill
  \begin{subfigure}[t]{0.49\linewidth}
    \centering
    \includegraphics[width=\linewidth]{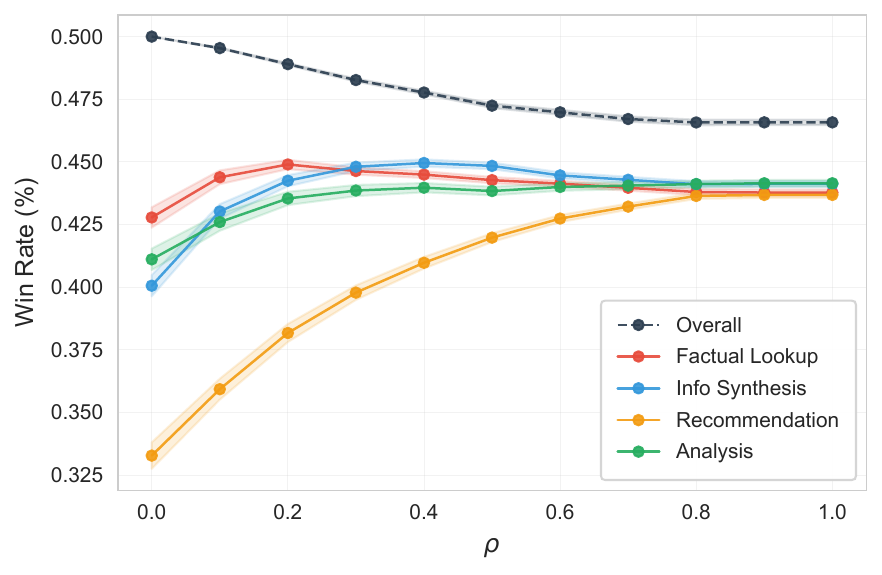}
    \label{fig:winrate-vs-rho-intent_search}
  \end{subfigure}
  \caption{\textbf{Win rate achieved by robust lottery on training margin matrices in the Search Arena dataset.} We compute robust lotteries for varying radius values $\rho$ and evaluate each lottery on the training votes (80\% split). We show bootstrap means of win rates achieved on the overall population and each subgroup with standard errors (200 samples).  \textbf{Left.} Groups defined by prompt language. \textbf{Right.} Groups defined by annotator's primary intent.}
  \label{fig:winrate-vs-rho_search}
\end{figure}

\begin{figure}[t]
  \centering
  \begin{subfigure}[t]{0.49\linewidth}
    \centering
    \includegraphics[width=\linewidth]{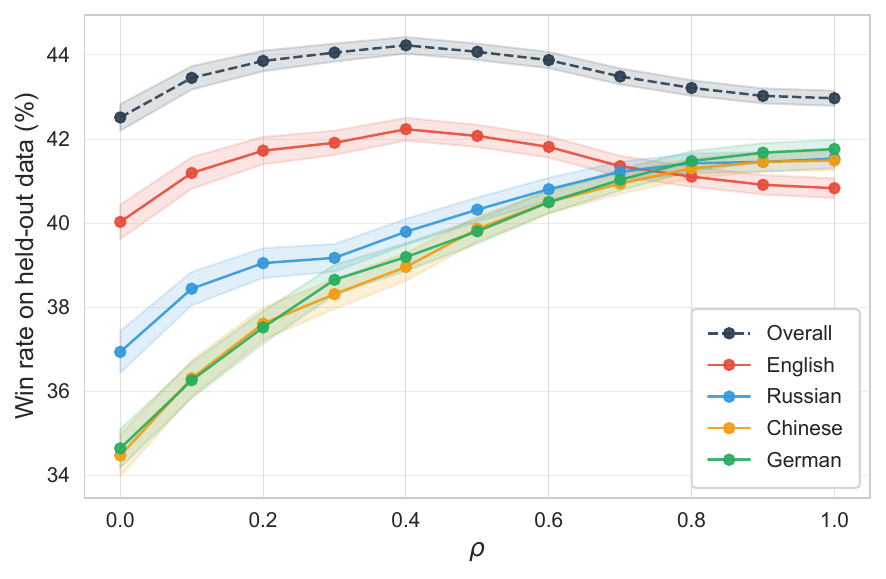}
    \label{fig:heldout_language_search}
  \end{subfigure}\hfill
  \begin{subfigure}[t]{0.49\linewidth}
    \centering
    \includegraphics[width=\linewidth]{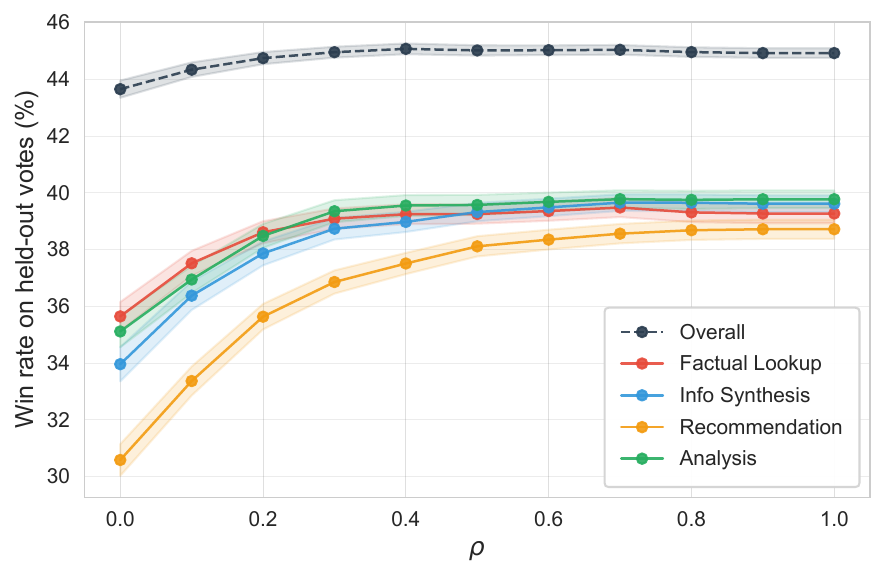}
    \label{fig:heldout-intent_search}
  \end{subfigure}
  \caption{\textbf{Win rate achieved by robust lottery on held-out votes in the Search Arena dataset.} We compute robust lotteries for varying radius values $\rho$ and evaluate each lottery on held-out votes (20\% split). We show bootstrap means of win rates achieved on the overall population and each subgroup with standard errors (200 samples). \textbf{Left.} Groups defined by prompt language. \textbf{Right.} Groups defined by annotator's primary intent.}
  \label{fig:heldout_search}
\end{figure}

\begin{figure}[t]
  \centering
  \begin{subfigure}[t]{0.49\linewidth}
    \centering
    \includegraphics[width=\linewidth]{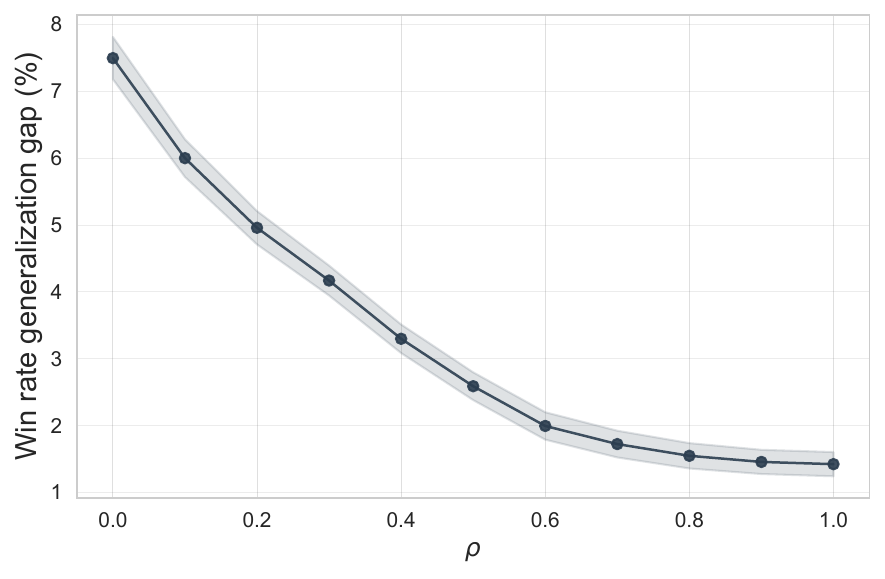}
    \label{fig:generalization-language_search}
  \end{subfigure}\hfill
  \begin{subfigure}[t]{0.49\linewidth}
    \centering
    \includegraphics[width=\linewidth]{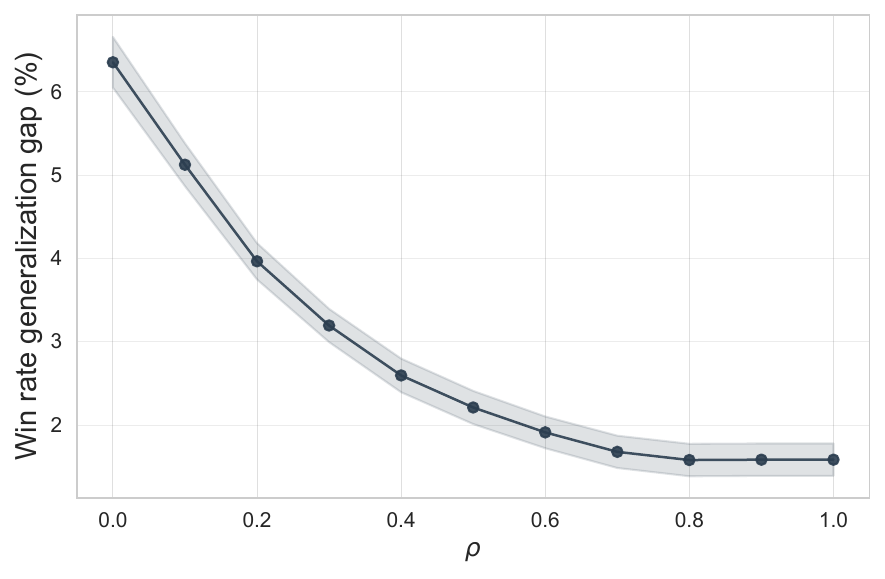}
    \label{fig:generalization_intent_search}
  \end{subfigure}
  \caption{\textbf{Robust lotteries reduce overfitting in the Search Arena dataset.}  We compute robust lotteries for varying radius values $\rho$ and evaluate each lottery on the overall population's training votes and held-out votes from Search Arena (80\%-20\% split). We report the generalization gap (train win rate minus test win rate) as a function of $\rho$ (200 samples). \textbf{Left.} Groups defined by prompt language. \textbf{Right.} Groups defined by annotator's primary intent.}
  \label{fig:generalization_search}
\end{figure}

\begin{figure}[t]
  \centering
    \includegraphics[width=\linewidth]{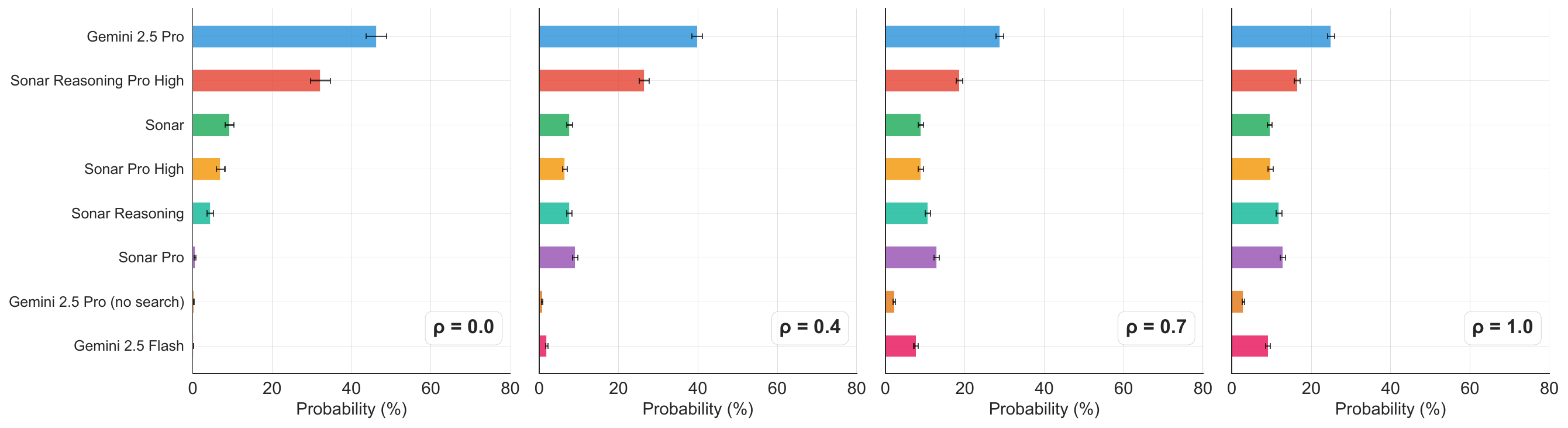}
    \caption{\textbf{Robust lotteries diversify the lottery to handle preference tradeoffs among languages in the Search Arena dataset.} We present the estimated probability assigned to each model with its standard error from Search Arena when annotators are grouped based on \textbf{language} (200 samples).}
  \label{fig:perf-grid_language_search}
\end{figure}

\begin{figure}[t]
  \centering
\includegraphics[width=\linewidth]{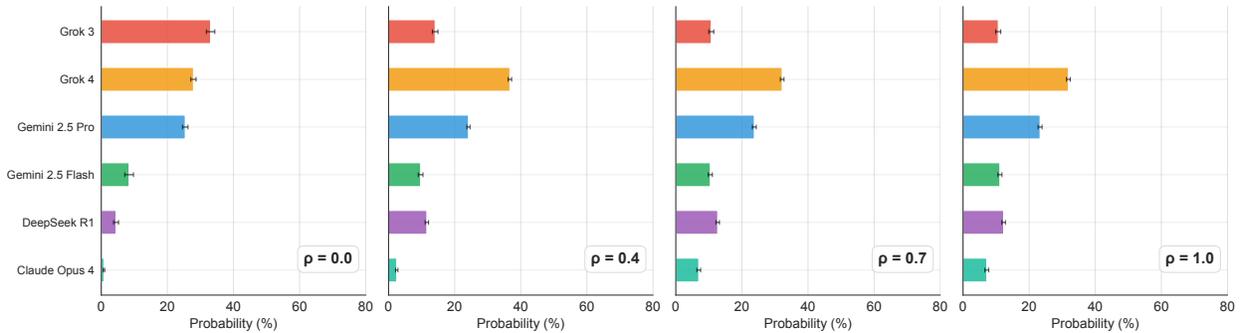}
\caption{\textbf{Robust lotteries diversify the lottery to handle preference tradeoffs among tasks in the Search Arena dataset.} We present the estimated probability assigned to each model with its standard error from Search Arena when annotators are grouped based on \textbf{primary intent} in the task (200 samples).}
\label{fig:perf-grid-intent_search}
\end{figure}

\subsection{Open LLM}\label{sec:openllm}
The Open LLM Leaderboard \cite{myrzakhan2024openllm} includes responses of diverese models on a set of standard LLM benchmarks, such as MMLU \cite{hendrycks2021measuringmassivemultitasklanguage} and ARC \cite{chollet2026arcprize2025technical}. We compute preference matrices per benchmark as follows: for a given prompt, model $i$ wins over model $j$ if model $i$'s response is correct while model $j$'s response is incorrect. 

\begin{table}[!htbp]
\centering
\caption{Models evaluated in Open LLM dataset.}
\label{tab:models_openllm}
\begin{tabular}{lllll}
\toprule
Claude & Gemini & Mistral & Cerebras & Gemma \\
GPT Neo & GPT 3.5 & Llama 3 & Olmo & OPT \\
Phi 3 & Pythia & Qwen & -- & -- \\
\bottomrule
\end{tabular}
\end{table}

\begin{table}[!htbp]
\centering
\caption{Vote counts in Open LLM by group type}
\label{tab:openllm_votes_side_by_side}
\begin{tabular}{lr}
    \toprule
    Group & Votes \\
    \midrule
    MMLU & 226{,}540 \\
    ARC & 100{,}061 \\
    MedMCQA & 47{,}767 \\
    WinoGrande & 41{,}227 \\
    CommonsenseQA &  19{,}303 \\
    \bottomrule
  \end{tabular}
\end{table}

\begin{figure}[t]
  \centering
  \begin{subfigure}{0.48\linewidth}
    \centering
    \includegraphics[width=\linewidth]{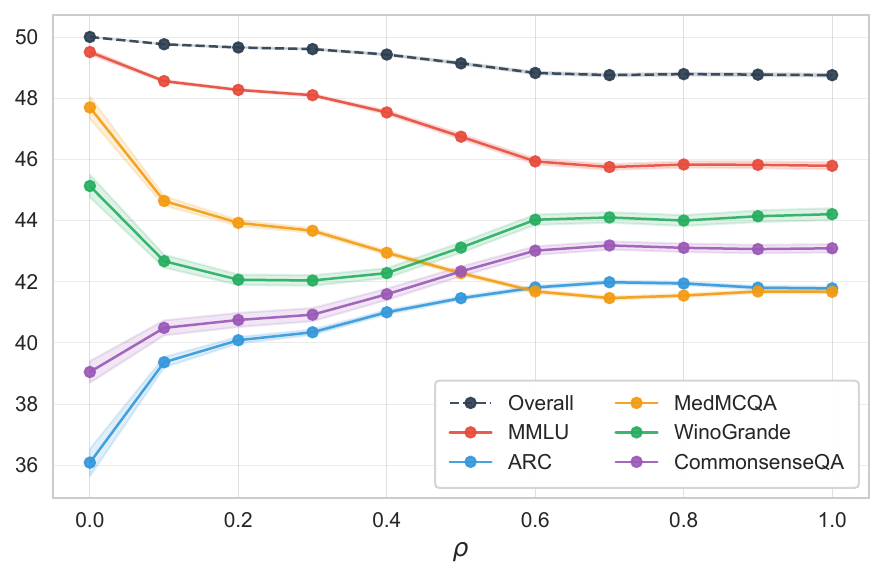}
    \label{fig:winrate-vs-openlllm}
  \end{subfigure}
  \hfill
  \begin{subfigure}{0.48\linewidth}
    \centering
    \includegraphics[width=\linewidth]{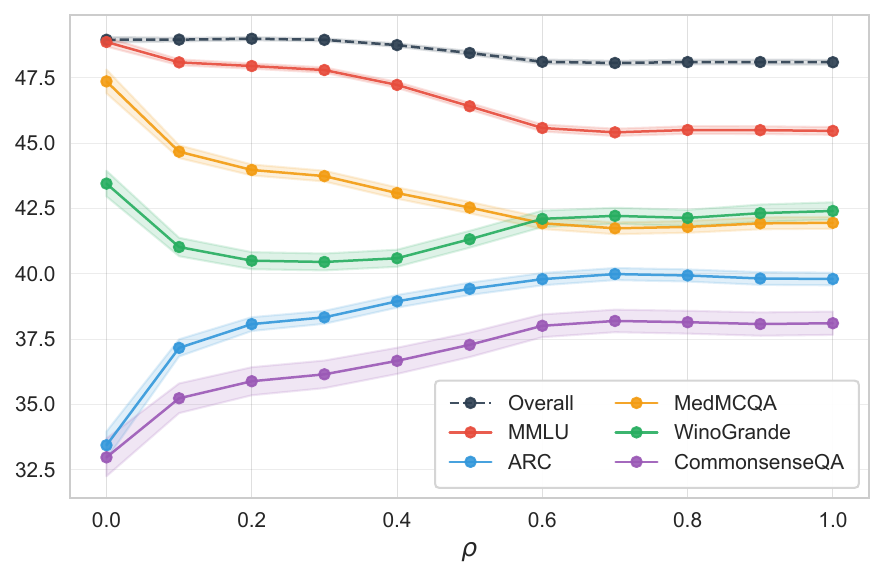}
    \label{fig:generalization_opemllm}
  \end{subfigure}
  \caption{We compute robust lotteries for varying radius values $\rho$ and evaluate each lottery on the overall population's training votes and held-out votes from Open LLM (80\%-20\% split). \textbf{Left:} Win rate achieved on training margin matrices. \textbf{Right:} Win rate achieved on held-out margin matrices.}
  \label{fig:openllm-combined}
\end{figure}

\begin{figure}[t]
  \centering
\includegraphics[width=0.5\linewidth]{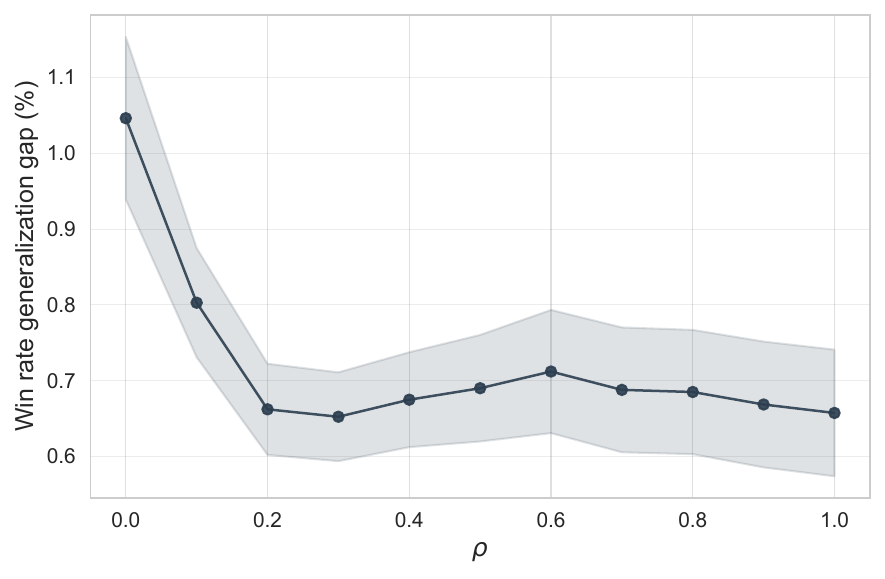}
  \caption{\textbf{Robust lotteries reduce overfitting in the Open LLM dataset.}  We compute robust lotteries for varying radius values $\rho$ and evaluate each lottery on the overall population's training votes and held-out votes from Open LLM when annotators are grouped based on \textbf{benchmark} (80\%-20\% split). We report the generalization gap (train win rate minus test win rate) as a function of $\rho$ (200 samples).}
  \label{fig:generalization_openllm}
\end{figure}

\begin{figure}[t]
  \centering
    \includegraphics[width=\linewidth]{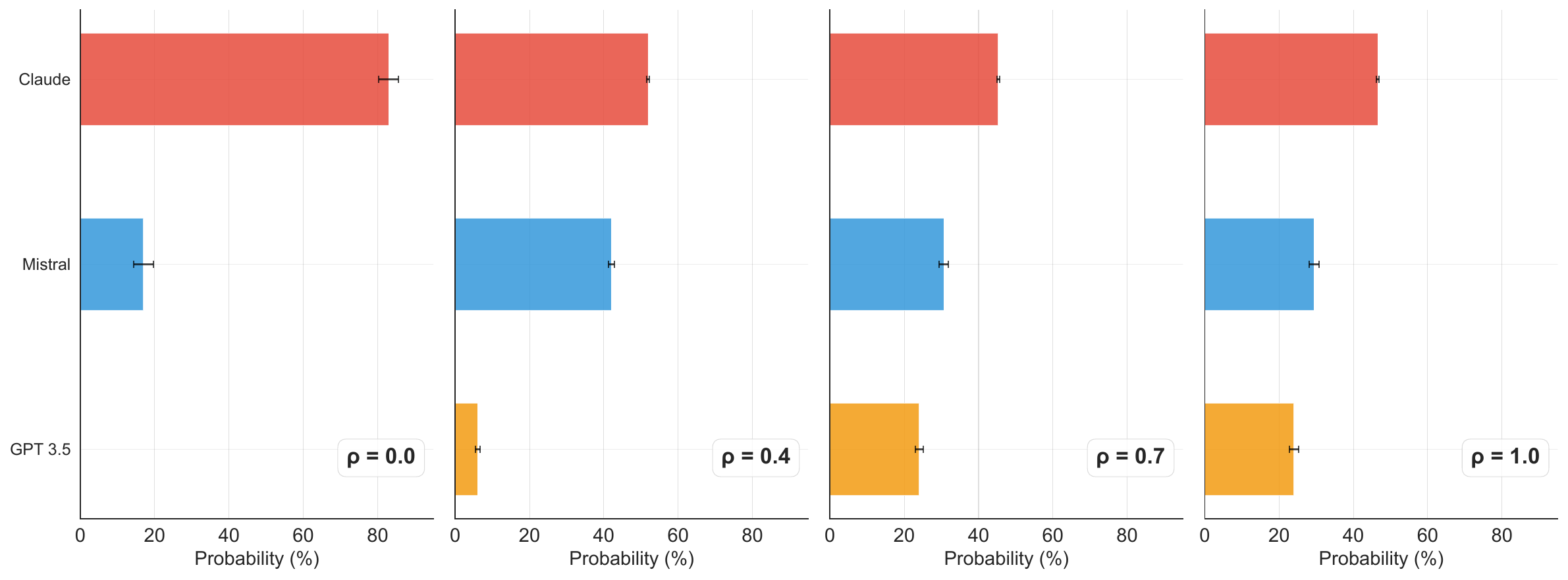}
    \caption{\textbf{Robust lotteries diversify the lottery to handle preference tradeoffs among languages in the Open LLM dataset.} We present the estimated probability assigned to each model with its standard error from Open LLM when annotators are grouped based on \textbf{benchmark} (200 samples).}
  \label{fig:perf-grid_language_openllm}
\end{figure}

\end{document}